\documentclass[nohyperref]{article}

\usepackage{microtype}
\usepackage{graphicx}
\usepackage{subfigure}
\usepackage{booktabs} %

\usepackage[colorlinks=true,linkcolor=blue,citecolor=blue]{hyperref}

\usepackage[accepted]{icml2022}

\usepackage{amsmath}
\usepackage{amssymb}
\usepackage{mathtools}
\usepackage{amsthm}
\usepackage{enumitem}

\usepackage[capitalize,noabbrev]{cleveref}

\theoremstyle{plain}
\newtheorem{theorem}{Theorem}[section]

\newtheorem{lemma}[theorem]{Lemma}

\theoremstyle{definition}

\newtheorem{assumption}[theorem]{Assumption}
\theoremstyle{remark}

\usepackage[textsize=tiny]{todonotes}

\usepackage{amsthm}
\usepackage[export]{adjustbox}
\usepackage{amsmath}
\usepackage{amssymb}
\usepackage{colortbl}
\usepackage{wrapfig}
\usepackage{algorithm}
\usepackage{xcolor}
\usepackage{dsfont}
\usepackage{cancel}

\newcommand{\ak}[1]{{\textcolor{black}{#1}}}

\newcommand{\policy}{\pi}
\newcommand{\mdp}{\mathcal{M}}
\newcommand{\states}{\mathcal{S}}
\newcommand{\actions}{\mathcal{A}}

\newcommand{\behavior}{{\pi_\beta}}

\newcommand{\hatbehavior}{\hat{\pi}_\beta}

\newcommand{\bx}{\mathbf{x}}

\newcommand{\bs}{\mathbf{s}}
\newcommand{\ba}{\mathbf{a}}

\newcommand{\indicator}{\mathds{1}}

\newcommand{\methodname}{CDS}

\newcommand{\uds}{UDS}

\usepackage{amsmath,amsfonts,bm}

\def\eqref#1{equation~\ref{#1}}

\def\1{\bm{1}}

\DeclareMathAlphabet{\mathsfit}{\encodingdefault}{\sfdefault}{m}{sl}
\SetMathAlphabet{\mathsfit}{bold}{\encodingdefault}{\sfdefault}{bx}{n}

\definecolor{Gray}{gray}{0.9}
\newcommand{\rebuttal}[1] {{\color{black} #1}}
\newcommand{\final}[1] {{\color{black} #1}}

\usepackage{titlesec}
\titlespacing\section{0pt}{0pt plus 2pt minus 2pt}{0pt plus 2pt minus 2pt}
\titlespacing\subsection{0pt}{3pt plus 4pt minus 2pt}{0pt plus 2pt minus 2pt}
\titlespacing\subsubsection{0pt}{3pt plus 4pt minus 2pt}{0pt plus 2pt minus 2pt}

\makeatletter

\makeatother
\makeatletter
\newcommand*{\rom}[1]{\expandafter\@slowromancap\romannumeral #1@}
\makeatother
\newcommand{\CC}{\cellcolor{Gray}}

\icmltitlerunning{How to Leverage Unlabeled Data in Offline Reinforcement Learning}

\begin{document}

\twocolumn[
\icmltitle{How to Leverage Unlabeled Data in Offline Reinforcement Learning}

\icmlsetsymbol{equal}{*}

\begin{icmlauthorlist}
\icmlauthor{Tianhe Yu}{equal,yyy,comp}
\icmlauthor{Aviral Kumar}{equal,sch,comp}
\icmlauthor{Yevgen Chebotar}{comp}
\icmlauthor{Karol Hausman}{comp}
\icmlauthor{Chelsea Finn}{yyy,comp}
\icmlauthor{Sergey Levine}{sch,comp}
\end{icmlauthorlist}

\icmlaffiliation{yyy}{Stanford University}
\icmlaffiliation{sch}{UC Berkeley}
\icmlaffiliation{comp}{Google Research}

\icmlcorrespondingauthor{Tianhe Yu}{tianheyu@cs.stanford.edu}
\icmlcorrespondingauthor{Aviral Kumar}{aviralk@berkeley.edu}

\icmlkeywords{Machine Learning, ICML}

\vskip 0.3in
]

\printAffiliationsAndNotice{\icmlEqualContribution} %

\begin{abstract}
Offline reinforcement learning (RL) can learn control policies from static datasets but, like standard RL methods, it requires reward annotations for every transition. In many cases, labeling large datasets with rewards may be costly, especially if those rewards must be provided by human labelers, while collecting diverse unlabeled data might be comparatively inexpensive. How can we best leverage such unlabeled data in offline RL? One natural solution is to learn a reward function from the labeled data and use it to label the unlabeled data. In this paper, we find that, perhaps surprisingly, a much simpler method that simply applies zero rewards to unlabeled data leads to effective data sharing both in theory and in practice, without learning any reward model at all. While this approach might seem strange (and incorrect) at first, we provide extensive theoretical and empirical analysis that illustrates how it trades off reward bias, sample complexity and distributional shift, often leading to good results. We characterize conditions under which this simple strategy is effective, and further show that extending it with a simple reweighting approach can further alleviate the bias introduced by using incorrect reward labels. Our empirical evaluation confirms these findings in simulated robotic locomotion, navigation, and manipulation settings.

\end{abstract}

\vspace{-0.15cm}
\section{Introduction}
\vspace{-0.1cm}
Offline reinforcement learning (RL) provides the promise of a fully data-driven framework for learning performant policies. To avoid costly active data collection and exploration, offline RL methods utilize a previously collected dataset to extract the best possible behavior, making it feasible to use RL to solve real-world problems where active exploration is expensive, dangerous, or otherwise infeasible~\citep{zhan2021deepthermal, de2021discovering, Wang2018SupervisedRL, kalashnikov2018scalable}. 
However, this concept is only viable when a significant amount of data for the target task is available in advance. A more realistic scenario might allow for a much smaller amount of task-specific data, combined with a large amount of task-agnostic data, that is not labeled with task rewards and some of which may not be relevant. For example, if our goal is to train a robot to perform a new manipulation task (e.g., cutting an onion), we might have some data of the robot (suboptimally) attempting that task, perhaps collected under human teleoperation and manually labeled with rewards, combined with background data, some of which might be structurally related (e.g., picking up an onion, or cutting a carrot). {This scenario presents several questions: How do we decide which prior data should be included when learning the new task? And how do we determine reward labels to use for this prior data?}

Prior methods have attempted to answer these two questions, typically in isolation. For the first question, it has been noted that a na\"{i}ve strategy of sharing all of the prior data can be highly suboptimal~\citep{kalashnikov2021mt}, even when it is annotated with reward labels, and some works have proposed both manual~\citep{kalashnikov2021mt} and automated~\citep{yu2021conservative,eysenbach2020rewriting} data sharing strategies that prioritize the most structurally similar prior data. 
However, these methods assume that shared data can automatically be relabeled with the reward function for the task,
but the assumption that we have access to the functional form of this reward is a strong one: for example, in many real-world settings, computing the reward might require human labeling~\citep{cabi2019scaling,finn2016deep}. 
Prior works use learned classifiers for reward labeling~\citep{VICEFu2018,xie2018few,singh2019end}, or other automated mechanisms~\citep{konyushkova2020semi}. But these mechanisms themselves add complexity and potential brittleness to the pipeline. Thus, we aim to study the possibility of tackling this problem with a simple method that determines which rewards to use for shared data, with minimal supervision and no additional modeling and learning.

\begin{figure}[ht]
    \centering
    \includegraphics[width=0.48\textwidth]{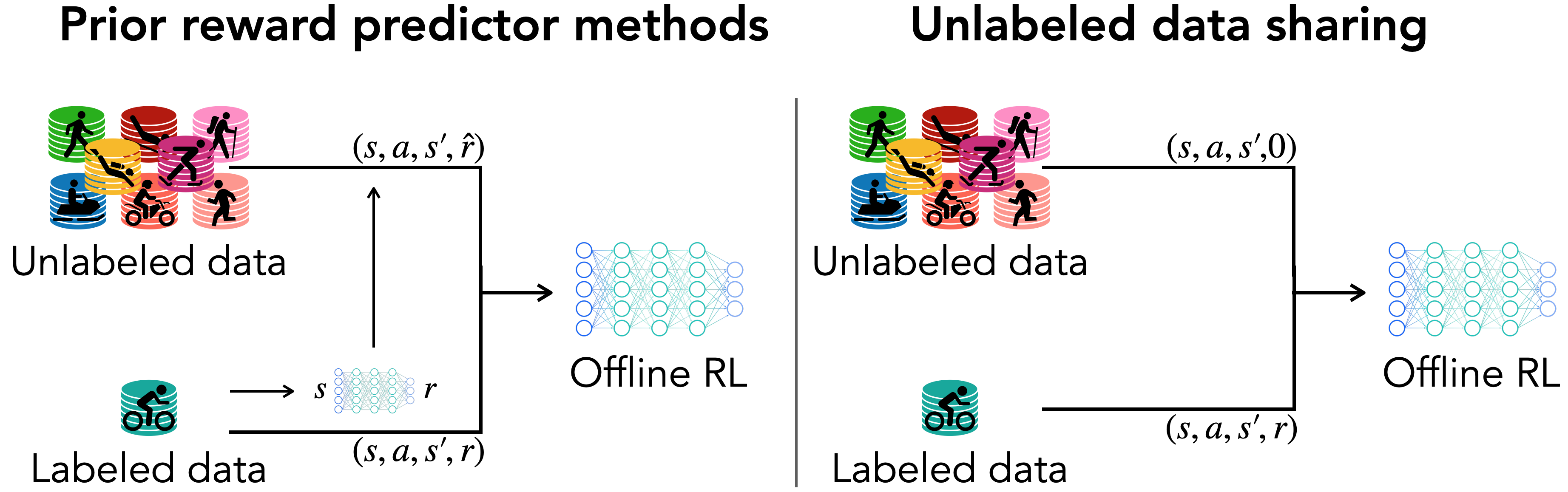}
    \vspace{-0.6cm}
    \caption{\footnotesize  Overview of various methods dealing with unlabeled offline data. UDS is a simple method compared to the complex reward learning approaches yet achieves effective results.}
    \vspace{-0.55cm}
    \label{fig:teaser}
\end{figure}

In this paper, we make the potentially surprising observation that prior data such as data from other tasks can be utilized with na\"{i}ve constant reward labels in offline RL, which corresponds to zero reward or whatever is the minimum reward for the task (which we can set to zero via rescaling, without loss of generality). It may at first seem that such an approach would lead to incorrect solutions due to using the wrong reward values. However, we show that this simple reward relabeling strategy, which we refer to as unlabeled data sharing (UDS), can outperform more sophisticated methods that separately learn a reward model and lead to good results in a range of settings. We visualize UDS along with more complex methods in Figure~\ref{fig:teaser}. Na\"{i}ve UDS does not work in all cases, but we further show that carefully reweighting the unlabeled transitions (to modify their distribution) can significantly increase the applicability of this approach and reduce the effects of reward bias, while still preserving many of the benefits. {We analyze this theoretically and show that, in practice, a method based on conservative data sharing~\citep{yu2021conservative}, which reweights unlabeled data to minimize distributional shift (originally designed for use with ground truth reward labels) can be particularly effective in this role.}

Our main contribution is the finding that simply labeling unlabeled data with a reward of zero for use in offline RL is surprisingly effective in many cases. We perform extensive theoretical and empirical analysis to study conditions under which this simple approach, UDS, would either excel or fail, and we analyze how reweighting the relabeled data (while still using zero reward as the label) can increase the range of settings when UDS is successful. Our empirical evaluation, conducted over various single-task and multi-task offline RL scenarios such as locomotion, robotic manipulation from visual inputs, and ant-maze navigation shows that this simple zero reward strategy leads to improved performance, even compared to methods that use more sophisticated learning and label propagation strategies to infer rewards. This further supports our hypothesis that relabeling unlabeled data with zero reward is an effective approach for utilizing unlabeled prior data in offline RL.

\section{Related Work}

\textbf{Offline RL.} Offline RL~\citep{ernst2005tree,riedmiller2005neural,LangeGR12,levine2020offline} considers the problem of learning a policy from a static dataset without interacting with the environment, which has shown promises in many practical applications such as robotic control~\citep{kalashnikov2018scalable,Rafailov2020LOMPO}, NLP~\citep{jaques2019way}, 
and healthcare~\citep{shortreed2011informing, Wang2018SupervisedRL}. 
The main challenge of offline RL is the distributional shift between the learned policy and the behavior policy~\citep{fujimoto2018off}, which can cause erroneous value backups. To address this issue, prior methods have constrained the learned policy to be close to the behavior policy via policy regularization~\citep{liu2020provably,wu2019behavior,kumar2019stabilizing, zhou2020plas,ghasemipour2021emaq,fujimoto2021minimalist}, conservative value functions~\citep{kumar2020conservative}, 
and model-based training with conservative penalties~\citep{yu2020mopo,kidambi2020morel,swazinna2020overcoming,lee2021representation,yu2021combo}. Unlike these prior works, we study how unlabeled data can be incorporated into the offline RL framework.

\textbf{RL with unlabeled data.} Prior works tackle the problem of learning from data without reward labels via either directly imitating expert trajectories~\citep{pomerleau1988alvinn,RossB12,GAIL2016Ho}, learning reward functions from expert data using inverse RL~\citep{abbeel2004apprenticeship,ng2000irl,ziebart2008maximum,finn2016guided,AIRLFu2018,konyushkova2020semi}, or learning a reward / value classifier that discriminates successes and failures~\citep{xie2018few,VICEFu2018,singh2019end,eysenbach2021replacing} in standard online RL settings. 
{Unlike these prior works, the goal of our paper is not to propose a sophisticated new algorithm for reward learning,} but rather to study when the simple solution of using zero as the reward label can work well from offline data. \final{While} \citet{singh2020cog} also considers relabeling prior data with zero reward in the standard, single-task offline RL setting, \final{the key distinction is that the unlabeled data in \citet{singh2020cog} cannot solve the target task and \emph{actually} gets zero rewards and is thus labeled with the true reward. Unlike \citet{singh2020cog}, we show that UDS can surprisingly also be effective even when the unlabeled data is incorrectly labeled with zero reward, and discuss how it can be improved (Section~\ref{sec:rebalancing}).}
Additionally, we consider both single-task and multi-task offline RL settings.

\textbf{Data sharing.} 
Sharing data across different tasks has been found to be effective in multi-task~\citep{eysenbach2020rewriting,kalashnikov2021mt,yu2021conservative} and meta-RL~\citep{dorfman2021offline,mitchell2021offline} and it improves performance significantly in multi-task offline RL. 
Prior works share data based on learned Q-values~\citep{eysenbach2020rewriting,li2020generalized,yu2021conservative}, domain knowledge~\citep{kalashnikov2021mt} and distance to goals in goal-conditioned settings~\citep{andrychowicz2017hindsight,liu2019competitive,sun2019policy,lin2019reinforcement,chebotar2021actionable}, and the learned distance with robust inference in the offline meta-RL setting~\citep{li2019multi}. However, all of these either require access to the functional form of the reward for relabling or are limited to goal-conditioned settings. In contrast, we attempt to tackle a more general problem where reward labels are not provided for a subset of the data, either in a  multi-task or a single-task setting and find that simple approaches for relabeling data from other tasks with the constant zero can work well.

\section{Preliminaries}
\vspace{-0.15cm}
\label{sec:prelim}

\textbf{Offline RL.} Standard RL considers a Markov decision process (MDP), $\mdp =(\states, \actions, P, \gamma, R)$, where $\states$ and $\actions$ denote the state and action spaces respectively, $P(\bs' | \bs, \ba)$ denotes the dynamics, $\gamma \in [0, 1)$ is the discount factor, and $R$ correspond to the reward function. Offline RL tackles the problem of learning a policy $\pi(\ba|\bs)$ from a static dataset with $\mathcal{D}$, generated by a behavior policy $\pi_\beta(\ba|\bs)$.

\textbf{Data sharing in offline RL.} Data sharing has been considered in the multi-task offline RL setting where there is a static multi-task dataset with $\mathcal{D} = \cup_{i=1}^N \mathcal{D}_i$ where $N$ is the number of tasks. Prior works~\citep{kalashnikov2021mt,eysenbach2020rewriting,yu2021conservative} show that sharing data from different tasks to task $i$ to be conducive. To do so, these prior methods assume access to the functional form of the reward $r_i$. This is a strong assumption in practice, as it necessitates access to a functional (programmatic) form for the reward function. In offline RL, it might be desirable to simply label the reward function by hand, but then the algorithm does not have access to the functional form of the reward, and all unlabeled data also needs to be labeled by hand for use with such methods. Our aim in this paper is to utilize unlabeled data without any reward labels at all.
If however functional access to the reward \emph{is} available, a simple strategy is to na\"ively share data across all tasks, which we refer to as Sharing All. Formally, Sharing All defines the dataset of transitions relabeled from task $j$ to task $i$ as $\mathcal{D}_{j \rightarrow i}$ and the method can be then defined as
    $\mathcal{D}^\mathrm{eff}_i := \mathcal{D}_i \cup ( \cup_{j \neq i} \mathcal{D}_{j \rightarrow i})$,
where $\mathcal{D}^\mathrm{eff}_i$ denotes the effective dataset for task $i$. Therefore, the policy optimization objective in Sharing All can be written as follows:
\begin{equation*}
     \forall i \in [N], ~~\pi^*(\ba|\bs, i) := \arg \max_{\pi}~~ J_{\mathcal{D}^\mathrm{eff}_i}(\pi) - \alpha D(\pi, \pi^\mathrm{eff}_\beta),
\end{equation*}
where $\pi_\beta^\mathrm{eff}(\ba|\bs, i)$ is the effective behavior policy for task $i$ denoted as $\pi_\beta^\mathrm{eff}(\ba|\bs, i) := |\mathcal{D}^\mathrm{eff}_i(\bs, \ba)| / |\mathcal{D}^\mathrm{eff}_i(\bs)|$, $J_{\mathcal{D}^\mathrm{eff}_i}(\pi)$ denotes the average return of policy $\pi$ in the empirical MDP induced by the effective dataset, and $D(\pi, \pi^\mathrm{eff}_\beta)$ denotes a divergence measure (e.g., KL-divergence~\citep{jaques2019way,wu2019behavior}, fisher divergence~\citep{kostrikov2021offline}, MMD distance~\citep{kumar2019stabilizing} or $D_{\text{CQL}}$ from conservative Q-values~\citep{kumar2020conservative}) between the learned policy $\pi$ and the effective behavior policy $\pi_\beta^\mathrm{eff}$. Note that conservative Q-values refer to the Q-value for a given policy corresponding to a modified reward function $r(\bs, \ba) - \alpha \pi(\ba|\bs) \cdot (\pi(\ba|\bs) / \pi_\beta(\ba|\bs) - 1)$, computed on the empirical MDP. We also note that Sharing All can be easily adapted to the single-task setting where there is only one target task with labeled data $\mathcal{D}_\text{L}$ and unlabeled prior data $\mathcal{D}_\text{U}$. While data sharing tends to show promising results, it requires the assumption of the access to the functional form of the reward function. We instead focus on the data sharing problem where we do not make such an assumption and instead, only have the reward labels for originally commanded task, which we will discuss in the following section.

\section{How To Use Unlabeled Data in Offline RL}
\vspace{-0.1cm}
\label{sec:method}
Arguably the easiest approach to utilize unlabeled data in offline RL is to simply assign the lowest possible reward to all the transitions in the unlabeled data, which we will assume to be $0$ without loss of generality, and use this data for training the underlying offline RL method. We will refer to this approach as UDS. We will show that, although this strategy might seem simplistic, in fact, it can work well both in theory and practice, when the dataset composition and the task satisfies certain criteria. Not all tasks and datasets satisfy these criteria, but we will argue that many realistic offline RL problems do satisfy them. For example, UDS can work well in a problem where the labeled data consists of high-quality, near-expert demonstrations, while the unlabeled data consists of mediocre or low-reward interactions in the environment, as is often the case in robotics~\citep{xie2019improvisation}. 
We will analyze the performance of this approach theoretically in Section~\ref{sec:uds_theory} and empirically in Section \ref{sec:empirical_analysis}.

Although the simple UDS approach can perform well in some scenarios, relabeling with zero reward of course also biases the learning process. We therefore further show that reward bias can be mitigated if the unlabeled transitions are additionally reweighted, so as to change the unlabeled data distribution (while still using a label of zero for all unlabeled data). While such reweighting reduces the sample size, it can provide an overall benefit by reducing the reward bias \ak{and distributional shift}. We will derive the optimal scheme for reweighting the transitions in the unlabeled dataset that minimizes the impact of reward bias in Section~\ref{sec:rebalancing}. Surprisingly, our analysis reveals that a near-optimal solution to reducing reward bias is to combine \uds\ with an already existing reweighting scheme for multi-task offline RL with full reward information. For example, one can choose to reweight unlabeled data with the  {CDS} scheme of \citet{yu2021conservative}, which preferentially upweights transitions based on their conservative Q-values.

\vspace{-0.1cm}
\subsection{Theoretical Analysis of UDS in Offline RL}
\vspace{-0.1cm}
\label{sec:uds_theory}
In this section, we will derive a performance bounds for \uds\ that allow us to understand several cases when this simple baseline approach still works well. We will show that the increase in the effective dataset size can often outweigh the bias incurred from using the wrong reward in several practical situations. We will also discuss conditions on the data composition and the relative distributions of labeled and unlabeled data that enable UDS to be successful.
Formally, UDS trains on an effective dataset given by:
\begin{equation}
    \mathcal{D}^\mathrm{eff} = \mathcal{D}_\mathrm{L} \cup \{(\bs_j, \ba_j, \bs'_j, 0) \in \mathcal{D}_{\mathrm{U}}\}. \label{eq:uds}
\end{equation}
We now present a policy improvement guarantee for UDS below, and then analyze the bound under special conditions. Our theoretical result builds on techniques for showing safe policy improvement bounds~\citep{laroche2019safe,kumar2020conservative,yu2021conservative}. 

\begin{table*}[t]
\vspace{-0.15cm}
\caption{\footnotesize Summary of scenarios where UDS is expected to work and where it is not expected to work. \textbf{L} denotes the characteristics of labeled data, \textbf{U} denotes characteristics of unlabeled data. Limited/Abundant refers to the relative amount of data available
High-quality/medium-quality/low-quality refers to the actual performance of the behavior policy generating the datasets. Narrow/broad refers to the relative state coverage of the datasets that we study where high coverage can lead to low distributional shift during offline RL. We provide intuitions on why UDS works or does not work under each scenario and refer to the cases shown in our analysis in Section~\ref{sec:uds_theory}.}
\vspace{-0.43cm}
\begin{center}
\scriptsize
\resizebox{\textwidth}{!}{\begin{tabular}{l|r|l}
\toprule
 \textbf{Scenarios} & \textbf{UDS} & \textbf{Intuition} \\
 \midrule
  \textbf{L}: limited + high-quality + narrow, \textbf{U}: abundant + low/medium-quality + broad & \textcolor{green}{\checkmark} & increase coverage (case \textbf{\rom{2}} and \textbf{\rom{3}})\\
  \textbf{L}: limited + medium-quality + narrow, \textbf{U}: \ak{abundant} + low/medium-quality + broad &  \textcolor{red}{$\times$} & reward bias outweighs high coverage\\
  \textbf{L}: limited + high-quality + narrow, \textbf{U}: abundant + high-quality + narrow &   \textcolor{green}{\checkmark} & identical distribution (case \textbf{\rom{1}})\\
  \textbf{L}: limited + low-quality + narrow, \textbf{U}: abundant + high-quality + narrow & \textcolor{green}{\checkmark} & increase data quality (case \textbf{\rom{2}})\\
\bottomrule
\end{tabular}}
\end{center}
\vspace{-0.7cm}
\label{tbl:summary_criteria}
\normalsize
\end{table*}

\begin{theorem}[\textbf{Policy improvement guarantee for UDS}] 
\label{prop:uds_ours}
Let $\pi^*_\text{UDS}$ denote the policy learned by UDS, and let $\pi^\mathrm{eff}_\beta(\ba|\bs)$ denote the behavior policy for the combined dataset $\mathcal{D}^\mathrm{eff}$. Then with high probability $\geq 1 - \delta$, $\pi^*_\text{UDS}$ is a safe policy improvement over $\pi_\beta^\mathrm{eff}$, i.e.,
\begin{align*}
& J(\pi^*_\text{UDS}) \geq J(\pi_\beta^\mathrm{eff}) - \zeta_\text{err} +  \underbrace{\frac{\alpha}{1 - \gamma} D(\pi^*_\text{UDS}, \pi^\mathrm{eff}_\beta)}_{\text{(c): policy improvement}},\\
 & \zeta_\text{err} = \underbrace{\mathrm{RewardBias}(\pi^*_\text{UDS}, \pi^\mathrm{eff}_\beta)}_{ \text{(a)}} \\
 &+\underbrace{\mathcal{O}\left(\frac{\gamma}{(1 - \gamma)^2}\right) \mathbb{E}_{\bs, \ba \sim \widehat{d}^{\pi}}\left[\sqrt{\frac{D_{\text{CQL}}(\pi^*_\text{UDS}, \pi^\mathrm{eff}_\beta)(\bs)}{|\mathcal{D}^\mathrm{eff}(\bs)|}} \right]}_{\text{(b): sampling error}},\\
\!\!\!\!\!\!\!\!\!\!\!\!\!\!\!\!\!\!\!\!\!\!\!(a) &:= \frac{\sum_{\bs, \ba}\Delta\left(\widehat{d}^{\behavior^\mathrm{eff}}, \widehat{d}^{\pi^*_\text{UDS}}\right)  \cdot (1 - f(\bs, \ba)) \cdot r(\bs, \ba)}{1 - \gamma},
\end{align*}
where we use the notation $f(\bs, \ba) := \frac{|\mathcal{D}_\mathrm{L}(\bs, \ba)|}{|\mathcal{D}^\mathrm{eff} (\bs, \ba)|}$ and $\Delta(\widehat{d}^{\behavior^\mathrm{eff}}, \widehat{d}^{\pi^*_\text{UDS}}) = \widehat{d}^{\behavior^\mathrm{eff}}(\bs, \ba) - \widehat{d}^{\pi^*_\text{UDS}}(\bs, \ba)$. 
\end{theorem}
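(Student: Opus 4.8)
The plan is to follow the standard safe-policy-improvement recipe developed for CQL and CDS~\citep{kumar2020conservative,yu2021conservative}, augmented with one extra bookkeeping step that isolates the deterministic reward corruption introduced by zero-labeling. The starting observation is that, on the empirical MDP $\widehat{\mathcal{M}}$ induced by $\mathcal{D}^{\mathrm{eff}}$, the average reward observed at a state-action pair is not $r(\bs,\ba)$ but the mixture $\widehat{r}(\bs,\ba) = f(\bs,\ba)\,r(\bs,\ba)$, since a fraction $1-f(\bs,\ba)$ of the transitions at $(\bs,\ba)$ come from the unlabeled set and carry reward $0$. Thus UDS optimizes the CQL-regularized return of $\widehat{\mathcal{M}}$ under this biased reward, and $\pi^*_\text{UDS}$ is by construction its maximizer.

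First I would introduce the biased empirical return $\widehat{J}(\pi)$ (empirical dynamics, reward $\widehat{r}$, conservative penalty baked in) and telescope $J(\pi^*_\text{UDS}) - J(\pi^{\mathrm{eff}}_\beta) = [J(\pi^*_\text{UDS}) - \widehat{J}(\pi^*_\text{UDS})] + [\widehat{J}(\pi^*_\text{UDS}) - \widehat{J}(\pi^{\mathrm{eff}}_\beta)] + [\widehat{J}(\pi^{\mathrm{eff}}_\beta) - J(\pi^{\mathrm{eff}}_\beta)]$. The middle bracket is handled by optimality: since $\pi^*_\text{UDS}$ maximizes the regularized objective over the empirical MDP, this bracket is at least $\frac{\alpha}{1-\gamma}D(\pi^*_\text{UDS}, \pi^{\mathrm{eff}}_\beta)$, which is precisely the policy-improvement term (c); this step reuses the CQL argument that the conservative penalty lower-bounds the value gap by the policy divergence.

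The two outer brackets carry all the error, and I would split each into a reward component and a dynamics component by inserting the empirical-dynamics return under the \emph{true} reward $r$. The reward components compare $\widehat{r}=fr$ against $r$, contributing $\pm\frac{1}{1-\gamma}\mathbb{E}_{\widehat{d}^\pi}[(1-f)\,r]$ for $\pi \in \{\pi^{\mathrm{eff}}_\beta, \pi^*_\text{UDS}\}$; collecting the two pieces over the visitation difference $\Delta(\widehat{d}^{\behavior^{\mathrm{eff}}}, \widehat{d}^{\pi^*_\text{UDS}})$ produces exactly the reward-bias term (a). The dynamics components compare the empirical transition model against the true one; bounding them with a high-probability concentration inequality on the empirical Bellman backup (a union bound over states, as in the CQL/CDS analyses) and summing the per-step error over the discounted horizon yields the $\mathcal{O}\!\left(\frac{\gamma}{(1-\gamma)^2}\right)\mathbb{E}_{\widehat{d}^\pi}\!\left[\sqrt{D_{\text{CQL}}/|\mathcal{D}^{\mathrm{eff}}(\bs)|}\right]$ sampling-error term (b), where the $D_{\text{CQL}}$ factor appears because the relevant action distribution at each state is controlled by the conservative penalty.

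The main obstacle is keeping the reward bias and the sampling error cleanly separated. Because the zero-labeling bias is deterministic rather than a statistical fluctuation, it must be extracted before any concentration argument, so the order of the decomposition matters: I would first move from true to empirical dynamics under the true reward, and only then swap $r$ for $\widehat{r}$, so that the $(1-f)\,r$ bias is measured against $\widehat{d}^\pi$ rather than $d^\pi$. Getting the signs of the two policies' bias contributions to combine into the single $\Delta$-weighted expression of (a)---rather than leaving a stray $\pi^*_\text{UDS}$-only term---is the delicate accounting, and it relies on the telescoping having both policies enter symmetrically through their empirical visitation distributions.
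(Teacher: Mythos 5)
Your proposal follows essentially the same route as the paper's proof: the same three-term telescoping of $J(\pi^*_\text{UDS}) - J(\pi^\mathrm{eff}_\beta)$, the same optimality argument on the conservative objective yielding the policy-improvement term (c), the same split of the two outer terms into a deterministic $(1-f(\bs,\ba))\,r(\bs,\ba)$ bias measured under the \emph{empirical} visitation distributions plus a dynamics-concentration term bounded via the CQL-style argument, and the same pairing of the two bias contributions into the single $\Delta$-weighted expression (a). The only minor difference is that you idealize the labeled empirical reward as exactly $f(\bs,\ba)\,r(\bs,\ba)$, whereas the paper's Assumption on stochastic rewards forces it to also carry a reward-concentration term $f(\bs,\ba)\,C_{r,\delta}/\sqrt{|\mathcal{D}(\bs,\ba)|}$ (folded into the $\mathcal{O}(\cdot)$ sampling-error term (b)); your argument accommodates this without any structural change.
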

A proof for Theorem~\ref{prop:uds_ours} is provided in Appendix~\ref{proof:uds_proof}. Intuitively, term (a) corresponds to the potential suboptimality incurred as a result of using the wrong reward, term (b) corresponds to the suboptimality induced due to sampling error. Finally, term (c) corresponds to the policy improvement in the empirical MDP induced by the transitions in $\mathcal{D}^\mathrm{eff}$ that occurs as a result of offline RL. Intuitively, we expect that including more unlabeled data will reduce the sampling error (b) and potentially increase how much we can improve the policy (c), while potentially increasing the reward bias term (a). The key question is under which conditions we would expect the increase in bias (a) to be lower than the decrease in term (b) obtained from using the unlabeled data. We examine this question below, presenting a few common cases where we expect this tradeoff to be favorable.

\textbf{(\rom{1}) Unlabeled data is distributed identically as labeled data.} The first special case we consider is when the distribution of state-action pairs in $\mathcal{D}_\mathrm{L}$ is {identical} to the distribution of state-action pairs in the unlabeled dataset $\mathcal{D}_U$. This can arise in many application domains, such as in robotics~\citep{xie2019improvisation,dasari2020robonet}, where a large amount of offline data is available, but only a limited uniformly-selected fraction of it can be annotated with rewards. In this case, the fraction $f(\bs, \ba)$ takes on identical values for all state-action pairs, and term (a) simply reverts to be a difference between the performance of the effective behavior policy and the learned policy, in the empirical MDP. Since offline RL algorithms would improve over the effective behavior policy in the empirical MDP, this term is negative and hence, no additional suboptimality is incurred in the reward bias term. Moreover, the sampling error term \final{(b)} reduces when more data is utilized. Thus, in this case, \uds\ improves performance due to an increase in the dataset size $|\mathcal{D}^\mathrm{eff}(\bs)|$, without incurring a cost due to the wrong reward.   

\textbf{(\rom{2}) Quality of the unlabeled dataset.} In practice, we are often provided with a small amount of high-quality reward annotated demonstration data, and a lot of unlabeled prior data of low or mediocre quality. In this case, assigning a low reward to the transitions in the unlabeled dataset does not negatively affect policy performance significantly because the bias due to wrong reward is low (due to low-quality of labeled data) and reduces sampling error (term \final{(b)}). On the other hand, when the unlabeled data is of high quality and large in size compared to the labeled dataset, even then the performance of the policy $J(\pi^*_{\mathrm{UDS}})$ can be improved by using this unlabeled set since \uds\ improves $J(\pi_\beta^\mathrm{eff})$. 

\textbf{(\rom{3}) Large unlabeled datasets for long-horizon tasks.} Another scenario when \uds\ relabeling will be beneficial to do compared to not using the unlabeled data at all is in long-horizon tasks (large value of $1/(1 - \gamma)$), where a lot of unlabeled data is available, while the labeled dataset, $\mathcal{D}_\mathrm{U}$ is relatively very small. Define  $H = \frac{1}{1 - \gamma}$; then in the case that $|\mathcal{D}^\mathrm{eff}(\bs)| = \Omega(H^2) |\mathcal{D}_\mathrm{L}(\bs)|$, the sampling error term (term \final{(b)}) will consist of one less factor of $1/(1 - \gamma)$ when utilizing unlabeled data, compared to when it is not. Since the sampling error grows quadratically in the horizon, whereas the reward bias only grow linearly, a reduction in this term by increasing $|\mathcal{D}^\mathrm{eff}(\bs)|$ (i.e., denominator) can improve compared to only using the labeled data, $\mathcal{D}_\mathrm{L}$. Thus, even though the rewards may be biased, the addition of large amounts of unlabeled, diverse data in long-horizon tasks can help despite the reward bias incurred. 

We empirically verify that \uds\ indeed helps in the special cases discussed above in Section~\ref{sec:empirical_analysis}. However, there are also cases where \uds\ does not help because of large suboptimality induced due to term (a). In Table \ref{tbl:summary_criteria}, we present a summary of the scenarios under which we expect \uds\ will help or hurt performance.
As an example of the latter, when the amount of unlabeled data, $\mathcal{D}_\mathrm{U}$ is not very large compared to the labeled dataset, such that a decrease in sampling error does not outweigh the suboptimality induced by reward bias, we should not expect \uds\ to attain very good performance, which we empirically show in Appendix~\ref{app:unlabeled_dataset_size_analysis}. To handle such cases, our key idea is to re-weight the unlabeled dataset before using it for offline RL training reduce the sampling error and reward bias.

\vspace{-0.05cm}
\subsection{Reducing Reward Bias by Reweighting Data}
\label{sec:rebalancing}
\vspace{-0.05cm}
As discussed above, one way to reduce the suboptimality induced due to reward bias is by preferentially reweighting transitions in the unlabeled data. We would hope that such a scheme can provide a favorable bias-sampling error tradeoff, even though it reduces sample size. In this section, we will derive an optimized reweighting scheme that attains a favorable tradeoff. \ak{Intuitively, this optimized scheme suggests that reweighting must reduce distributional shift against the state-action marginal of the policy obtained by running offline RL on only the transitions in the labeled data.}
This scheme intuitively matches the conservative data sharing method~\citet{yu2021conservative} previously proposed for (fully-labeled) multi-task offline RL. We will show that this conservative sharing approach reduces reward bias, controls distributional shift that appears in the numerator of sampling error, and increases the sample size.   

Formally, we will derive this reweighting scheme from the perspective of optimizing the effective behavior policy, $\pi^\mathrm{eff}_\beta$ induced by the dataset $\mathcal{D}$ after preferentially sharing transitions from the unlabeled data, so as to minimize reward bias and sampling error, while improving the dataset quality. For understanding purposes, we begin by deriving the choice of $\pi^\mathrm{eff}_\beta$ that reduces only the reward bias component:
\begin{theorem}[\textbf{Optimized reward bias reduction}] 
\label{prop:reward_bias_theorem}
The optimal effective behavior policy that minimizes the bias (a) in Theorem~\ref{prop:uds_ours}, $\mathrm{RewardBias}(\pi, \pi^\mathrm{eff}_\beta)$, satisfies
\begin{align*}
    \widehat{d}^{\widehat{\pi}^\mathrm{eff}_\beta}(\bs, \ba) \propto  \sqrt{{d}_\mathrm{L}(\bs, \ba) d^{\pi}(\bs, \ba)},
\end{align*}
where $d^{\pi}$ denotes the state-action marginal of a policy $\pi$, and $d_\mathrm{L}(\bs, \ba)$ denotes the density of state-action pair $(\bs, \ba)$ under the labeled dataset.
\end{theorem}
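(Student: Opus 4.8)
The plan is to recast the reward-bias term (a) of Theorem~\ref{prop:uds_ours} purely as a function of the effective behavior marginal $q(\bs,\ba) := \widehat{d}^{\pi^\mathrm{eff}_\beta}(\bs,\ba)$ — the only object the reweighting actually controls — while holding fixed the labeled-data density $d_\mathrm{L}$, the marginal $d^\pi$ of the policy $\pi$ obtained by running offline RL on $\mathcal{D}_\mathrm{L}$ alone, and the (unknown, nonnegative) reward $r$. The starting observation is that the fraction $f$ carries an explicit $1/q$ dependence: writing $\beta_\mathrm{L} := |\mathcal{D}_\mathrm{L}|/|\mathcal{D}^\mathrm{eff}|$ and $d_\mathrm{L}(\bs,\ba) = |\mathcal{D}_\mathrm{L}(\bs,\ba)|/|\mathcal{D}_\mathrm{L}|$, one checks directly from the count definitions that $f(\bs,\ba) = \beta_\mathrm{L}\, d_\mathrm{L}(\bs,\ba)/q(\bs,\ba)$, and hence $q(\bs,\ba)\,(1-f(\bs,\ba)) = q(\bs,\ba) - \beta_\mathrm{L}\, d_\mathrm{L}(\bs,\ba)$.

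First I would substitute this into $\Delta(\widehat{d}^{\behavior^\mathrm{eff}},\widehat{d}^{\pi})\,(1-f) = (q - \widehat{d}^{\pi})(1-f)$ and expand:
\[
(q - \widehat{d}^{\pi})(1-f) = q - \beta_\mathrm{L} d_\mathrm{L} - \widehat{d}^{\pi} + \beta_\mathrm{L}\frac{d_\mathrm{L}\,\widehat{d}^{\pi}}{q}.
\]
Multiplying by $r$ and summing, the two middle terms $-\beta_\mathrm{L} d_\mathrm{L} r - \widehat{d}^{\pi} r$ do not involve $q$ and are therefore constants of the optimization, so the only $q$-dependent part of the bias is
\[
\frac{1}{1-\gamma}\sum_{\bs,\ba}\Big[\, q(\bs,\ba) + \beta_\mathrm{L}\frac{d_\mathrm{L}(\bs,\ba)\,\widehat{d}^{\pi}(\bs,\ba)}{q(\bs,\ba)}\,\Big]\, r(\bs,\ba),
\]
a sum of one term growing linearly in $q$ and one growing like $1/q$, with nonnegative coefficients and $r \ge 0$.

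Next I would minimize this over $q$, which is a state–action marginal and so must be a probability distribution. The clean step is a termwise AM–GM bound: for each $(\bs,\ba)$, $q\, r + \beta_\mathrm{L}\frac{d_\mathrm{L}\widehat{d}^{\pi}}{q}\, r \ge 2 r\sqrt{\beta_\mathrm{L} d_\mathrm{L}\widehat{d}^{\pi}}$, with equality exactly when $q(\bs,\ba) = \sqrt{\beta_\mathrm{L}\, d_\mathrm{L}(\bs,\ba)\,\widehat{d}^{\pi}(\bs,\ba)}$. The crucial feature is that this balance point is \emph{independent} of the value $r(\bs,\ba)$ — precisely what we need, since $r$ is unavailable on the unlabeled data — so the minimizer obeys $q(\bs,\ba) \propto \sqrt{d_\mathrm{L}(\bs,\ba)\, d^\pi(\bs,\ba)}$, which is the claim after normalizing. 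Equivalently, upper-bounding $r \le R_{\max}$ (so the linear term contributes the $q$-independent constant $R_{\max}$) and minimizing the residual $\sum_{\bs,\ba} d_\mathrm{L}\, d^\pi/q$ via Cauchy–Schwarz, $\big(\sum\sqrt{d_\mathrm{L} d^\pi}\big)^2 \le \big(\sum d_\mathrm{L} d^\pi/q\big)\big(\sum q\big)$, yields the same normalized minimizer and automatically enforces $\sum_{\bs,\ba} q = 1$.

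The hard part is reconciling the two sources of slack: the unknown reward $r$ and the normalization constraint on $q$. Both are handled by the structure above — AM–GM makes the balance point reward-free, so the optimal reweighting can be computed with no labels on $\mathcal{D}_\mathrm{U}$, and the Cauchy–Schwarz form builds in $\sum q = 1$, matching the ``$\propto$'' in the statement to the normalized geometric mean. The one wrinkle to treat carefully is the residual linear term $\sum q\, r$: a fully rigorous argument either absorbs it as the $q$-independent constant $R_{\max}$ after bounding $r$, or argues it is lower order relative to the $1/q$ contribution, which is the term that blows up where $q$ is small and thus encodes the distributional shift one wants to control. Under either treatment the minimizer stays $q \propto \sqrt{d_\mathrm{L} d^\pi}$. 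I would finally check feasibility — that some reweighting of $\mathcal{D}_\mathrm{U}$ can realize (or approximate) this target marginal — which is where the link to the conservative data-sharing scheme of \citet{yu2021conservative} enters.
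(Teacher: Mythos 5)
Your proposal is correct and follows essentially the same route as the paper's own proof: you rewrite the bias term (a) purely as a function of the effective behavior marginal via $f(\bs,\ba) = \beta_\mathrm{L}\, d_\mathrm{L}(\bs,\ba)/q(\bs,\ba)$, discard the $q$-independent pieces, and minimize the resulting ``linear plus $1/q$'' objective, which is exactly the paper's Equation~\ref{eq:final_Expr}; the paper obtains the optimum by setting a derivative to zero, and your AM--GM equality condition is the same first-order condition, exhibiting the same crucial cancellation of $r(\bs,\ba)$ at the balance point. Both arguments also share the same mild imprecision, which you flag more explicitly than the paper: once the simplex constraint is enforced with a Lagrange multiplier $\lambda$, the exact minimizer is $q \propto \sqrt{d_\mathrm{L}\, d^\pi\, r/(r+\lambda)}$, so the reward-free form in the theorem holds exactly only when the multiplier is negligible (or after the pointwise-minimize-then-normalize interpretation that both you and the paper implicitly adopt via the ``$\propto$'').
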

A proof of Theorem~\ref{prop:reward_bias_theorem} is provided in Appendix~\ref{sec:small_reward_bias}. The expression implies that the state-action marginal of the effective behavior policy (i.e., the rebalanced dataset distribution) must place mass on state-action tuples that are both likely to under the learned policy $d^{\pi}$ and appear frequently in the distribution induced by the labeled dataset $\widehat{d}_\mathrm{L}$. However, note that Theorem~\ref{prop:reward_bias_theorem} only accounts for the reward bias and not the other terms pertaining to sampling error and the performance of the effective behavior policy that appear in Theorem~\ref{prop:uds_ours}. To address both of these issues, in our next theoretical result, Theorem~\ref{thm:with_all_sources}, we derive the reweighting distribution that controls all the terms.
\begin{theorem}[\textbf{Optimized reweighting unlabeled data}; Informal]
\label{thm:with_all_sources}
The optimal effective behavior policy that maximizes a lower bound on $J(\pi_\beta^\mathrm{eff}) - \left[(a) + (b)\right]$ in Theorem~\ref{prop:uds_ours} satisfies $d^{\widehat{\pi}_\beta^\mathrm{eff}}(\bs, \ba) = p^*(\bs, \ba)$, where:
\begin{align*}
    p^* = \arg\min_{p \in \Delta^{|\mathcal{S}||\mathcal{A}|}}&~ \sum_{\bs, \ba} C_1 \frac{\widehat{d}^\pi(\bs, \ba)}{\sqrt{p(\bs, \ba)}} + C_2 |\widehat{d}_\mathrm{L}(\bs, \ba)| \frac{\widehat{d}^\pi(\bs, \ba)}{p(\bs, \ba)}, 
\end{align*}
where $C_1$ and $C_2$ are universal positive constants that depend on the sizes of the labeled and unlabeled datasets.
\end{theorem}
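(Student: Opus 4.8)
The plan is to treat the reweighting operation as a choice of the effective behavior policy's state-action marginal, $p := \widehat{d}^{\widehat{\pi}_\beta^\mathrm{eff}}$, substitute this variable into the safe-improvement bound of Theorem~\ref{prop:uds_ours}, and then maximize that bound over $p \in \Delta^{|\mathcal{S}||\mathcal{A}|}$. The policy-improvement term (c) equals $\frac{\alpha}{1-\gamma}D(\pi^*_\text{UDS},\pi^\mathrm{eff}_\beta) \ge 0$, so dropping it yields the valid lower bound $J(\pi^*_\text{UDS}) \ge J(\pi_\beta^\mathrm{eff}) - [(a)+(b)]$. It therefore suffices to produce a tractable lower bound on $J(\pi_\beta^\mathrm{eff}) - [(a)+(b)]$ as a function of $p$ and read off its maximizer; I would build this by upper-bounding (a) and (b) and lower-bounding $J(\pi_\beta^\mathrm{eff})$, each expressed through $p$.

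First I would handle the sampling error (b). Under reweighting the effective per-state count scales as $|\mathcal{D}^\mathrm{eff}(\bs)| \propto N p(\bs)$ with $N = |\mathcal{D}^\mathrm{eff}|$, so substituting into $\mathbb{E}_{\bs,\ba \sim \widehat{d}^\pi}[\sqrt{D_\text{CQL}/|\mathcal{D}^\mathrm{eff}(\bs)|}]$ and absorbing the $\gamma/(1-\gamma)^2$ prefactor, the $D_\text{CQL}$ factor, and $1/\sqrt{N}$ into a single constant produces a term of the form $C_1 \sum_{\bs,\ba} \widehat{d}^\pi(\bs,\ba)/\sqrt{p(\bs,\ba)}$. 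This recovers the first summand of the stated objective, with $C_1$ carrying the horizon and dataset-size dependence.

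Next I would bound the reward bias (a) by reusing the mechanism behind Theorem~\ref{prop:reward_bias_theorem}. Writing $1 - f(\bs,\ba) = |\mathcal{D}_\mathrm{U}(\bs,\ba)|/|\mathcal{D}^\mathrm{eff}(\bs,\ba)| = 1 - |\widehat{d}_\mathrm{L}|/p$ and bounding $|r| \le R_{\max}$, the signed sum $\sum \Delta(\widehat{d}^{\widehat{\pi}_\beta^\mathrm{eff}},\widehat{d}^{\pi})(1-f)r$ can be controlled by a Cauchy-Schwarz step that isolates the dependence $\sum_{\bs,\ba} |\widehat{d}_\mathrm{L}(\bs,\ba)|\,\widehat{d}^\pi(\bs,\ba)/p(\bs,\ba)$; minimizing this quantity alone recovers $p \propto \sqrt{\widehat{d}_\mathrm{L}\,\widehat{d}^\pi}$, exactly as in Theorem~\ref{prop:reward_bias_theorem}. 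Collecting $R_{\max}/(1-\gamma)$ and the labeled-set-size factor into $C_2$ gives the second summand $C_2 \sum |\widehat{d}_\mathrm{L}|\,\widehat{d}^\pi/p$. The term $J(\pi_\beta^\mathrm{eff})$ I would lower-bound using the fact that zero-reward relabeling can only decrease return; the $p$-dependent part of that bound aligns with (rather than opposes) the two error surrogates, so it leaves the argmin unchanged and is folded into the constants.

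The main obstacle will be the reward-bias step: the raw term (a) couples the signed visitation difference $\Delta(\widehat{d}^{\widehat{\pi}_\beta^\mathrm{eff}},\widehat{d}^{\pi})$ with the factor $(1-f)$, both of which depend on $p$, so converting it into the clean convex surrogate $|\widehat{d}_\mathrm{L}|\,\widehat{d}^\pi/p$ requires a careful worst-case bound over admissible reward functions and visitation differences rather than a direct substitution. Once both error terms are in the $1/\sqrt{p}$ and $1/p$ forms, the combined objective is convex in $p$ over the simplex, so the claimed $p^*$ is well-defined as its minimizer and the positive constants $C_1, C_2$ depend only on $|\mathcal{D}_\mathrm{L}|$, $|\mathcal{D}_\mathrm{U}|$ and the horizon, establishing the informal statement.
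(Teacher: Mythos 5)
Your high-level setup (treat $p:=\widehat{d}^{\widehat{\pi}_\beta^\mathrm{eff}}$ as the optimization variable, lower-bound $J(\pi_\beta^\mathrm{eff})-[(a)+(b)]$ in terms of $p$, and read off the minimizer) matches the paper, and you correctly identify the two target surrogates $\widehat{d}^\pi/\sqrt{p}$ and $\widehat{d}_\mathrm{L}\widehat{d}^\pi/p$. But both of your technical steps, as described, would fail. For the sampling error (b), you propose to substitute $|\mathcal{D}^\mathrm{eff}(\bs)|\propto Np(\bs)$ and absorb the $D_\text{CQL}$ factor into the constant $C_1$. That is not legitimate: $D_\text{CQL}(\pi^*_\text{UDS},\pi^\mathrm{eff}_\beta)(\bs)$ is a divergence to the very policy you are optimizing over, so it depends on $p$ and cannot be treated as constant; moreover, even if it could, your substitution yields $\sum_{\bs,\ba}\widehat{d}^\pi(\bs,\ba)/\sqrt{p(\bs)}$, with the \emph{state} marginal in the denominator, not the state-action marginal $p(\bs,\ba)$ appearing in the theorem. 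The paper avoids this by proving a separate lemma that re-derives the sampling error bound from scratch, keeping the per-action ratio $\pi(\ba|\bs)/\sqrt{\pi^\mathrm{eff}_\beta(\ba|\bs)}$ and merging it with the per-state count $|\mathcal{D}(\bs)|$ so that the two combine into exactly $\widehat{d}^\pi(\bs,\ba)/\sqrt{\widehat{d}^{\pi_\beta^\mathrm{eff}}(\bs,\ba)}$; this recombination is precisely the content you discarded by "absorbing" $D_\text{CQL}$.

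The reward-bias step has a deeper problem, which you partly flag yourself as the "main obstacle" but then resolve incorrectly. A Cauchy--Schwarz or worst-case-over-rewards bound on the signed sum $\sum_{\bs,\ba}\Delta(\widehat{d}^{\pi_\beta^\mathrm{eff}},\widehat{d}^\pi)(1-f)r$ necessarily pays for the $f$-free part of the product, leaving a term of order $\|p-\widehat{d}^\pi\|_1$ in your objective; that term does not appear in the theorem and would distort the argmin. The paper never takes a worst case over that part: it splits $(1-f)r = r - fr$ exactly, recognizes $\sum(\widehat{d}^\pi - p)\,r = (1-\gamma)(\widehat{J}(\pi)-\widehat{J}(\pi_\beta^\mathrm{eff}))$, and cancels the $\widehat{J}(\pi_\beta^\mathrm{eff})$ so produced against the empirical lower bound $J(\pi_\beta^\mathrm{eff})\geq \widehat{J}(\pi_\beta^\mathrm{eff}) - \mathcal{O}(\sqrt{1/|\mathcal{D}^\mathrm{eff}|})$. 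Only after this cancellation is the remaining $p$-dependence exactly $\frac{|\mathcal{D}_\mathrm{L}|}{(1-\gamma)|\mathcal{D}^\mathrm{eff}|}\sum_{\bs,\ba}\widehat{d}_\mathrm{L}(\bs,\ba)\,r(\bs,\ba)\,\widehat{d}^\pi(\bs,\ba)/p(\bs,\ba)$, which the crude bound $r\leq 1$ turns into the second surrogate. This also shows why your treatment of $J(\pi_\beta^\mathrm{eff})$ ("its $p$-dependent part aligns with the error surrogates and folds into the constants") is wrong: $J(\pi_\beta^\mathrm{eff})$ depends on $p$ essentially — reweighting changes the quality of the effective behavior policy — and its $p$-dependence disappears only through the exact cancellation above, not because it is small or aligned. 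Without that cancellation your lower bound retains an uncontrolled $p$-dependent return term, and the claimed minimizer does not follow.
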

A proof of Theorem~\ref{thm:with_all_sources} along with a more formal statement listing the constants $C_1$ and $C_2$ is provided in in {Appendix~\ref{proof:all_sources}}. The first term in the optimization objective of $p^*$ appearing above arises from the sampling error term, while the second term corresponds to the reward bias term in Theorem~\ref{prop:uds_ours}. The optimal solution for $p^*$ must place high density on $(\bs, \ba)$ pairs where $\widehat{d}^\pi(\bs, \ba)$ is high, because this would reduce the objective in the optimization problem above. This corroborates the analysis of \citet{yu2021conservative}, which shows that utilizing a reweighting scheme that reduces distributional shift (i.e., makes $\pi(\ba|\bs)/\widehat{\pi}^\mathrm{eff}_\beta(\ba|\bs)$ or \ak{equivalently}, as we find, making $\widehat{d}^\pi(\bs, \ba)/\widehat{d}^{\pi_\beta^\mathrm{eff}}(\bs, \ba)$ smaller) will control sampling error, and give rise to performance benefits. In addition, as also shown in Theorem~\ref{prop:reward_bias_theorem}, the reward bias term is also controlled when low distributional shift appears. This means that rebalancing the dataset to control distributional shift between the learned policy $\widehat{d}^\pi(\bs, \ba)$ and $\widehat{d}^{\pi^\mathrm{eff}_\beta}(\bs, \ba)$ is effective in unlabeled settings as well.

\begin{table*}[t]
\vspace{-0.2cm}
\caption{\footnotesize Results on single-task environments {hopper} and {AntMaze} from the D4RL~\citep{fu2020d4rl} benchmark. The numbers are averaged over three random seeds. We only bold the best-performing method that does not have access to the true reward for relabeling. UDS outperforms No Sharing in three out of the four settings while achieving competitive performances compared to Sharing All in all the settings. CDS+UDS further improves over UDS in three out of four settings.}
\vspace{-0.24cm}
\begin{center}
\resizebox{\textwidth}{!}{\begin{tabular}{l|l|l|rrrrrr|rr}
\toprule
& & & & & & & & & \multicolumn{2}{c}{\textbf{Oracle Methods}} \\
 \textbf{Environment} & \textbf{Labeled data} & \textbf{Unlabeled data}   & \textbf{CDS+UDS}   & \textbf{UDS}         & \textbf{No Sharing} & \textbf{Reward Pred.} & \textbf{VICE} & \textbf{RCE}  & \textbf{CDS} & \textbf{Sharing All} \\
 \midrule
 D4RL hopper & expert & random & \textbf{81.5}  & 78.6 & 77.1 & 67.6 & n/a & n/a & 83.3 & 86.1\\
  & expert & medium & \textbf{78.3}  & 64.4 & 77.1 & 51.7 & n/a & n/a & 82.5 & 64.6\\
\midrule
D4RL AntMaze & expert & medium-play & 82.6  & \textbf{82.7} & 17.2 & 0.0 & 0.0 & 0.0 & 83.5 & 83.1 \\
 & expert & large-play & \textbf{47.1} & 33.1 & 0.7 & 0.0 & 0.0 & 0.0 & 46.1 & 50.2\\
\bottomrule
\end{tabular}}
\end{center}
\vspace{-0.7cm}
\label{tbl:single_task}
\normalsize
\end{table*}

While the scheme derived above can, in principle, be implemented exactly in practice, it would require computing state-marginals. Since, computing state marginals accurately in an offline setting has been an outstanding challenge,
we instead can utilize a reweighting scheme that corrects for distributional shift approximately without needing state-marginals. One of such methods is conservative data sharing (CDS)~\citep{yu2021conservative} that can be implemented without requiring additional components beyond the machinery of the offline RL method. Formally, CDS is given by:
\begin{equation*}
    \mathcal{D}^\mathrm{eff}_i = \mathcal{D}_i \cup ( \cup_{j \neq i}\{(\bs_j, \ba_j, \bs'_j, r_i) \in \mathcal{D}_{j \rightarrow i}: \Delta^\pi(\bs, \ba) \geq 0\}),
\end{equation*}
where $\bs_j, \ba_j, \bs'_j$ denote the transition from $\mathcal{D}_j$, $r_i$ denotes the reward of $\bs_j, \ba_j, \bs'_j$ relabeled for task $i$, $\pi$ denotes the task-conditioned policy $\pi(\cdot|\cdot, i)$, $\Delta^\pi(\bs_j, \ba_j)$ is the condition that shares data only if the expected conservative Q-value of the relabeled transition exceeds the top $k$-percentile of the conservative Q-values of the original data. In our experiments and analysis in Section \ref{sec:exp}, we find that utilizing CDS improves performance over simply training with \uds\ in a number of domains supporting the theoretical analysis.

\subsection{Why Can We Expect UDS to Outperform Reward Prediction Methods?}
\label{sec:reward_predictor_discussion}
While we have discussed various conditions where we would expect UDS (with or without various reweighting methods) to improve final performance over an approach that does not utilize the unlabeled data, it is also instructive to consider how it comes to a method that instead trains an approximate reward function, $\widehat{r}_\phi(\bs, \ba)$ using the labeled data, and then uses this approximate reward to annotate the unlabeled data. In our experiments, we will show, perhaps surprisingly that UDS often outperforms prior reward learning methods. In this section, we provide some intuition for why. 
First, we note the following generic expression for reward bias (term (a) in Theorem~\ref{prop:uds_ours}) for any approach:
\begin{align*}
    \mathrm{RewardBias}(\pi, \pi^\mathrm{eff}_\beta) = \frac{\sum_{\bs, \ba} \Delta\left(\widehat{d}^{\behavior^\mathrm{eff}}, \widehat{d}^{\pi}\right) \cdot \Delta r(\bs, \ba)}{1 - \gamma},
\end{align*}
where $\Delta r(\bs, \ba)$ is the error in the reward applied to the unlabeled data, such that $\Delta r(\bs, \ba) = (1 - f(\bs, \ba)) r(\bs, \ba)$ for UDS, and $\Delta r(\bs, \ba) = r(\bs, \ba) - \widehat{r}_\phi(\bs, \ba)$ for a reward prediction method. Note that while for UDS, $\forall~\bs, \ba, ~\Delta r(\bs, \ba) \geq 0$, this is not necessarily the case for a generic reward prediction method.

\textbf{UDS.} Since $\Delta r(\bs, \ba) \geq 0$ for all state-action tuples, state-action pairs that appear more frequently under the learned policy compared to the effective behavior policy, i.e., when $\Delta\left(\widehat{d}^{\behavior^\mathrm{eff}}, \widehat{d}^{\pi}\right) < 0$, contribute to reducing the suboptimality induced due to reward bias. 

\textbf{Reward prediction.} When $\Delta r(\bs, \ba) = r(\bs, \ba) - \widehat{r}_\phi(\bs, \ba)$, $\Delta r(\bs, \ba)$ may not be positive on all state-action tuples, and thus reward prediction methods fail to reduce the contribution of such state-action pairs in term (a). Since policy optimization will seek out those policies that maximize $\widehat{d}^\pi(\bs, \ba)$ on state-action pairs with high rewards, state-action pairs where $\Delta(\widehat{d}^{\pi_\beta^\mathrm{eff}}, \widehat{d}^\pi) < 0$ (i.e., state-action pairs that appear more under the learned policy) are likely to also have $\Delta r(\bs, \ba) < 0$. This ``exploitation'' inhibits the correction of reward bias and provides an explanation for why reward prediction approaches may still not perform well due to overestimation errors in the reward function.

\section{Experiments}
\label{sec:exp}
In our experiments, we aim to evaluate whether the theoretical potential for simple minimum-reward relabeling to attain good results is reflected in benchmark tasks and more complex offline RL settings. In particular, we will study: \textbf{(1)} can UDS match or exceed the performance of sophisticated reward inference methods and methods with oracle reward functions in simulated locomotion and navigation tasks? \textbf{(2)} can an optimized reweighting strategy (e.g., CDS+UDS) further improve the results achieved by UDS?, \textbf{(3)} how does UDS behave with and without combining with an optimized reweighting strategy in multi-task offline RL settings?, \textbf{(4)} under which conditions does UDS work and not work and does optimizing for reweighting help?

\begin{table*}[ht]
\vspace{-0.2cm}
\caption{\footnotesize Results for multi-task robotic manipulation (Meta-World) and navigation environments (AntMaze) with low-dimensional state inputs. Numbers are averaged across 6 seeds, $\pm$ the 95$\%$-confidence interval. We take the results of No Sharing, Sharing All and CDS, directly from \cite{yu2021conservative}. We bold the best-performing method that does not have access to the true rewards for relabeling. We include per-task performance for Meta-World domains and the overall performance averaged across tasks (highlighted in gray) for all three domains. Both CDS+UDS and UDS outperforms prior vanilla multi-task offline RL approach (No Sharing) and reward learning methods (Reward Predictor, VICE and RCE) while performing competitively compared to oracle reward relabeling methods.}
\begin{center}
\resizebox{\textwidth}{!}{\begin{tabular}{l|l|rrrrrr|rr}
\toprule
& & & & & & & & \multicolumn{2}{c}{\textbf{Oracle Methods}} \\
\textbf{Environment} & \textbf{Tasks} & \textbf{CDS+UDS} & \textbf{UDS} & \textbf{VICE} & \textbf{RCE} & \textbf{No Sharing} & \textbf{Reward Pred.} & \textbf{CDS} & \textbf{Sharing All}\\ \midrule
& door open & \textbf{61.3\%}$\pm$7.9\% & 51.9\%$\pm$25.3\% & 0.0\%$\pm$0.0\% & 0.0\%$\pm$0.0\% & 14.5\%$\pm$12.7\% & 0.0\%$\pm$0.0\% & 58.4\%$\pm$9.3\% & 34.3\%$\pm$17.9\%\\
& door close & 54.0\% $\pm$42.5\% & 12.3\%$\pm$27.6\% & 66.7\%\%$\pm$47.1\% & 0.0\%$\pm$0.0\% & 4.0\%$\pm$6.1\% & \textbf{99.3\%}$\pm$0.9\% & 65.3\%$\pm$27.7\% & 48.3\%$\pm$27.3\%\\
Meta-World& drawer open & \textbf{73.5\%}$\pm$9.6\% & 61.8\%$\pm$16.3\% & 0.0\%$\pm$0.0\% & 0.0\%$\pm$0.0\% & 16.0\%$\pm$17.5\% & 13.3\%$\pm$18.9\% & 57.9\%$\pm$16.2\% & 55.1\%$\pm$9.4\%\\
& drawer close & 99.3\%$\pm$0.7\% & \textbf{99.6\%}$\pm$0.7\% & 19.3\%$\pm$27.3\% & 2.7\%$\pm$1.7\% & 99.0\%$\pm$0.7\% & 50.3\%$\pm$35.8\% & 99.0\%$\pm$0.7\% & 98.8\%$\pm$0.7\%\\
& \CC \textbf{average} & \CC \textbf{71.2\%} $\pm$ 11.3\% & \CC 56.4\%$\pm$12.8\% & \CC 21.5\%$\pm$0.7\% & \CC 0.7\%$\pm$0.4\% & \CC 33.4\%$\pm$8.3\% & \CC 41.0\%$\pm$11.9\% & \CC 70.1\%$\pm$8.1\% & \CC 59.4\%$\pm$5.7\%\\
\midrule
AntMaze    & medium (3 tasks)  &\textbf{31.5\%}$\pm$3.0\% & 26.5\%$\pm$9.1\%  & 2.9\%$\pm$1.0\%  & 0.0\%$\pm$0.0\%  & 21.6\%$\pm$7.1\% & 3.8\%$\pm$3.8\% & 36.7\%$\pm$6.2\%  & 22.9\%$\pm$3.6\%\\
& large (7 tasks)  & \textbf{18.4\%}$\pm$6.1\% & 14.2\%$\pm$3.9\% & 2.5\%$\pm$1.1\% & 0.0\%$\pm$0.0\% & 13.3\% $\pm$ 8.6\% & 5.9\%$\pm$4.1\% & 22.8\% $\pm$ 4.5\% & 16.7\% $\pm$ 7.0\%\\
\bottomrule
\end{tabular}}
\end{center}
\vspace{-0.3cm}
\label{tbl:gym}
\normalsize
\end{table*}

\begin{table*}[ht]
\vspace*{-0.3cm}
\caption{\footnotesize Results for multi-task imaged-based robotic manipulation domains in \citep{yu2021conservative}. Numbers are averaged across 3 seeds, $\pm$ the 95$\%$ confidence interval. UDS outperforms No Sharing in 7 out of 10 tasks as well as the average task performance, while performing comparably to Sharing All. CDS+UDS further improves the performance of UDS and outperforms No Sharing in all of the 10 tasks.
}
\small{
\begin{center}
\begin{tabular}{l|rrr|rr}
\toprule
\textbf{Task Name} & \textbf{CDS+UDS}& \textbf{UDS} & \textbf{No Sharing} & \textbf{CDS (oracle)} & \textbf{Sharing All (oracle)}\\ \midrule
\texttt{lift-banana} & \textbf{55.9\%}$\pm$11.7\% & 48.6\%$\pm$5.1\% & 20.0\%$\pm$6.0\% &  \textbf{53.1\%}$\pm$3.2\% & 41.8\%$\pm$4.2\%\\
\texttt{lift-bottle} & \textbf{72.9\%}$\pm$12.8\% & 58.1\%$\pm$3.6\%& 49.7\%$\pm$8.7\% & \textbf{74.0\%}$\pm$6.3\% & 60.1\%$\pm$10.2\%\\
\texttt{lift-sausage} & \textbf{74.3\%}$\pm$8.3\%  & 66.8\% $\pm$ 2.7\%  & 60.9\%$\pm$6.6\% & \textbf{71.8\%}$\pm$3.9\% & 70.0\%$\pm$7.0\%\\
\texttt{lift-milk}& 73.5\%$\pm$6.7\% & \textbf{74.5\%}$\pm$2.5\% & 68.4\%$\pm$6.1\% & \textbf{83.4\%}$\pm$5.2\% & 72.5\%$\pm$5.3\%\\

\texttt{lift-food} & \textbf{66.3\%}$\pm$8.3\% & 53.8\%$\pm$8.8\%  & 39.1\%$\pm$7.0\% & \textbf{61.4\%}$\pm$9.5\% & 58.5\%$\pm$7.0\%\\
\texttt{lift-can} & \textbf{64.9\%}$\pm$7.1\%  & 61.0\%$\pm$6.8\%  & 49.1\%$\pm$9.8\% & \textbf{65.5\%}$\pm$6.9\% & 57.7\%$\pm$7.2\%\\
\texttt{lift-carrot} & \textbf{84.1\%}$\pm$3.6\% & 73.4\%$\pm$5.8\% & 69.4\%$\pm$7.6\% & \textbf{83.8\%}$\pm$3.5\% & 75.2\%$\pm$7.6\%\\
\texttt{place-bowl} & \textbf{83.4\%}$\pm$3.6\%  & 77.6\%$\pm$1.6\%  & 80.3\%$\pm$8.6\% & \textbf{81.0\%}$\pm$8.1\% & 70.8\%$\pm$7.8\%\\
\texttt{place-plate} & \textbf{86.2\%}$\pm$1.8\%  & 78.7\%$\pm$2.2\%  & 86.1\%$\pm$7.7\% & \textbf{85.8\%}$\pm$6.6\% & 78.7\%$\pm$7.6\%\\
\texttt{place-divider-plate} & \textbf{89.0\%}$\pm$2.2\%  & 80.2\%$\pm$2.2\%  & 85.0\%$\pm$5.9\% & \textbf{87.8\%}$\pm$7.6\% & 79.2\%$\pm$6.3\%\\
\CC \textbf{average} & \CC \textbf{75.0\%}$\pm$3.3\%  & \CC 67.3\%$\pm$0.8\% & \CC 60.8\%$\pm$7.5\% & \CC \textbf{74.8\%} $\pm$6.4\% & \CC 66.4\%$\pm$7.2\%\\
\bottomrule
\end{tabular}
\end{center}
\vspace{-0.7cm}
\label{tbl:mtopt}
}
\end{table*}

\textbf{Comparisons.} 
We evaluate multiple approaches alongside with \textbf{UDS} and \textbf{CDS+UDS}:
\textbf{No Sharing}, which uses the labeled data only without using any of the unlabeled data, \textbf{Reward Predictor}, which is trained via supervised classification or regression to directly predict the reward in sparse and dense reward settings respectively, \textbf{VICE}~\citep{fu2018variational} and \textbf{RCE}~\citep{eysenbach2021replacing}, inverse RL methods only applicable to sparse reward settings, that learn either a reward or Q-function classifier from both the labeled data and unlabeled data (treated as negatives) and then annotate the unlabeled data with the learned classifier, 
In the multi-task setting, we modify \textbf{VICE} and \textbf{RCE} accordingly by extracting transitions with reward labels equal to 1 and treating these datapoints as positives to learn the classifier for each task. We also train \textbf{No Sharing}, \textbf{Reward Predictor}, \textbf{VICE} and \textbf{RCE}, but adapt them to the offline setting using CQL, i.e. the same underlying offline RL method as in \textbf{UDS} and \textbf{CDS+UDS}.
For more details on experimental set-up and hyperparameter settings, please see Appendix~\ref{app:details}. 
\rebuttal{We also include evaluations of our methods under different quality of the relabeled data in Appendix~\ref{app:data_quality} and comparisons to model-based offline RL approaches in Appendix~\ref{app:mbrl} and prior methods~\citep{yang2021representation,yang2021trail} that leverage unlabeled data for representation learning instead of sharing directly in Appendix~\ref{app:pretrained_reps}.}

\textbf{Datasets and tasks.} To answer questions (1) and (2), we perform empirical evaluations on two state-based single-task locomotion datasets. To answer question (3), we further evaluate all methods on three state-based multi-task datasets that consist of robotic manipulation, navigation and locomotion domains respectively and one image-based multi-task manipulation dataset from prior work \citep{yu2021conservative}.

\begin{figure}[ht]
    \centering
    \includegraphics[width=0.35\textwidth]{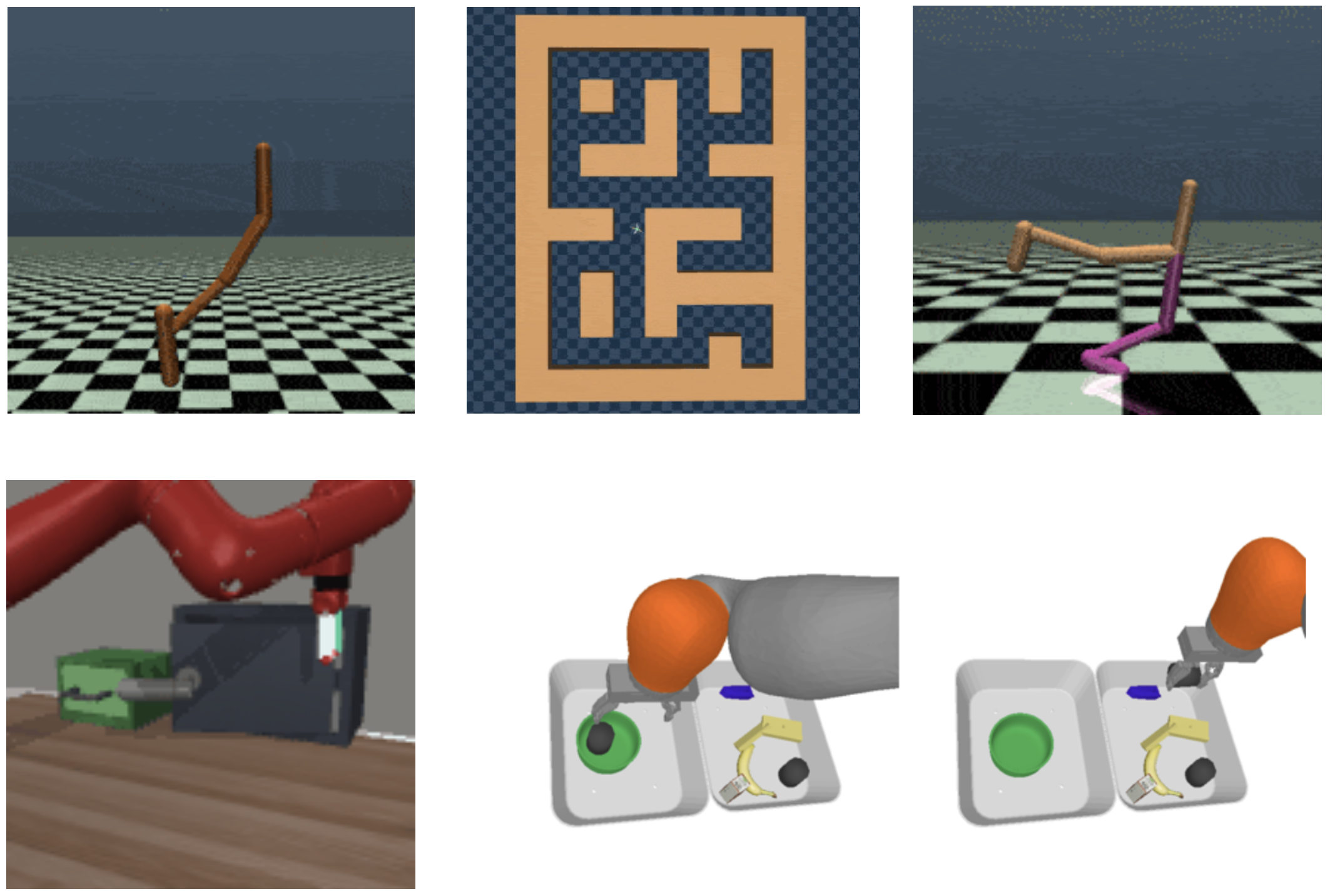}
    \vspace{-0.33cm}
    \caption{\footnotesize  Environments (from left to right):  single-task hopper, single-task and multi-task AntMaze, multi-task walker, multi-task Meta-World, and multi-task vision-based pick-place tasks.}
    \vspace{-0.35cm}
    \label{fig:env}
\end{figure}

\noindent \textbf{Single-task domains \& datasets.} We adopt two environments: hopper and the AntMaze from the D4RL benchmark~\citep{fu2020d4rl} for evaluation in the single-task setting. For the hopper environment, we consider two scenarios where we have 10k labeled data from the {hopper-expert} and 1M unlabeled transitions from the {hopper-random} datasets and {hopper-medium} datasets respectively. This setup is resembles practical problems where unlabeled data is of low-quality and even, irrelevant to the target task.
For AntMaze, following the setup in \cite{yang2021trail}, we use 10k expert transitions as the labeled data and the entire datasets of {large-play} and {medium-play} as the unlabeled data.

\begin{table*}[ht]
\vspace{-0.3cm}
\caption{\footnotesize We perform an empirical analysis on the single-task environment \texttt{hopper} from D4RL~\citep{fu2020d4rl} benchmark to test the sensitivity of UDS under the data quality and data coverage for both the labeled task data and unlabeled data. The numbers are averaged over \final{ten} random seeds. We bold the best method without true reward relabeling. UDS outperforms No Sharing in 5 out of 6 settings while achieving competitive performances compared to Sharing All in 5 out of 6 settings.
CDS+UDS is able to further improve over UDS significantly in such a setting and also outperforms UDS in 5 out of 6 settings in general.
}
\vspace{-0.3cm}
\begin{center}
\resizebox{\textwidth}{!}{\begin{tabular}{l|l|l|rrr|rr}
\toprule
& & & & & & \multicolumn{2}{c}{\textbf{Oracle Methods}} \\
 \textbf{Environment} & \textbf{Labeled data type / size} & \textbf{Unlabeled data type / size}   & \textbf{CDS+UDS}   & \textbf{UDS}         & \textbf{No Sharing} & \textbf{CDS} & \textbf{Sharing All} \\
 \midrule
& \textbf{(a)} expert / 10k transitions & random / 1M transitions & \textbf{82.1}  & 78.8 & 77.1 & 83.3 & 86.1\\
& \textbf{(b)} expert / 10k transitions & medium / 1M transitions   & \textbf{78.1} & 64.8 & 77.1 & 82.5 & 64.6\\
& \textbf{(c)} expert / 10k transitions & expert / 990k transitions   & 106.1 & \textbf{108.4} & 77.1 & 106.6 & 112.3\\
 D4RL hopper  & \textbf{(d)} medium / 10k transitions & random / 1M transitions & \textbf{33.2}  & 9.9 & 28.7 & 41.2 & 38.9\\
& \textbf{(e)} medium / 10k transitions & expert / 1M transitions & \textbf{108.9}   & 106.7 & 28.7 & 111.1 & 107.5\\
& \textbf{(f)} random / 10k transitions & medium / 1M transitions & \textbf{63.5}  & 47.1 & 9.6 & 92.3 & 69.8\\
& \textbf{(g)} random / 10k transitions & expert / 1M transitions  & \textbf{101.2} & 95.9 & 9.6 & 110.9 & 102.8\\
\bottomrule
\end{tabular}}
\end{center}
\vspace{-0.7cm}
\label{tbl:single_task_analysis}
\normalsize
\end{table*}

\noindent \textbf{Multi-task domains \& datasets.} We consider several multi-task domains. The first set of domains are from prior work~\citep{yu2021conservative}: \textbf{(i)} the Meta-World~\citep{yu2020meta} domain, which consists of four tasks of opening and closing doors and drawers; and \textbf{(ii)} the Antmaze~\citep{fu2020d4rl} domain, which consists of mazes of two sizes (medium and large) with 3 and 7 tasks corresponding to different goal positions respectively. We also evaluate on the multi-task walker environment with dense rewards, which we will discuss in detail in Appendix~\ref{app:dense_reward}.
In addition, we test multi-task offline RL methods with UDS in the multi-task visual manipulation domain, which provides a realistic scenario, of the sorts we will encounter in robotic settings in practice. In this domain, there are 10 tasks with different combinations of object-oriented grasping, with 7 objects (banana, bottle, sausage, milk box, food box, can and carrot), as well as placing the picked objects onto one of three fixtures (bowl, plate and divider plate). For domains except the walker environment, we use binary rewards, where $1$ denotes the successful completion of the task and $0$ corresponds to failure. We also use the datasets used in \cite{yu2021conservative} for all domains. For details, see Appendix~\ref{app:env_data_details}.

\vspace{-0.1cm}
\subsection{Results of Empirical Evaluations}
\label{sec:results}
\vspace{-0.1cm}
\noindent \textbf{Results of Question (1) and (2).} We evaluate each method on the single-task hopper and AntMaze domains. As shown in Table~\ref{tbl:single_task}, we find that UDS outperforms No Sharing in 3 out of the 4 settings and reward learning methods in all the settings. We hypothesize that reward learning methods underperform because reward predictors are unable to achieve reasonable generalization ability in the limited labeled data setting. Note that UDS even achieves competitive performance as the oracle Sharing All method. Furthermore, an optimized reweighting scheme, i.e., CDS+UDS, is able to improve over UDS in each case, including cases where UDS fails to improve over No Sharing, indicating a large reward bias. These results testify to the effectiveness of optimizing for reweighting when dealing with unlabeled data. Note that on the AntMaze domain, CDS+UDS performs comparably to the recent approach~\citep{yang2021trail} that performs representation learning on the offline dataset first and then run offline training leveraging the learned representation. CDS+UDS also outperforms all the prior methods in offline training with learned representations discussed in \cite{yang2021trail}.

\noindent \textbf{Results of Question (3).} Observe in Table~\ref{tbl:gym} that UDS outperforms na\"ive multi-task offline RL without data sharing and reward learning methods, suggesting that leveraging unlabeled data with our simple method UDS can boost offline RL performance in both multi-task manipulation and navigation domains.
Since reward learning approaches obtain similar or worse results compared to No Sharing, which we suspect could be due to erroneous predictions on unseen transitions in the multi-task data, we only compare our methods to No Sharing and the oracle methods in the image-based experiments. As shown in Table~\ref{tbl:mtopt}, UDS outperforms No Sharing in 7 out of 10 tasks as well as the average task performance by a significant margin. Therefore, UDS is able to effectively leverage unlabeled data from other tasks and achieves potentially surprisingly strong results compared to more sophisticated methods that handle unlabeled offline data. We also find that CDS+UDS further improves upon the performance of UDS, which indicates that optimizing for reweighting via CDS+UDS can actually work well.

Finally, note that, CDS+UDS and UDS attain performance competitive with their oracle counterparts, CDS and Sharing All, that assume access to full reward information, both in Tables~\ref{tbl:gym} \& \ref{tbl:mtopt}. This is potentially very surprising, and one could hypothesize that this might be because most transitions in the unlabeled dataset were actually failures, and hence, were labeled correctly by UDS. However, to the contrary, our diagnostic analysis in Table~\ref{tbl:data_quality}, Appendix~\ref{app:data_quality} reveals that the unlabeled data \emph{does not} consist entirely of failures; in fact, around 60\% of the unlabeled data succeeds at the task of interest, indicating that the rewards are wrong for more than half the unlabeled data. In spite of this, UDS and CDS+UDS perform well. This indicates that the simple UDS approach can be effective in removing the dependence of functional form of reward functions, which is often costly, without much loss in the performance and CDS+UDS can boost performance by reducing the bias.

\subsection{Empirical Analysis of UDS and CDS+UDS}
\label{sec:empirical_analysis}

To answer question (4), we analyze UDS and CDS+UDS on several offline RL problems designed to test robustness and sensitivity to the data coverage and the data quality on the single-task hopper domain. To create these instances, we consider 7 different combinations of data quality ({hopper-expert}, {hopper-medium} or {hopper-random}) and amount of labeled and unlabeled data labeled as \textbf{(a)}-\textbf{(g)} in Table~\ref{tbl:single_task_analysis}. In Appendix~\ref{app:mw_analysis}, we present results for a similar analysis in the multi-task Meta-World setting. We evaluate UDS, CDS+UDS and No Sharing in each case, and also present results for CDS and Sharing All approaches which assume access to the actual reward, for understanding. \final{Additionally, we conduct an ablation that compares UDS and CDS+UDS to reward learning methods in settings where the labeled data size and quality are varied, which is included in Appendix~\ref{app:reward_learning_ablation}.}

When the labeled data is of expert-level, adding unlabeled random or medium data (cases \textbf{a} and \textbf{b} in Table~\ref{tbl:single_task_analysis}) should only increase coverage, since the labeled data only consists of expert transitions. Moreover, the reward bias induced due to incorrect labeling of rewards on the medium unlabeled data should not hurt, since the 10k expert transitions retain their correct labels, and the medium/random data should only serve as negatives, provided the annotated rewards on this data are worse than the rewards in the expert data. Therefore, we expect the benefits of coverage to outweigh any reward bias; as seen in Table~\ref{tbl:single_task_analysis}, we find that UDS indeed helps compared to No Sharing. In particular, in those two cases, UDS approaches the oracle Sharing All method. 

Since reward bias is exacerbated when the unlabeled data is of higher quality and present in large amounts (cases \textbf{e, f, g}), it is reasonable to surmise that UDS will perform worse in such scenarios. However, on the contrary, in settings: random labeled data with medium or expert unlabeled data and medium labeled data with expert unlabeled data, we find that even though the rewards on the unlabeled transitions are incorrect, the addition of unlabeled data into training improves performance. This is because a higher quality unlabeled data improves the performance of the effective behavior policy, thereby improving performance for a conservative offline RL method. This result validates Theorem~\ref{prop:uds_ours}. 
We also conduct an ablation that varies the amount of unlabeled data in Appendix~\ref{app:unlabeled_dataset_size_analysis}. This ablation shows that the benefit of UDS reduces as the we reduce the amount of unlabeled data, which confirms our theory in Section~\ref{sec:uds_theory}.

In the case where labeled data is medium and unlabeled data is random (case \textbf{d}), we find that UDS hurts compared to No Sharing. This is because the labeled data as well as unlabeled data are both low-medium quality and medium data already provides decent coverage (not as high as random data, but not as low as expert data). Therefore, in this case, we believe that the addition of unlabeled data neither provides trajectories of good quality that can help improve performance, nor does it significantly improve coverage, and only hurts by incurring reward bias. We therefore believe that UDS may not help in such cases where the coverage does not improve, and added data is not so high quality, which also agrees with our theoretical analysis. Furthermore, in the case where the labeled and unlabeled data are both expert (case \textbf{c}), UDS performs close to oracle methods CDS and Sharing All, which confirms the insights derived from Section~\ref{sec:uds_theory} that UDS does not introduce bias when the labeled and unlabeled data have the same distribution and hence should perform well.

Finally, note that CDS+UDS is able to improve over UDS in 5 out of 6 settings in hopper, suggesting that CDS+UDS gains benefit from reducing the reward bias as well as the sampling error shown in Section~\ref{sec:method}.

\section{Discussion}

In this paper, we study the problem of leveraging unlabeled data in offline RL where we find that a simple method UDS that relabels unlabeled data with zero rewards is surprisingly effective in various single-task and multi-task offline RL domains. We provide both theoretical and empirical analysis of UDS to study conditions where it works and does not work. Furthermore, we show that by utilizing the optimized reweighting strategy, reward bias introduced in UDS can be reduced and the policy performance bound is improved in our theoretical analysis, which is also verified empirically. We believe that our analysis justifies the effectiveness of such simple methods for using unlabeled data in offline RL and shed light on directions for future work that can better control the reward bias and enjoy better policy performance.

\section*{Acknowledgements}
We thank Kanishka Rao, Julian Ibarz, other members of IRIS at Stanford, RAIL at UC Berkeley and Robotics at Google and Google Research and anonymous reviewers for valuable and constructive feedback on an early version of this manuscript. This research was funded in part by Google, ONR grants N00014-21-1-2685 and N00014-21-1-2838, Intel Corporation and the DARPA Assured Autonomy Program. CF is a CIFAR Fellow in the Learning in Machines and Brains program. 

\bibliography{references}

\begin{thebibliography}{64}
\providecommand{\natexlab}[1]{#1}
\providecommand{\url}[1]{\texttt{#1}}
\expandafter\ifx\csname urlstyle\endcsname\relax
  \providecommand{\doi}[1]{doi: #1}\else
  \providecommand{\doi}{doi: \begingroup \urlstyle{rm}\Url}\fi

\bibitem[Abbeel \& Ng(2004)Abbeel and Ng]{abbeel2004apprenticeship}
Abbeel, P. and Ng, A.~Y.
\newblock Apprenticeship learning via inverse reinforcement learning.
\newblock In \emph{Proceedings of the twenty-first international conference on
  Machine learning}, pp.\ ~1, 2004.

\bibitem[Achiam et~al.(2017)Achiam, Held, Tamar, and
  Abbeel]{achiam2017constrained}
Achiam, J., Held, D., Tamar, A., and Abbeel, P.
\newblock Constrained policy optimization.
\newblock In \emph{Proceedings of the 34th International Conference on Machine
  Learning-Volume 70}, pp.\  22--31. JMLR. org, 2017.

\bibitem[Andrychowicz et~al.(2017)Andrychowicz, Wolski, Ray, Schneider, Fong,
  Welinder, McGrew, Tobin, Abbeel, and Zaremba]{andrychowicz2017hindsight}
Andrychowicz, M., Wolski, F., Ray, A., Schneider, J., Fong, R., Welinder, P.,
  McGrew, B., Tobin, J., Abbeel, P., and Zaremba, W.
\newblock Hindsight experience replay.
\newblock \emph{arXiv preprint arXiv:1707.01495}, 2017.

\bibitem[Cabi et~al.(2019)Cabi, Colmenarejo, Novikov, Konyushkova, Reed, Jeong,
  Zolna, Aytar, Budden, Vecerik, et~al.]{cabi2019scaling}
Cabi, S., Colmenarejo, S.~G., Novikov, A., Konyushkova, K., Reed, S., Jeong,
  R., Zolna, K., Aytar, Y., Budden, D., Vecerik, M., et~al.
\newblock Scaling data-driven robotics with reward sketching and batch
  reinforcement learning.
\newblock \emph{arXiv preprint arXiv:1909.12200}, 2019.

\bibitem[Chebotar et~al.(2021)Chebotar, Hausman, Lu, Xiao, Kalashnikov, Varley,
  Irpan, Eysenbach, Julian, Finn, and Levine]{chebotar2021actionable}
Chebotar, Y., Hausman, K., Lu, Y., Xiao, T., Kalashnikov, D., Varley, J.,
  Irpan, A., Eysenbach, B., Julian, R., Finn, C., and Levine, S.
\newblock Actionable models: Unsupervised offline reinforcement learning of
  robotic skills.
\newblock \emph{arXiv preprint arXiv:2104.07749}, 2021.

\bibitem[Dasari et~al.(2020)Dasari, Ebert, Tian, Nair, Bucher, Schmeckpeper,
  Singh, Levine, and Finn]{dasari2020robonet}
Dasari, S., Ebert, F., Tian, S., Nair, S., Bucher, B., Schmeckpeper, K., Singh,
  S., Levine, S., and Finn, C.
\newblock Robonet: Large-scale multi-robot learning, 2020.

\bibitem[de~Lima \& Krohling(2021)de~Lima and Krohling]{de2021discovering}
de~Lima, L.~M. and Krohling, R.~A.
\newblock Discovering an aid policy to minimize student evasion using offline
  reinforcement learning.
\newblock \emph{arXiv preprint arXiv:2104.10258}, 2021.

\bibitem[Dorfman et~al.(2021)Dorfman, Shenfeld, and Tamar]{dorfman2021offline}
Dorfman, R., Shenfeld, I., and Tamar, A.
\newblock Offline meta reinforcement learning--identifiability challenges and
  effective data collection strategies.
\newblock \emph{Advances in Neural Information Processing Systems}, 34, 2021.

\bibitem[Duan et~al.(2020)Duan, Jia, and Wang]{duan2020minimax}
Duan, Y., Jia, Z., and Wang, M.
\newblock Minimax-optimal off-policy evaluation with linear function
  approximation.
\newblock In \emph{International Conference on Machine Learning}, pp.\
  2701--2709. PMLR, 2020.

\bibitem[Ernst et~al.(2005)Ernst, Geurts, and Wehenkel]{ernst2005tree}
Ernst, D., Geurts, P., and Wehenkel, L.
\newblock Tree-based batch mode reinforcement learning.
\newblock \emph{Journal of Machine Learning Research}, 6:\penalty0 503--556,
  2005.

\bibitem[Eysenbach et~al.(2020)Eysenbach, Geng, Levine, and
  Salakhutdinov]{eysenbach2020rewriting}
Eysenbach, B., Geng, X., Levine, S., and Salakhutdinov, R.
\newblock Rewriting history with inverse rl: Hindsight inference for policy
  improvement.
\newblock \emph{arXiv preprint arXiv:2002.11089}, 2020.

\bibitem[Eysenbach et~al.(2021)Eysenbach, Levine, and
  Salakhutdinov]{eysenbach2021replacing}
Eysenbach, B., Levine, S., and Salakhutdinov, R.
\newblock Replacing rewards with examples: Example-based policy search via
  recursive classification.
\newblock \emph{arXiv preprint arXiv:2103.12656}, 2021.

\bibitem[Finn et~al.(2016{\natexlab{a}})Finn, Levine, and
  Abbeel]{finn2016guided}
Finn, C., Levine, S., and Abbeel, P.
\newblock Guided cost learning: Deep inverse optimal control via policy
  optimization.
\newblock In \emph{International conference on machine learning}, pp.\  49--58.
  PMLR, 2016{\natexlab{a}}.

\bibitem[Finn et~al.(2016{\natexlab{b}})Finn, Tan, Duan, Darrell, Levine, and
  Abbeel]{finn2016deep}
Finn, C., Tan, X.~Y., Duan, Y., Darrell, T., Levine, S., and Abbeel, P.
\newblock Deep spatial autoencoders for visuomotor learning.
\newblock In \emph{2016 IEEE International Conference on Robotics and
  Automation (ICRA)}, pp.\  512--519. IEEE, 2016{\natexlab{b}}.

\bibitem[Fu et~al.(2018{\natexlab{a}})Fu, Luo, and Levine]{AIRLFu2018}
Fu, J., Luo, K., and Levine, S.
\newblock Learning robust rewards with adversarial inverse reinforcement
  learning.
\newblock \emph{International Conference on Learning Representations},
  2018{\natexlab{a}}.

\bibitem[Fu et~al.(2018{\natexlab{b}})Fu, Singh, Ghosh, Yang, and
  Levine]{VICEFu2018}
Fu, J., Singh, A., Ghosh, D., Yang, L., and Levine, S.
\newblock Variational inverse control with events: A general framework for
  data-driven reward definition.
\newblock \emph{Conference on Neural Information Processing Systems},
  2018{\natexlab{b}}.

\bibitem[Fu et~al.(2018{\natexlab{c}})Fu, Singh, Ghosh, Yang, and
  Levine]{fu2018variational}
Fu, J., Singh, A., Ghosh, D., Yang, L., and Levine, S.
\newblock Variational inverse control with events: A general framework for
  data-driven reward definition.
\newblock \emph{arXiv preprint arXiv:1805.11686}, 2018{\natexlab{c}}.

\bibitem[Fu et~al.(2020)Fu, Kumar, Nachum, Tucker, and Levine]{fu2020d4rl}
Fu, J., Kumar, A., Nachum, O., Tucker, G., and Levine, S.
\newblock D4rl: Datasets for deep data-driven reinforcement learning, 2020.

\bibitem[Fujimoto \& Gu(2021)Fujimoto and Gu]{fujimoto2021minimalist}
Fujimoto, S. and Gu, S.~S.
\newblock A minimalist approach to offline reinforcement learning.
\newblock \emph{arXiv preprint arXiv:2106.06860}, 2021.

\bibitem[Fujimoto et~al.(2018)Fujimoto, Meger, and Precup]{fujimoto2018off}
Fujimoto, S., Meger, D., and Precup, D.
\newblock Off-policy deep reinforcement learning without exploration.
\newblock \emph{arXiv preprint arXiv:1812.02900}, 2018.

\bibitem[Ghasemipour et~al.(2021)Ghasemipour, Schuurmans, and
  Gu]{ghasemipour2021emaq}
Ghasemipour, S. K.~S., Schuurmans, D., and Gu, S.~S.
\newblock Emaq: Expected-max q-learning operator for simple yet effective
  offline and online rl.
\newblock In \emph{International Conference on Machine Learning}, pp.\
  3682--3691. PMLR, 2021.

\bibitem[Ho \& Ermon(2016)Ho and Ermon]{GAIL2016Ho}
Ho, J. and Ermon, S.
\newblock Generative adversarial imitation learning.
\newblock \emph{Conference on Neural Information Processing Systems}, 2016.

\bibitem[Jaques et~al.(2019)Jaques, Ghandeharioun, Shen, Ferguson, Lapedriza,
  Jones, Gu, and Picard]{jaques2019way}
Jaques, N., Ghandeharioun, A., Shen, J.~H., Ferguson, C., Lapedriza, A., Jones,
  N., Gu, S., and Picard, R.
\newblock Way off-policy batch deep reinforcement learning of implicit human
  preferences in dialog.
\newblock \emph{arXiv preprint arXiv:1907.00456}, 2019.

\bibitem[Kalashnikov et~al.(2018)Kalashnikov, Irpan, Pastor, Ibarz, Herzog,
  Jang, Quillen, Holly, Kalakrishnan, Vanhoucke,
  et~al.]{kalashnikov2018scalable}
Kalashnikov, D., Irpan, A., Pastor, P., Ibarz, J., Herzog, A., Jang, E.,
  Quillen, D., Holly, E., Kalakrishnan, M., Vanhoucke, V., et~al.
\newblock Scalable deep reinforcement learning for vision-based robotic
  manipulation.
\newblock In \emph{Conference on Robot Learning}, pp.\  651--673. PMLR, 2018.

\bibitem[Kalashnikov et~al.(2021)Kalashnikov, Varley, Chebotar, Swanson,
  Jonschkowski, Finn, Levine, and Hausman]{kalashnikov2021mt}
Kalashnikov, D., Varley, J., Chebotar, Y., Swanson, B., Jonschkowski, R., Finn,
  C., Levine, S., and Hausman, K.
\newblock Mt-opt: Continuous multi-task robotic reinforcement learning at
  scale.
\newblock \emph{Conference on Robot Learning (CoRL)}, 2021.

\bibitem[Kidambi et~al.(2020)Kidambi, Rajeswaran, Netrapalli, and
  Joachims]{kidambi2020morel}
Kidambi, R., Rajeswaran, A., Netrapalli, P., and Joachims, T.
\newblock Morel: Model-based offline reinforcement learning.
\newblock \emph{arXiv preprint arXiv:2005.05951}, 2020.

\bibitem[Konyushkova et~al.(2020)Konyushkova, Zolna, Aytar, Novikov, Reed,
  Cabi, and de~Freitas]{konyushkova2020semi}
Konyushkova, K., Zolna, K., Aytar, Y., Novikov, A., Reed, S., Cabi, S., and
  de~Freitas, N.
\newblock Semi-supervised reward learning for offline reinforcement learning.
\newblock \emph{arXiv preprint arXiv:2012.06899}, 2020.

\bibitem[Kostrikov et~al.(2021)Kostrikov, Fergus, Tompson, and
  Nachum]{kostrikov2021offline}
Kostrikov, I., Fergus, R., Tompson, J., and Nachum, O.
\newblock Offline reinforcement learning with fisher divergence critic
  regularization.
\newblock In \emph{International Conference on Machine Learning}, pp.\
  5774--5783. PMLR, 2021.

\bibitem[Kumar et~al.(2019)Kumar, Fu, Soh, Tucker, and
  Levine]{kumar2019stabilizing}
Kumar, A., Fu, J., Soh, M., Tucker, G., and Levine, S.
\newblock Stabilizing off-policy q-learning via bootstrapping error reduction.
\newblock In \emph{Advances in Neural Information Processing Systems}, pp.\
  11761--11771, 2019.

\bibitem[Kumar et~al.(2020)Kumar, Zhou, Tucker, and
  Levine]{kumar2020conservative}
Kumar, A., Zhou, A., Tucker, G., and Levine, S.
\newblock Conservative q-learning for offline reinforcement learning.
\newblock \emph{arXiv preprint arXiv:2006.04779}, 2020.

\bibitem[Lange et~al.(2012)Lange, Gabel, and Riedmiller]{LangeGR12}
Lange, S., Gabel, T., and Riedmiller, M.~A.
\newblock Batch reinforcement learning.
\newblock In \emph{Reinforcement Learning}, volume~12. Springer, 2012.

\bibitem[Laroche et~al.(2019)Laroche, Trichelair, and
  Des~Combes]{laroche2019safe}
Laroche, R., Trichelair, P., and Des~Combes, R.~T.
\newblock Safe policy improvement with baseline bootstrapping.
\newblock In \emph{International Conference on Machine Learning}, pp.\
  3652--3661. PMLR, 2019.

\bibitem[Lee et~al.(2021)Lee, Lee, and Kim]{lee2021representation}
Lee, B.-J., Lee, J., and Kim, K.-E.
\newblock Representation balancing offline model-based reinforcement learning.
\newblock In \emph{International Conference on Learning Representations}, 2021.
\newblock URL \url{https://openreview.net/forum?id=QpNz8r_Ri2Y}.

\bibitem[Levine et~al.(2020)Levine, Kumar, Tucker, and Fu]{levine2020offline}
Levine, S., Kumar, A., Tucker, G., and Fu, J.
\newblock Offline reinforcement learning: Tutorial, review, and perspectives on
  open problems.
\newblock \emph{arXiv preprint arXiv:2005.01643}, 2020.

\bibitem[Li et~al.(2020)Li, Pinto, and Abbeel]{li2020generalized}
Li, A.~C., Pinto, L., and Abbeel, P.
\newblock Generalized hindsight for reinforcement learning.
\newblock \emph{arXiv preprint arXiv:2002.11708}, 2020.

\bibitem[Li et~al.(2019)Li, Vuong, Liu, Liu, Ciosek, Ross, Christensen, and
  Su]{li2019multi}
Li, J., Vuong, Q., Liu, S., Liu, M., Ciosek, K., Ross, K., Christensen, H.~I.,
  and Su, H.
\newblock Multi-task batch reinforcement learning with metric learning.
\newblock \emph{arXiv preprint arXiv:1909.11373}, 2019.

\bibitem[Lin et~al.(2019)Lin, Baweja, and Held]{lin2019reinforcement}
Lin, X., Baweja, H.~S., and Held, D.
\newblock Reinforcement learning without ground-truth state.
\newblock \emph{arXiv preprint arXiv:1905.07866}, 2019.

\bibitem[Liu et~al.(2019)Liu, Trott, Socher, and Xiong]{liu2019competitive}
Liu, H., Trott, A., Socher, R., and Xiong, C.
\newblock Competitive experience replay.
\newblock \emph{arXiv preprint arXiv:1902.00528}, 2019.

\bibitem[Liu et~al.(2020)Liu, Swaminathan, Agarwal, and
  Brunskill]{liu2020provably}
Liu, Y., Swaminathan, A., Agarwal, A., and Brunskill, E.
\newblock Provably good batch reinforcement learning without great exploration.
\newblock \emph{arXiv preprint arXiv:2007.08202}, 2020.

\bibitem[Mitchell et~al.(2021)Mitchell, Rafailov, Peng, Levine, and
  Finn]{mitchell2021offline}
Mitchell, E., Rafailov, R., Peng, X.~B., Levine, S., and Finn, C.
\newblock Offline meta-reinforcement learning with advantage weighting.
\newblock In \emph{International Conference on Machine Learning}, pp.\
  7780--7791. PMLR, 2021.

\bibitem[Ng \& Russell(2000)Ng and Russell]{ng2000irl}
Ng, A.~Y. and Russell, S.~J.
\newblock Algorithms for inverse reinforcement learning.
\newblock In \emph{Proceedings of the Seventeenth International Conference on
  Machine Learning}, ICML '00, 2000.

\bibitem[Pomerleau(1988)]{pomerleau1988alvinn}
Pomerleau, D.~A.
\newblock Alvinn: an autonomous land vehicle in a neural network.
\newblock In \emph{Proceedings of the 1st International Conference on Neural
  Information Processing Systems}, pp.\  305--313, 1988.

\bibitem[Rafailov et~al.(2021)Rafailov, Yu, Rajeswaran, and
  Finn]{Rafailov2020LOMPO}
Rafailov, R., Yu, T., Rajeswaran, A., and Finn, C.
\newblock Offline reinforcement learning from images with latent space models.
\newblock \emph{Learning for Decision Making and Control (L4DC)}, 2021.

\bibitem[Riedmiller(2005)]{riedmiller2005neural}
Riedmiller, M.
\newblock Neural fitted q iteration--first experiences with a data efficient
  neural reinforcement learning method.
\newblock In \emph{European Conference on Machine Learning}, pp.\  317--328.
  Springer, 2005.

\bibitem[Ross \& Bagnell(2012)Ross and Bagnell]{RossB12}
Ross, S. and Bagnell, D.
\newblock Agnostic system identification for model-based reinforcement
  learning.
\newblock In \emph{ICML}, 2012.

\bibitem[Shortreed et~al.(2011)Shortreed, Laber, Lizotte, Stroup, Pineau, and
  Murphy]{shortreed2011informing}
Shortreed, S.~M., Laber, E., Lizotte, D.~J., Stroup, T.~S., Pineau, J., and
  Murphy, S.~A.
\newblock Informing sequential clinical decision-making through reinforcement
  learning: an empirical study.
\newblock \emph{Machine learning}, 84\penalty0 (1-2):\penalty0 109--136, 2011.

\bibitem[Singh et~al.(2019)Singh, Yang, Hartikainen, Finn, and
  Levine]{singh2019end}
Singh, A., Yang, L., Hartikainen, K., Finn, C., and Levine, S.
\newblock End-to-end robotic reinforcement learning without reward engineering.
\newblock \emph{arXiv preprint arXiv:1904.07854}, 2019.

\bibitem[Singh et~al.(2020)Singh, Yu, Yang, Zhang, Kumar, and
  Levine]{singh2020cog}
Singh, A., Yu, A., Yang, J., Zhang, J., Kumar, A., and Levine, S.
\newblock Cog: Connecting new skills to past experience with offline
  reinforcement learning.
\newblock \emph{arXiv preprint arXiv:2010.14500}, 2020.

\bibitem[Sun et~al.(2019)Sun, Li, Liu, Lin, and Zhou]{sun2019policy}
Sun, H., Li, Z., Liu, X., Lin, D., and Zhou, B.
\newblock Policy continuation with hindsight inverse dynamics.
\newblock \emph{arXiv preprint arXiv:1910.14055}, 2019.

\bibitem[Swazinna et~al.(2020)Swazinna, Udluft, and
  Runkler]{swazinna2020overcoming}
Swazinna, P., Udluft, S., and Runkler, T.
\newblock Overcoming model bias for robust offline deep reinforcement learning.
\newblock \emph{arXiv preprint arXiv:2008.05533}, 2020.

\bibitem[Wang et~al.(2018)Wang, Zhang, He, and Zha]{Wang2018SupervisedRL}
Wang, L., Zhang, W., He, X., and Zha, H.
\newblock Supervised reinforcement learning with recurrent neural network for
  dynamic treatment recommendation.
\newblock \emph{Proceedings of the 24th ACM SIGKDD International Conference on
  Knowledge Discovery \& Data Mining}, 2018.

\bibitem[Wu et~al.(2019)Wu, Tucker, and Nachum]{wu2019behavior}
Wu, Y., Tucker, G., and Nachum, O.
\newblock Behavior regularized offline reinforcement learning.
\newblock \emph{arXiv preprint arXiv:1911.11361}, 2019.

\bibitem[Xie et~al.(2018)Xie, Singh, Levine, and Finn]{xie2018few}
Xie, A., Singh, A., Levine, S., and Finn, C.
\newblock Few-shot goal inference for visuomotor learning and planning.
\newblock In \emph{Conference on Robot Learning}, pp.\  40--52. PMLR, 2018.

\bibitem[Xie et~al.(2019)Xie, Ebert, Levine, and Finn]{xie2019improvisation}
Xie, A., Ebert, F., Levine, S., and Finn, C.
\newblock Improvisation through physical understanding: Using novel objects as
  tools with visual foresight.
\newblock \emph{Robotics: Science and Systems (RSS)}, 2019.

\bibitem[Yang \& Nachum(2021)Yang and Nachum]{yang2021representation}
Yang, M. and Nachum, O.
\newblock Representation matters: Offline pretraining for sequential decision
  making.
\newblock \emph{arXiv preprint arXiv:2102.05815}, 2021.

\bibitem[Yang et~al.(2021)Yang, Levine, and Nachum]{yang2021trail}
Yang, M., Levine, S., and Nachum, O.
\newblock Trail: Near-optimal imitation learning with suboptimal data.
\newblock \emph{arXiv preprint arXiv:2110.14770}, 2021.

\bibitem[Yu et~al.(2020{\natexlab{a}})Yu, Quillen, He, Julian, Hausman, Finn,
  and Levine]{yu2020meta}
Yu, T., Quillen, D., He, Z., Julian, R., Hausman, K., Finn, C., and Levine, S.
\newblock Meta-world: A benchmark and evaluation for multi-task and meta
  reinforcement learning.
\newblock In \emph{Conference on Robot Learning}, pp.\  1094--1100. PMLR,
  2020{\natexlab{a}}.

\bibitem[Yu et~al.(2020{\natexlab{b}})Yu, Quillen, He, Julian, Hausman, Finn,
  and Levine]{yu2020metaworld}
Yu, T., Quillen, D., He, Z., Julian, R., Hausman, K., Finn, C., and Levine, S.
\newblock Meta-world: A benchmark and evaluation for multi-task and meta
  reinforcement learning.
\newblock In \emph{Conference on Robot Learning}, pp.\  1094--1100. PMLR,
  2020{\natexlab{b}}.

\bibitem[Yu et~al.(2020{\natexlab{c}})Yu, Thomas, Yu, Ermon, Zou, Levine, Finn,
  and Ma]{yu2020mopo}
Yu, T., Thomas, G., Yu, L., Ermon, S., Zou, J., Levine, S., Finn, C., and Ma,
  T.
\newblock Mopo: Model-based offline policy optimization.
\newblock \emph{arXiv preprint arXiv:2005.13239}, 2020{\natexlab{c}}.

\bibitem[Yu et~al.(2021{\natexlab{a}})Yu, Kumar, Chebotar, Hausman, Levine, and
  Finn]{yu2021conservative}
Yu, T., Kumar, A., Chebotar, Y., Hausman, K., Levine, S., and Finn, C.
\newblock Conservative data sharing for multi-task offline reinforcement
  learning.
\newblock \emph{arXiv preprint arXiv:2109.08128}, 2021{\natexlab{a}}.

\bibitem[Yu et~al.(2021{\natexlab{b}})Yu, Kumar, Rafailov, Rajeswaran, Levine,
  and Finn]{yu2021combo}
Yu, T., Kumar, A., Rafailov, R., Rajeswaran, A., Levine, S., and Finn, C.
\newblock Combo: Conservative offline model-based policy optimization.
\newblock \emph{arXiv preprint arXiv:2102.08363}, 2021{\natexlab{b}}.

\bibitem[Zhan et~al.(2021)Zhan, Xu, Zhang, Huo, Zhu, Yin, and
  Zheng]{zhan2021deepthermal}
Zhan, X., Xu, H., Zhang, Y., Huo, Y., Zhu, X., Yin, H., and Zheng, Y.
\newblock Deepthermal: Combustion optimization for thermal power generating
  units using offline reinforcement learning.
\newblock \emph{arXiv preprint arXiv:2102.11492}, 2021.

\bibitem[Zhou et~al.(2020)Zhou, Bajracharya, and Held]{zhou2020plas}
Zhou, W., Bajracharya, S., and Held, D.
\newblock Plas: Latent action space for offline reinforcement learning.
\newblock \emph{arXiv preprint arXiv:2011.07213}, 2020.

\bibitem[Ziebart et~al.(2008)Ziebart, Maas, Bagnell, and
  Dey]{ziebart2008maximum}
Ziebart, B.~D., Maas, A.~L., Bagnell, J.~A., and Dey, A.~K.
\newblock Maximum entropy inverse reinforcement learning.
\newblock In \emph{Aaai}, volume~8, pp.\  1433--1438. Chicago, IL, USA, 2008.

\end{thebibliography}
\bibliographystyle{icml2022}

\newpage
\appendix
\onecolumn

\section{\rebuttal{Proofs for Theoretical Analysis of UDS and Optimized Reweighting Schemes}}
In this section, we will theoretically analyze UDS and other reweighting schemes to better understand when these approaches can perform well. We will first discuss our notation, then present our theoretical results, then discuss the intuitive explanations of these results, and finally, provide proofs of the theoretical results.

\subsection{\rebuttal{Notation and Assumptions}}
\label{app:notation}
Let $\pi_\beta(\ba|\bs)$ denote the behavior policy of the dataset. The dataset, $\mathcal{D}$ 
is generated from the marginal state-action distribution of $\pi_\beta$, i.e., $\mathcal{D} \sim d^{\pi_\beta}(\bs) \pi_\beta(\ba|\bs)$. We define $d^{\pi}_{\mathcal{D}}$ as the state marginal distribution introduced by the dataset $\mathcal{D}$ under $\pi$. For our analysis, we will abstract conservative offline RL algorithms into a generic constrained policy optimization problem~\citep{kumar2020conservative}:
\begin{align}
\label{eqn:generic_offline_rl}
    \pi^*(\ba|\bs) := \arg \max_{\pi}~~ \widehat{J}_{\mathcal{D}}(\pi) - \frac{\alpha}{1 - \gamma} D(\pi, \pi_\beta).
\end{align}  
$J_{\mathcal{D}}(\pi)$ denotes the average return of policy $\pi$ in the empirical MDP induced by the transitions in the dataset, and $D(\pi, \pi_\beta)$ denotes a divergence measure (e.g., KL-divergence~\citep{jaques2019way,wu2019behavior}, MMD distance~\citep{kumar2019stabilizing} or $D_{\text{CQL}}$~\citep{kumar2020conservative}) between the learned policy $\pi$ and the behavior policy $\pi_\beta$ computed in expectation over the marginal state-action distribution induced by the policy in the empirical MDP induced by the dataset:
\begin{align*}
    D(\pi, \pi_\beta) = \mathbb{E}_{\bs \sim \widehat{d}^{\pi}_\mathcal{D}}\left[ D(\pi(\cdot|\bs), \pi_\beta(\cdot|\bs) \right].
\end{align*}
Let $D_\text{CQL}(p, q)$ denote the following distance between two distributions $p(\bx)$ and $q(\bx)$ with equal support $\mathcal{X}$:
\begin{equation*}
    D_\text{CQL}(p, q) := \sum_{\bx \in \mathcal{X}} p(\bx) \left(\frac{p(\bx)}{q(\bx)} - 1 \right).
\end{equation*}
Unless otherwise mentioned, we will drop the subscript ``CQL'' from $D_\text{CQL}$ and use $D$ and $D_\text{CQL}$ interchangeably. Prior works~\citep{kumar2020conservative,yu2021conservative} have shown that the optimal policy $\pi^*$ that optimizes Equation~\ref{eqn:generic_offline_rl} attains a high probability safe-policy improvement guarantee, i.e., $J(\pi^*) \geq J(\pi_\beta) - \zeta$, where $\zeta$ is:
\begin{align}
    \label{eqn:single_task_guarantee}
    \zeta =  \mathcal{O}\left(\frac{1}{(1 - \gamma)^2}\right) \mathbb{E}_{\bs \sim d^{\pi^*}_{\mathcal{D}}}\left[\sqrt{\frac{D_{\text{CQL}}(\pi^*, \pi_\beta)(\bs) + 1}{|\mathcal{D}(\bs)|}} \right] - \frac{\alpha}{1 - \gamma} D(\pi^*, \pi_\beta).
\end{align}
The first term in Equation~\ref{eqn:single_task_guarantee} corresponds to the decrease in performance due to sampling error and this term is high when the single-task optimal policy $\pi^*_i$ visits rarely observed states in the dataset $\mathcal{D}_i$ and/or when the divergence from the behavior policy $\pi_\beta$ is higher under the states visited by the single-task policy $\bs \sim d^{\pi^*_i}_{\mathcal{D}_i}$. We will show that UDS and CDS+UDS enjoy safe policy improvement. In our analysis, we assume $r(\bs, \ba) \in [0, 1]$. Finally, as discussed in Section~\ref{sec:prelim}, let $\mathcal{D}^\mathrm{eff}_i$ denote the relabeled dataset for task $i$, which includes both $\mathcal{D}_i$ and the transitions from other tasks relabeled with a $0$ reward.

\textbf{Assumptions.} To prove our theoretical results, following prior work~\citep{kumar2020conservative,yu2021conservative} we assume that the empirical rewards and dynamics concentrate towards their mean.
\begin{assumption}
\label{assumption:conc}
    $\forall~ \bs, \ba$, the following relationships hold with high probability, $\geq 1 - \delta$
    \begin{equation*}
        |\widehat{r}(\bs, \ba) - r(\bs, \ba)| \leq \frac{C_{r, \delta}}{\sqrt{|\mathcal{D}(\bs, \ba)|}}, ~~~ ||\widehat{P}(\bs'|\bs, \ba) - P(\bs'|\bs, \ba)||_{1} \leq \frac{C_{P, \delta}}{\sqrt{|\mathcal{D}(\bs, \ba)|}}.
    \end{equation*}
\end{assumption}
Similar to prior work~\citep{kumar2020conservative,yu2021conservative}, we also make a coverage assumption, i.e., we assume that each state-action pair is observed in the dataset $\mathcal{D}$, but the rewards and transition dynamics are stochastic, so, the occurrence of each state-action pair does not trivially imply good performance. To relax this assumption, we can extend our analysis to function approximation (e.g., linear function approximation~\citep{duan2020minimax}), where such a coverage assumption is only required on all directions of the feature space, and not all state-action pairs. This would not significantly change the analysis, and hence we opt for the simple but, at the same time, an illustrative analysis in a tabular setting here.

\subsection{\rebuttal{Performance Guarantee for UDS}: Proof for Theorem~\ref{prop:uds_ours}}
\label{proof:uds_proof}
We first restate Proposition~\ref{prop:uds_ours} below for convenience, and we then provide a proof of the result.
\begin{theorem}[\textbf{Policy improvement guarantee for UDS}; restated.] 
\label{prop:uds_ours_restated}
Let $\pi^*_\text{UDS}$ denote the policy learned by UDS, and let $\pi^\mathrm{eff}_\beta(\ba|\bs)$ denote the behavior policy for the combined dataset $\mathcal{D}^\mathrm{eff}$. Then with high probability $\geq 1 - \delta$, $\pi^*_\text{UDS}$ is a safe policy improvement over $\pi_\beta^\mathrm{eff}$, i.e.,
\begin{align*}
& J(\pi^*_\text{UDS}) \geq J(\pi_\beta^\mathrm{eff}) - \zeta_\text{err} +  \underbrace{\frac{\alpha}{1 - \gamma} D(\pi^*_\text{UDS}, \pi^\mathrm{eff}_\beta)}_{\text{(c): policy improvement}},\\
 ~\text{where:~~}& \zeta_\text{err} = \underbrace{\frac{\sum_{\bs, \ba} \left( \widehat{d}^{\behavior^\mathrm{eff}}(\bs, \ba) - \widehat{d}^{\pi^*_\text{UDS}}(\bs, \ba)\right)  \cdot (1 - f(\bs, \ba)) \cdot r(\bs, \ba)}{1 - \gamma}}_{ \text{(a): reward bias}} + \underbrace{\mathcal{O}\left(\frac{\gamma}{(1 - \gamma)^2}\right) \left[\sqrt{\frac{D_{\text{CQL}}(\pi^*_\text{UDS}, \pi^\mathrm{eff}_\beta)(\bs)}{|\mathcal{D}^\mathrm{eff}(\bs)|}} \right]}_{\text{(b): sampling error}},
\end{align*}
where we use the notation $f(\bs, \ba) := \frac{|\mathcal{D}_\mathrm{L}(\bs, \ba)|}{|\mathcal{D}^\mathrm{eff} (\bs, \ba)|}$.
\end{theorem}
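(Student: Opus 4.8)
The plan is to track how the return $J(\pi^*_\text{UDS})$ in the true MDP relates to the empirical return that UDS actually optimizes, carefully separating the \emph{systematic} reward bias (from deterministically overwriting unlabeled rewards with $0$) from the \emph{statistical} sampling error (concentration of the empirical MDP to the true one). The linchpin observation is that, when transitions are aggregated at a fixed $(\bs,\ba)$, the reward UDS effectively uses is the average of the true reward over the labeled fraction and $0$ over the unlabeled fraction, so that $\widehat{r}_\mathrm{eff}(\bs,\ba) = f(\bs,\ba)\,r(\bs,\ba)$ up to concentration of the labeled portion (Assumption~\ref{assumption:conc}). Hence at every $(\bs,\ba)$ the deterministic reward error is exactly $r(\bs,\ba) - \widehat{r}_\mathrm{eff}(\bs,\ba) = (1-f(\bs,\ba))\,r(\bs,\ba)\ge 0$, which is precisely the quantity entering term (a).

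First I would record, for any policy $\pi$, the identity $\widehat{J}_{\mathcal{D}^\mathrm{eff}}(\pi) - \widehat{J}^\text{UDS}_{\mathcal{D}^\mathrm{eff}}(\pi) = \frac{1}{1-\gamma}\sum_{\bs,\ba}\widehat{d}^{\pi}(\bs,\ba)(1-f(\bs,\ba))r(\bs,\ba)$, where $\widehat{J}$ uses the true reward and $\widehat{J}^\text{UDS}$ the zero-relabeled reward, both in the empirical MDP built from $\mathcal{D}^\mathrm{eff}$. Next I would invoke the optimality of $\pi^*_\text{UDS}$ for the regularized objective of Equation~\ref{eqn:generic_offline_rl}: since $\pi^*_\text{UDS}$ maximizes $\widehat{J}^\text{UDS}_{\mathcal{D}^\mathrm{eff}}(\pi) - \frac{\alpha}{1-\gamma}D(\pi,\pi^\mathrm{eff}_\beta)$ and $D(\pi^\mathrm{eff}_\beta,\pi^\mathrm{eff}_\beta)=0$, comparing against the feasible point $\pi^\mathrm{eff}_\beta$ gives $\widehat{J}^\text{UDS}_{\mathcal{D}^\mathrm{eff}}(\pi^*_\text{UDS}) \ge \widehat{J}^\text{UDS}_{\mathcal{D}^\mathrm{eff}}(\pi^\mathrm{eff}_\beta) + \frac{\alpha}{1-\gamma}D(\pi^*_\text{UDS},\pi^\mathrm{eff}_\beta)$, the source of the policy-improvement term (c).

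I would then chain these together: lower-bound $J(\pi^*_\text{UDS})$ by $\widehat{J}_{\mathcal{D}^\mathrm{eff}}(\pi^*_\text{UDS})$ minus statistical error, rewrite $\widehat{J}_{\mathcal{D}^\mathrm{eff}}(\pi^*_\text{UDS}) = \widehat{J}^\text{UDS}_{\mathcal{D}^\mathrm{eff}}(\pi^*_\text{UDS}) + (\text{bias at } \pi^*_\text{UDS})$ via the identity, apply the optimality inequality, and finally convert $\widehat{J}^\text{UDS}_{\mathcal{D}^\mathrm{eff}}(\pi^\mathrm{eff}_\beta)$ back to $J(\pi^\mathrm{eff}_\beta)$ by subtracting the bias at $\pi^\mathrm{eff}_\beta$ plus a second statistical-error term. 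The two bias contributions then combine into $-\frac{1}{1-\gamma}\sum_{\bs,\ba}(\widehat{d}^{\pi^\mathrm{eff}_\beta}(\bs,\ba)-\widehat{d}^{\pi^*_\text{UDS}}(\bs,\ba))(1-f(\bs,\ba))r(\bs,\ba)$, which is exactly $-(\text{a})$; note that the behavior-policy evaluation $\widehat{J}(\pi^\mathrm{eff}_\beta)$ concentrates tightly because its visitation matches the data (density ratio $\approx 1$, so no $D_\text{CQL}$ factor survives), leaving only the $\pi^*_\text{UDS}$ contribution in the stated form of term (b).

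The main obstacle is controlling the statistical sampling error to obtain the exact form of term (b). Here I would reuse the concentration machinery underlying Equation~\ref{eqn:single_task_guarantee}: under Assumption~\ref{assumption:conc} the empirical reward and transition kernels concentrate at rate $1/\sqrt{|\mathcal{D}^\mathrm{eff}(\bs,\ba)|}$, and propagating the transition error through the value function over the horizon yields the $\frac{\gamma}{(1-\gamma)^2}$ prefactor. The delicate step is bundling these per-$(\bs,\ba)$ errors, weighted by the visitation $\widehat{d}^{\pi^*_\text{UDS}}$ and by the density ratio against $\pi^\mathrm{eff}_\beta$, into the compact $\sqrt{D_{\text{CQL}}(\pi^*_\text{UDS},\pi^\mathrm{eff}_\beta)(\bs)/|\mathcal{D}^\mathrm{eff}(\bs)|}$ expression through a Cauchy--Schwarz (Jensen) argument over the state marginal. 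Crucially, the denominator is $|\mathcal{D}^\mathrm{eff}(\bs)|$ rather than $|\mathcal{D}_\mathrm{L}(\bs)|$, which is exactly the mechanism by which pooling unlabeled data shrinks term (b) at the cost of the bias in term (a).
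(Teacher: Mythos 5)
Your proposal is correct and follows essentially the same route as the paper's proof: the same three-term decomposition of $J(\pi^*_\text{UDS}) - J(\pi^\mathrm{eff}_\beta)$, the same identification of the effective reward as $f(\bs,\ba)\,r(\bs,\ba)$ up to reward concentration, the same appeal to optimality of $\pi^*_\text{UDS}$ under the regularized empirical objective to extract term (c), and the same CQL-style concentration argument (Assumption on empirical rewards/dynamics, horizon propagation, Cauchy--Schwarz over the state marginal) to produce term (b), with the two bias contributions combining into term (a). The only cosmetic difference is how the behavior policy's own sampling error is absorbed: you argue its density ratio is essentially one so no $D_\text{CQL}$ factor survives, while the paper simply bounds it by twice the corresponding expression for $\pi^*_\text{UDS}$; both are swallowed by the big-O in term (b).
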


\begin{proof}
We start with the loss decomposition of the improvement of the learned policy relative to the behavior policy with the affine transformation $g$:  
\begin{align*}
    J(\pi) - J(\pi_\beta) := \underbrace{J(\pi) - \widehat{J}(\pi)}_{(i)} + \underbrace{\widehat{J}(\pi)  - \widehat{J}(\pi_\beta)}_{(ii)} + \underbrace{\widehat{J}(\pi_\beta) - {J}(\pi_\beta)}_{(iii)}.
\end{align*}

Now we will discuss how to bound each of the terms: terms (i) and (ii) correspond to the divergence between the empirical policy return and the actual return. While usually, this difference depends on the sampling error and distributional shift, in our case, it additionally depends on the reward bias induced on the unlabeled data and the transformation $g$. We first discuss the terms that contribute to this reward bias.

\textbf{Bounding the reward bias.} Denote the effective reward of a particular transition $(\bs, \ba, r, \bs') \in \mathcal{D}^\mathrm{eff}$, as $\widehat{r}^\mathrm{eff}$, which considers contributions from both the reward $\widehat{r}(\bs, \ba)$ observed in dataset $\mathcal{D}_\mathrm{L}$, and the contribution of $0$ reward from the relabeled dataset:
\begin{align}
\label{eqn:relabeled_reward}
    \widehat{r}^\mathrm{eff}(\bs, \ba) = \frac{|\mathcal{D}(\bs, \ba)| \cdot \widehat{r}(\bs, \ba) + |\mathcal{D}^\mathrm{eff}(\bs, \ba) \setminus \mathcal{D}(\bs, \ba)| \cdot 0}{|\mathcal{D}^\mathrm{eff}(\bs,\ba)|}
\end{align}
Define $f(\bs, \ba) := \frac{|\mathcal{D}(\bs, \ba)|}{|\mathcal{D}^\mathrm{eff}(\bs, \ba)|}$ for notation compactness. Equation~\ref{eqn:relabeled_reward} can then be used to derive the following difference against the true rewards:
\begin{align}
\label{eqn:upper_bound_reward_bias}
    \widehat{r}^\mathrm{eff}(\bs, \ba) &- r(\bs, \ba) = f(\bs, \ba) \left( \widehat{r}(\bs, \ba) - r(\bs, \ba) \right) + (1 - f(\bs, \ba)) \cdot (0 - r(\bs, \ba)) \\
    &\leq f(\bs, \ba) \cdot \frac{C_{r, \delta}}{\sqrt{|\mathcal{D}(\bs, \ba)|}} - (1 - f(\bs, \ba)) \cdot r(\bs, \ba),
\end{align}
where the last step follows from the fact that the ground-truth reward $r(\bs, \ba) \in [0, 1]$. Now, we lower bound the reward bias as follows:
\begin{align}
    \label{eqn:lower_bound_reward_bias}
        \widehat{r}^\mathrm{eff}(\bs, \ba) + &- r(\bs, \ba) = f(\bs, \ba) \cdot \left(\widehat{r}(\bs, \ba) - r(\bs, \ba) \right) + (1 - f(\bs, \ba)) \cdot (- r(\bs, \ba)) \\
        &\geq - f(\bs, \ba) \cdot \frac{C_{r, \delta}}{\sqrt{|\mathcal{D}(\bs, \ba)|}} - (1 - f(\bs, \ba)) \cdot r(\bs, \ba), \nonumber
\end{align}
where the last step follows from the fact that $r(\bs, \ba) \leq 1$.

\textbf{Upper bounding $\widehat{J}(\pi) - J(\pi)$.} Next, using the upper and lower bounds on the reward bias, we now derive an upper bound on the difference between the value of a policy computed under the empirical MDP and the actual MDP. To compute this difference, we follow the following steps
\begin{align}
\label{eqn:decomposition}
    &\widehat{J}(\pi) - J(\pi) = \frac{1}{1 - \gamma} \sum_{\bs, \ba} \left(\widehat{d}^\pi_{\mathcal{D}^\mathrm{eff}}(\bs) \pi(\ba|\bs) \widehat{r}^\mathrm{eff}(\bs, \ba) - d^\pi(\bs) \pi(\ba|\bs) r(\bs, \ba)\right)\\ 
    &= \frac{1}{1 - \gamma} \underbrace{\sum_{\bs,\ba} \widehat{d}^\pi_{\mathcal{D}^\mathrm{eff}}(\bs) \pi(\ba|\bs) \left(\widehat{r}^\mathrm{eff}(\bs, \ba) - r(\bs, \ba)\right)}_{:= \Delta_1} + \frac{1}{1 - \gamma} \underbrace{\sum_{\bs, \ba}\left(\widehat{d}^\pi_{\mathcal{D}^\mathrm{eff}}(\bs) - d^\pi(\bs)\right) \pi(\ba|\bs) r(\bs, \ba)}_{:= \Delta_2} \nonumber
\end{align}
Following \citet{kumar2020conservative} (Theorem 3.6), we can bound the second term $\Delta_2$ using:
\begin{align}
\label{eqn:delta_2_bound}
    \left|\Delta_2\right| \leq \frac{\gamma C_{P, \delta}}{1 - \gamma} \mathbb{E}_{\bs \sim \widehat{d}^\pi_{\mathcal{D}^\mathrm{eff}}(\bs)}\left[ \frac{\sqrt{|\mathcal{A}|}}{\sqrt{|\mathcal{D}^\mathrm{eff}(\bs)|}} \sqrt{ D(\policy, \widehat{\pi}^\mathrm{eff}_\beta)(\bs) + 1} \right].
\end{align}
To upper bound $\Delta_1$, we utilize the reward upper bound from Equation~\ref{eqn:upper_bound_reward_bias}:
\begin{align}
    \Delta_1 &= \sum_{\bs} \widehat{d}^\pi_{\mathcal{D}^\mathrm{eff}}(\bs) \left( \sum_{\ba} \pi(\ba|\bs) \left(\widehat{r}^\mathrm{eff}(\bs, \ba) - r(\bs, \ba)\right) \right) \\
    &\leq \underbrace{\sum_\bs \widehat{d}^\pi_{\mathcal{D}^\mathrm{eff}}(\bs) \sum_{\ba} f(\bs, \ba) \frac{C_{r, \delta}}{\sqrt{|\mathcal{D}(\bs)|}} \frac{\pi(\ba|\bs)}{\sqrt{\hatbehavior(\ba|\bs)}}}_{\:= \Delta'_1} - \underbrace{\sum_{\bs, \ba}  \widehat{d}^\pi_{\mathcal{D}^\mathrm{eff}}(\bs) \pi(\ba|\bs) \left[ (1 - f(\bs, \ba)) \cdot r(\bs, \ba) \right]}_{:=\Delta_4}.  
\end{align}
Combining the results so far, we obtain, for any policy $\pi$:
\begin{align}
\label{eqn:sampling_error_upper_bound}
    J(\pi) &\geq \widehat{J}(\pi) - \frac{|\Delta_2|}{1 - \gamma} - \frac{|\Delta_1'|}{1 - \gamma} + \frac{\Delta_4}{1 - \gamma}.
\end{align}

\textbf{Lower bounding $\widehat{J}(\pi) - J(\pi)$.} To lower bound this quantity, we follow the step shown in Equation~\ref{eqn:decomposition}, and lower bound the term $\Delta_2$ by using the negative of the RHS of Equation~\ref{eqn:delta_2_bound}, and lower bound $\Delta_1$ by upper bounding its absolute value as shown below:
\begin{align}
    &\Delta_1 =  \sum_{\bs} \widehat{d}^\pi_{\mathcal{D}^\mathrm{eff}}(\bs) \left( \sum_{\ba} \pi(\ba|\bs) \left(\widehat{r}^\mathrm{eff}(\bs, \ba) - r(\bs, \ba)\right) \right) \\
    &\geq \underbrace{\sum_\bs \widehat{d}^\pi_{\mathcal{D}^\mathrm{eff}}(\bs) \sum_{\ba} f(\bs, \ba) \frac{C_{r, \delta}}{\sqrt{|\mathcal{D}(\bs)|}} \frac{\pi(\ba|\bs)}{\sqrt{\hatbehavior(\ba|\bs)}}}_{\:= \Delta'_1} + \sum_{\bs} \widehat{d}^\pi_{\mathcal{D}^\mathrm{eff}}(\bs) \sum_{\ba} \pi(\ba|\bs) \cdot (1 - f(\bs, \ba)) r(\bs, \ba).  
\end{align}
This gives rise to the complete lower bound:
\begin{align}
\label{eqn:sampling_error_lower_bound}
   \widehat{J}(\pi) &\geq J(\pi) - \frac{|\Delta_2|}{1 - \gamma} - \frac{1}{1-\gamma} \sum_{\bs, \ba} \widehat{d}^\pi_{\mathcal{D}^\mathrm{eff}_i}(\bs) \pi(\ba|\bs) (1 - f(\bs, \ba)) r(\bs, \ba)  -  \frac{\Delta'_1}{1 - \gamma}.
\end{align}

\textbf{Policy improvement term (ii).} Finally, the missing piece that needs to be bounded is the policy improvement term (ii) in the decomposition of $J(\pi) - J(\pi_\beta)$. Utilizing the abstract form of offline RL (Equation~\ref{eqn:generic_offline_rl}, we note that term (ii) is lower bounded as:
\begin{align}
\label{eqn:lower_bound_on_improvement}
    \text{term (ii)} \geq \frac{\alpha}{1 - \gamma} D(\pi, \pi_\beta).
\end{align}

\textbf{Putting it all together.} To obtain the final expression of Proposition~\ref{prop:uds_ours}, we put all the parts together, and include some simplifications to obtain the final expression. The bound we show is relative to the effective behavior policy $\pi^\mathrm{eff}_\beta$. Applying Equation~\ref{eqn:sampling_error_lower_bound} for term (i) on policy $\pi$, Equation~\ref{eqn:lower_bound_on_improvement} for term (ii), and Equation~\ref{eqn:sampling_error_upper_bound} for the behavior policy $\pi^\mathrm{eff}_\beta$, we obtain the following:
\begin{align*}
    &J(\pi) - J(\pi_\beta^\mathrm{eff}) = J(\pi) - \widehat{J}(\pi) + \widehat{J}(\pi) - \widehat{J}(\pi_\beta^\mathrm{eff}) + \widehat{J}(\pi_\beta^\mathrm{eff}) - J(\pi_\beta^\mathrm{eff})\\
    &\geq -\frac{2 \gamma C_{P, \delta}}{(1 - \gamma)^2} \mathbb{E}_{\bs \sim \widehat{d}^\pi_{\mathcal{D}^\mathrm{eff}}(\bs)}\left[ \frac{\sqrt{|\mathcal{A}|}}{\sqrt{|\mathcal{D}^\mathrm{eff}(\bs)|}} \sqrt{ D(\policy, \widehat{\pi}^\mathrm{eff}_\beta)(\bs) + 1} \right] - \frac{2 C_{r, \delta}}{1 - \gamma} \mathbb{E}_{\bs, \ba \sim \widehat{d}^\pi_{\mathcal{D}^\mathrm{eff}}} \left[ \frac{f(\bs, \ba)}{\sqrt{|\mathcal{D}(\bs, \ba)|}} \right]\\
    &~~~~~~~~~~~~~~~~~~~~~~~ - \frac{1}{1 - \gamma} {\left( \mathbb{E}_{\bs, \ba \sim d^{\behavior^\mathrm{eff}}_{\mathcal{D}^\mathrm{eff}}}\left[\left(1 - f(\bs, \ba)\right) r(\bs, \ba) \right] \right) } + \frac{1}{1 - \gamma} \mathbb{E}_{\bs, \ba \sim \widehat{d}^\pi_{\mathcal{D}^\mathrm{eff}}} \left[ (1 - f(\bs, \ba)) \cdot r(\bs, \ba) \right]\\
    &~~~~~~~~~~~~~~~~~~~~~~~+ \frac{\alpha}{1 - \gamma} D(\pi, \pi_\beta^\mathrm{eff}).
\end{align*}
Note that in the second step above, we upper bound the quantities $\Delta_1'$ and $\Delta_2$ corresponding to $\pi_\beta^\mathrm{eff}$ with twice the expression for policy $\pi$. This is because the effective behavior policy $\pi^\mathrm{eff}_\beta$ consists of a mixture of the original behavior policy $\hatbehavior$ with the additional data, and thus the new effective dataset consists of the original dataset $\mathcal{D}_i$ as its part. Upper bounding it with twice the corresponding term for $\pi$ is a valid bound, though a bit looser, but this bound suffices for our interpretations. 

For our analysis purposes, we will define the suboptimality induced in the bound due to reward bias for a given $u$ and $v$ as:
\begin{align}
\label{eqn:reward_bias}
    \mathrm{RewardBias}(\pi, \pi^\mathrm{eff}_\beta) = -\frac{1}{1 - \gamma} \left[ \mathbb{E}_{\bs, \ba \sim \widehat{d}^\pi_{\mathcal{D}^\mathrm{eff}}} \left[ (1 - f(\bs, \ba)) \cdot r(\bs, \ba) \right] - \left(\mathbb{E}_{\bs, \ba \sim d^{\behavior^\mathrm{eff}}_{\mathcal{D}^\mathrm{eff}}}\left[\left(1 - f(\bs, \ba)\right) r(\bs, \ba) \right] \right) \right]
\end{align}
Thus, we obtain the desired bound in Proposition~\ref{prop:uds_ours}.
\end{proof}

\subsection{When is Reward Bias Small? Proof of Theorem \ref{prop:reward_bias_theorem}}
\label{sec:small_reward_bias}
Next, we wish to understand when the reward bias in Equation~\ref{eqn:reward_bias} is small. Concretely, we wish to search for effective behavior policies such that the dataset induced by them attains a small reward bias. Therefore we provide a proof for Theorem~\ref{prop:reward_bias_theorem} in this section.

\begin{proof}
We can express the reward bias as:
\begin{align*}
    \mathrm{RewardBias}(\pi, \pi^\mathrm{eff}_\beta) &:= - \frac{1}{1 - \gamma} \sum_{\bs, \ba} \left(\widehat{d}^\pi_{\mathcal{D}^\mathrm{eff}}(\bs, \ba) - \widehat{d}^{\behavior^\mathrm{eff}}_{\mathcal{D}^\mathrm{eff}}(\bs, \ba) \right) \cdot (1 - f(\bs, \ba)) \cdot r(\bs, \ba),
\end{align*}
Now, since our goal is to minimize the reward bias with respect to the effective behavior policy, we minimize the expression for suboptimality induced due to reward bias, shown above with respect to $\pi^\mathrm{eff}_\beta$. Before performing the differentiation step, we note the following simplification (we drop the $\mathcal{D}^\mathrm{eff}$ from the notation in $d^{\behavior^{\mathrm{eff}}}_{\mathcal{D}^\mathrm{eff}}$ to make the notation less cluttered below):
\begin{align}
    \min_{\pi^\mathrm{eff}_\beta}~~  \mathrm{RewardBias}(\pi, \pi^\mathrm{eff}_\beta) &:= \min_{\pi^\mathrm{eff}_\beta}~~ -\left(\widehat{J}(\pi) - \widehat{J}(\pi_\beta^\mathrm{eff}) \right) + \frac{1}{(1 - \gamma)} \sum_{\bs, \ba} r(\bs, \ba) f(\bs, \ba)  \left(\widehat{d}^\pi_{\mathcal{D}^\mathrm{eff}}(\bs, \ba) - d^{\behavior^\mathrm{eff}}_{\mathcal{D}^\mathrm{eff}}(\bs, \ba) \right) \nonumber\\
    &= \min_{\pi^\mathrm{eff}_\beta}~~ - \widehat{J}(\pi) + \widehat{J}(\pi^\mathrm{eff}_\beta) + \frac{1}{(1 - \gamma) |\mathcal{D}^\mathrm{eff}|}\sum_{\bs, \ba} |\mathcal{D}(\bs, \ba)| r(\bs, \ba) \left( \frac{\widehat{d}^\pi(\bs, \ba)}{\widehat{d}^{\pi_\beta^\mathrm{eff}}(\bs, \ba)} - 1 \right) \nonumber\\
    &= \min_{\pi^\mathrm{eff}_\beta}~~ \widehat{J}(\pi^\mathrm{eff}_\beta) +  \frac{1}{(1 - \gamma) |\mathcal{D}^\mathrm{eff}|}\sum_{\bs, \ba} |\mathcal{D}(\bs, \ba)| r(\bs, \ba) \left( \frac{\widehat{d}^\pi(\bs, \ba)}{\widehat{d}^{\pi_\beta^\mathrm{eff}}(\bs, \ba)} - 1 \right). \label{eq:final_Expr}
\end{align}
Now, since we can express the entire objective in Equation~\ref{eq:final_Expr} as a function of $\widehat{d}^{\behavior^\mathrm{eff}}$ since $\widehat{J}(\pi^\mathrm{eff}_\beta) = \sum_{\bs, \ba} \widehat{d}^{\behavior^\mathrm{eff}}(\bs, \ba) r(\bs, \ba)$, we can compute and set the derivative of Equation~\ref{eq:final_Expr} with respect to $\widehat{d}^{\behavior^\mathrm{eff}}(\bs, \ba)$ as 0 while adding down the constraints that pertain to the validity of $\pi$. This gives us:
\begin{equation*}
    \widehat{d}^\pi(\bs, \ba) \propto \sqrt{\frac{{|\mathcal{D}(\bs , \ba)| \cdot \widehat{d}^\pi(\bs, \ba)}}{|\mathcal{D}^\mathrm{eff}|}}. 
\end{equation*}
Substituting $|\mathcal{D}(\bs,\ba)| = \widehat{d}_\mathrm{L}(\bs, \ba) \cdot |\mathcal{D}_\mathrm{L}|$ we obtain the desired result.
\end{proof}

\subsection{When is The Bound in Theorem~\ref{prop:uds_ours} Tightest?: Proof of Theorem~\ref{thm:with_all_sources}}
\label{proof:all_sources}
In this section, we will formally state and provide a proof for Theorem~\ref{thm:with_all_sources}. To prove this result, we first compute an upper bound on the error terms (a) and (b) in Theorem~\ref{thm:with_all_sources}, and show that the optimized distribution shown in Theorem~\ref{thm:with_all_sources} emerges as a direct consequence of optimizing this bound. To begin, we compute a different upper bound on sampling error than the one used in Theorem~\ref{thm:with_all_sources}. Defining the sampling error for a policy $\pi$ as the difference in return in the original and the empirical MDPs $\Delta_\text{sampling} = \widehat{J}(\pi) - J(\pi)$, we obtain the following Lemma:

\begin{lemma}[Upper bound on sampling error in terms of $\widehat{d}^\pi(\bs, \ba)$]
We can upper bound sampling error term as follows: 
\begin{align}
\label{eqn:upper_bound_on_sampling}
    \Delta_\text{sampling} \leq \frac{\gamma C_{P, \delta}}{(1 - \gamma)^2 \sqrt{|\mathcal{D}|}} \sum_{\bs, \ba} \frac{\widehat{d}^\pi(\bs, \ba)}{\sqrt{\widehat{d}^{\pi_\beta^\mathrm{eff}}(\bs, \ba)}}.
\end{align}
\end{lemma}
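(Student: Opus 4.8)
The plan is to observe that the quantity $\Delta_\text{sampling} = \widehat{J}(\pi) - J(\pi)$, once the reward-bias contribution is stripped away, is exactly the dynamics-mismatch term $\Delta_2/(1-\gamma)$ from the decomposition in Equation~\ref{eqn:decomposition}. Since the proof of Theorem~\ref{prop:uds_ours} already controls this term through Equation~\ref{eqn:delta_2_bound}, namely $|\Delta_2| \leq \frac{\gamma C_{P, \delta}}{1 - \gamma} \mathbb{E}_{\bs \sim \widehat{d}^\pi_{\mathcal{D}^\mathrm{eff}}}\left[ \frac{\sqrt{|\mathcal{A}|}}{\sqrt{|\mathcal{D}^\mathrm{eff}(\bs)|}} \sqrt{ D(\policy, \pi^\mathrm{eff}_\beta)(\bs) + 1} \right]$, my first step is simply to invoke that bound, so that $\Delta_\text{sampling} \leq |\Delta_2|/(1-\gamma)$ gives the prefactor $\frac{\gamma C_{P,\delta}}{(1-\gamma)^2}$. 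The remaining work is purely algebraic: rewriting the right-hand side, which is phrased in terms of the per-state conservative divergence $D(\pi,\pi^\mathrm{eff}_\beta)(\bs)$ and the per-state counts $|\mathcal{D}^\mathrm{eff}(\bs)|$, into the target form that depends only on the state-action marginals $\widehat{d}^\pi$ and $\widehat{d}^{\pi_\beta^\mathrm{eff}}$.

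\textbf{Key conversion.} The main step is to push the square root inside the sum over actions. I would first substitute $|\mathcal{D}^\mathrm{eff}(\bs)| = |\mathcal{D}^\mathrm{eff}|\,\widehat{d}^{\pi_\beta^\mathrm{eff}}(\bs)$ (the empirical state marginal equals the behavior state marginal) to pull the $\sqrt{|\mathcal{D}^\mathrm{eff}|}$ out front. Then I would use the identity $D(\pi,\pi^\mathrm{eff}_\beta)(\bs) + 1 = \sum_\ba \pi(\ba|\bs)^2/\pi^\mathrm{eff}_\beta(\ba|\bs)$, which follows directly from the definition of $D_\text{CQL}$ since $\sum_\ba \pi(\ba|\bs)\big(\tfrac{\pi(\ba|\bs)}{\pi^\mathrm{eff}_\beta(\ba|\bs)}-1\big) + 1 = \sum_\ba \tfrac{\pi(\ba|\bs)^2}{\pi^\mathrm{eff}_\beta(\ba|\bs)}$. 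The crucial inequality is the subadditivity of the square root on nonnegative terms, $\sqrt{\sum_\ba a_\ba} \leq \sum_\ba \sqrt{a_\ba}$ applied with $a_\ba = \pi(\ba|\bs)^2/\pi^\mathrm{eff}_\beta(\ba|\bs)$, which yields $\sqrt{D(\pi,\pi^\mathrm{eff}_\beta)(\bs)+1} \leq \sum_\ba \pi(\ba|\bs)/\sqrt{\pi^\mathrm{eff}_\beta(\ba|\bs)}$. Dividing by $\sqrt{\widehat{d}^{\pi_\beta^\mathrm{eff}}(\bs)}$ and folding it under the action sum gives $\sum_\ba \pi(\ba|\bs)/\sqrt{\widehat{d}^{\pi_\beta^\mathrm{eff}}(\bs,\ba)}$, after which multiplying by $\widehat{d}^\pi(\bs)$ and summing over $\bs$ collapses the whole expression to $\sum_{\bs,\ba} \widehat{d}^\pi(\bs,\ba)/\sqrt{\widehat{d}^{\pi_\beta^\mathrm{eff}}(\bs,\ba)}$, exactly the sum in Equation~\ref{eqn:upper_bound_on_sampling}.

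\textbf{Obstacle and bookkeeping.} The step I expect to require the most care is the subadditivity conversion, because it is where the bound is deliberately loosened: replacing $\sqrt{\sum_\ba(\cdot)}$ by $\sum_\ba\sqrt{(\cdot)}$ is what transforms a per-state chi-square-type divergence into a clean state-action marginal ratio, and one must verify that every term stays nonnegative and that the normalization of $\widehat{d}^\pi(\bs)$ is handled correctly when recombining into $\widehat{d}^\pi(\bs,\ba) = \widehat{d}^\pi(\bs)\pi(\ba|\bs)$. The final piece is constant reconciliation: the derivation naturally produces the extra factor $\sqrt{|\mathcal{A}|}$ and a denominator $\sqrt{|\mathcal{D}^\mathrm{eff}|}$, and I would absorb $\sqrt{|\mathcal{A}|}$ into the concentration constant $C_{P,\delta}$ and identify $|\mathcal{D}|$ with $|\mathcal{D}^\mathrm{eff}|$, so that the stated bound with prefactor $\frac{\gamma C_{P,\delta}}{(1-\gamma)^2\sqrt{|\mathcal{D}|}}$ follows. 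This establishes Equation~\ref{eqn:upper_bound_on_sampling}, which then serves as the sampling-error input to the optimization problem defining $p^*$ in Theorem~\ref{thm:with_all_sources}.
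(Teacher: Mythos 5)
Your proof is essentially correct in structure, but it takes a genuinely different route from the paper's and lands on a slightly weaker constant. You treat the lemma as a corollary of the already-established bound on $\Delta_2$ (Equation~\ref{eqn:delta_2_bound}) from the proof of Theorem~\ref{prop:uds_ours}, and then convert $\sqrt{|\mathcal{A}|}\sqrt{D_\text{CQL}(\pi,\pi^\mathrm{eff}_\beta)(\bs)+1}$ into the marginal-ratio form via the identity $D_\text{CQL}(\pi,\pi^\mathrm{eff}_\beta)(\bs)+1 = \sum_\ba \pi(\ba|\bs)^2/\pi^\mathrm{eff}_\beta(\ba|\bs)$ and subadditivity of the square root; each of these algebraic steps is valid, as is the recombination $\widehat{d}^\pi(\bs)\pi(\ba|\bs)=\widehat{d}^\pi(\bs,\ba)$ and the identification $|\mathcal{D}^\mathrm{eff}(\bs)| = |\mathcal{D}^\mathrm{eff}|\,\widehat{d}^{\pi_\beta^\mathrm{eff}}(\bs)$. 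The catch is that Equation~\ref{eqn:delta_2_bound} was itself obtained from the per-action ratio form $\sum_\ba \pi(\ba|\bs)/\sqrt{\pi^\mathrm{eff}_\beta(\ba|\bs)}$ by Cauchy--Schwarz (which is precisely what introduced the $\sqrt{|\mathcal{A}|}$), so your argument applies Cauchy--Schwarz and then loosely inverts it: the net result is the lemma with $C_{P,\delta}$ replaced by $\sqrt{|\mathcal{A}|}\,C_{P,\delta}$, which you acknowledge by proposing to absorb $\sqrt{|\mathcal{A}|}$ into the constant. The paper avoids this round trip entirely: it re-derives the bound from scratch by writing $\Delta_\text{sampling}$ in terms of $\lVert \widehat{d}^\pi - d^\pi\rVert_1$, using the resolvent identity $\widehat{d}^\pi - d^\pi = (1-\gamma)(\bar{G}-G)\rho$ with $G=(I-\gamma P^\pi)^{-1}$, $\bar{G}=(I-\gamma \widehat{P}^\pi)^{-1}$ (following the analysis in Theorem 3.6 of \citet{kumar2020conservative} and \citet{achiam2017constrained}) together with Assumption~\ref{assumption:conc}, which directly yields $\frac{\gamma}{1-\gamma}\sum_\bs \widehat{d}^\pi(\bs)\frac{C_{P,\delta}}{\sqrt{|\mathcal{D}(\bs)|}}\sum_\ba \frac{\pi(\ba|\bs)}{\sqrt{\pi_\beta(\ba|\bs)}}$ --- the ratio form with no $|\mathcal{A}|$ factor --- after which the same marginal-recombination algebra you describe finishes the proof. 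Your version buys brevity and reuse of an existing result, and the extra $\sqrt{|\mathcal{A}|}$ is harmless for the lemma's downstream purpose (it gets folded into the constant $C_1$ of Theorem~\ref{thm:with_all_sources}, which then additionally depends on the fixed action-space size); the paper's version buys the stated constant exactly, which is the tighter and cleaner statement.
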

\begin{proof}
For this proof, we will derive a bound on the sampling error, starting from scratch, but this time only in terms of the state-action marginals: 
\begin{align*}
    \Delta_{\text{sampling}} &= \frac{1}{1 - \gamma} \sum_{\bs, \ba} \left( \widehat{d}^\pi(\bs, \ba) - d^\pi(\bs, \ba) \right) \cdot r(\bs, \ba)
\end{align*}
Note that we can bound $\Delta_\text{sampling}$ by upper bounding the total variation between marginal state-action distributions in the empirical and actual MDPs, i.e., $\vert\vert \widehat{d}^\policy - d^\policy \vert\vert_1$, since $|r_M(\bs, \ba)| \leq R_{\max}$. and hence we bound the second term effectively. Our analysis is similar to \citet{achiam2017constrained}. Define, $G = (I - \gamma P^{\policy})^{-1}$ and $\bar{G} = (I - \gamma \widehat{P}^\policy)^{-1}$. Then,
\begin{equation*}
    \widehat{d}^\policy - d^\policy = (1 - \gamma) (\bar{G} - G) \rho,
\end{equation*}
where $\rho(\bs)$ is the initial state distribution. Then we can use the derivation in proof of Theorem 3.6 in \citet{kumar2020conservative} or Equation 21 from \citet{achiam2017constrained}, to bound this difference as
\begin{align*}
    \vert\vert d^\policy - \widehat{d}^\policy||_{1} &\leq \frac{\gamma}{1 - \gamma} \sum_{\bs} \widehat{d}^\policy(\bs) \frac{C_{P, \delta}}{\sqrt{|\mathcal{D}(\bs)|}} \sum_{\ba}  \frac{\policy(\ba|\bs)}{\sqrt{\behavior(\ba|\bs)}}\\
    &\leq \frac{\gamma C_{P, \delta}}{(1 - \gamma) \sqrt{|\mathcal{D}|}} \sum_{\bs, \ba} \frac{\widehat{d}^\pi(\bs, \ba)}{\sqrt{\widehat{d}^{\pi_\beta^\mathrm{eff}}(\bs, \ba)}}.
\end{align*}
Thus the sampling error can be bounded by:
\begin{align*}
    \Delta_\text{sampling} \leq \frac{\gamma C_{P, \delta}}{(1 - \gamma)^2 \sqrt{|\mathcal{D}|}} \sum_{\bs, \ba} \frac{\widehat{d}^\pi(\bs, \ba)}{\sqrt{\widehat{d}^{\pi_\beta^\mathrm{eff}}(\bs, \ba)}},
\end{align*}
which proves the lemma.
\end{proof}

\begin{theorem}[\textbf{Optimized reweighting unlabeled data}]
\label{thm:with_all_sources_restated}
The optimal effective behavior policy that maximizes a lower bound on $J(\pi^\mathrm{eff}_\beta) - \left[(a) + (b) \right]$ in Theorem~\ref{prop:uds_ours} satisfies: $d^{\widehat{\pi}_\beta^\mathrm{eff}}(\bs, \ba) = p^*(\bs, \ba)$, where,
\begin{align*}
    p^* = \arg\min_{p \in \Delta^{|\mathcal{S}||\mathcal{A}|}}&~ \sum_{\bs, \ba} C_1 \frac{\widehat{d}^\pi(\bs, \ba)}{\sqrt{p(\bs, \ba)}} + C_2 |d_\mathrm{L}(\bs, \ba)| \frac{\widehat{d}^\pi(\bs, \ba)}{p(\bs, \ba)}, 
\end{align*}
where $C_1$ and $C_2$ are universal positive constants that depend on the sizes of the labeled and unlabeled datasets are shown in Equation~\ref{eqn:c1_c2}.  
\end{theorem}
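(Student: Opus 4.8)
The plan is to collapse the two error terms (a) and (b) from Theorem~\ref{prop:uds_ours}, together with the effective-behavior return $J(\pi^\mathrm{eff}_\beta)$, into a single objective that depends only on the state-action marginal $p(\bs,\ba) := \widehat{d}^{\pi^\mathrm{eff}_\beta}(\bs,\ba)$, and then read off the reweighting distribution that maximizes the resulting lower bound. Maximizing $J(\pi^\mathrm{eff}_\beta) - [(a)+(b)]$ over the choice of reweighting is equivalent to minimizing $[(a)+(b)] - J(\pi^\mathrm{eff}_\beta)$, so the whole argument reduces to writing each of these pieces as an explicit functional of $p$ and collecting terms.

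First I would handle the sampling error (b). Its form in Theorem~\ref{prop:uds_ours} is stated through $D_\text{CQL}$ and $|\mathcal{D}^\mathrm{eff}(\bs)|$, which is not directly a function of the marginal $p$, so I would instead invoke the alternative bound proved in the preceding Lemma, Equation~\ref{eqn:upper_bound_on_sampling}, which controls the sampling error purely through the marginal as $\frac{\gamma C_{P,\delta}}{(1-\gamma)^2\sqrt{|\mathcal{D}|}} \sum_{\bs,\ba} \widehat{d}^\pi(\bs,\ba)/\sqrt{p(\bs,\ba)}$. This yields the first summand $C_1\, \widehat{d}^\pi(\bs,\ba)/\sqrt{p(\bs,\ba)}$, with $C_1$ absorbing the prefactor $\gamma C_{P,\delta}/((1-\gamma)^2\sqrt{|\mathcal{D}|})$.

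Next I would handle the reward bias (a) jointly with $J(\pi^\mathrm{eff}_\beta)$, reusing the algebra already carried out in the proof of Theorem~\ref{prop:reward_bias_theorem}. There the bias was rewritten as $\widehat{J}(\pi^\mathrm{eff}_\beta) + \frac{1}{(1-\gamma)|\mathcal{D}^\mathrm{eff}|} \sum_{\bs,\ba} |\mathcal{D}(\bs,\ba)|\, r(\bs,\ba)\big(\widehat{d}^\pi(\bs,\ba)/p(\bs,\ba) - 1\big)$, up to the $\pi$-dependent constant $\widehat{J}(\pi)$ that does not involve $p$. Subtracting this from $J(\pi^\mathrm{eff}_\beta) = \sum_{\bs,\ba} p(\bs,\ba) r(\bs,\ba)$, the explicit $\widehat{J}(\pi^\mathrm{eff}_\beta)$ contribution and the additive $-1$ constant drop out of the argmin, leaving a term proportional to $\sum_{\bs,\ba} |\mathcal{D}(\bs,\ba)|\, r(\bs,\ba)\, \widehat{d}^\pi(\bs,\ba)/p(\bs,\ba)$. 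Upper bounding $r(\bs,\ba) \le 1$ (the worst case under $r \in [0,1]$) and substituting $|\mathcal{D}(\bs,\ba)| = |\mathcal{D}_\mathrm{L}|\, d_\mathrm{L}(\bs,\ba)$ produces the second summand $C_2\, |d_\mathrm{L}(\bs,\ba)|\, \widehat{d}^\pi(\bs,\ba)/p(\bs,\ba)$, with $|\mathcal{D}_\mathrm{L}|/((1-\gamma)|\mathcal{D}^\mathrm{eff}|)$ folded into $C_2$. Collecting the two summands and imposing the normalization constraint $p \in \Delta^{|\mathcal{S}||\mathcal{A}|}$ (relaxing the requirement that $p$ be a realizable occupancy measure to merely being a distribution, consistent with the coverage assumption) gives exactly the stated optimization for $p^*$.

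The main obstacle I expect is bookkeeping rather than a conceptual difficulty: I must identify every $p$-independent quantity — in particular $\widehat{J}(\pi)$, the $\widehat{J}(\pi^\mathrm{eff}_\beta)$ that cancels, and the additive constant from the $-1$ factor — so that the argmin is provably unchanged, and I must track the dataset-size-dependent constants $C_1, C_2$ carefully enough to match the formal values in Equation~\ref{eqn:c1_c2}. A secondary subtlety is justifying the substitution of the $D_\text{CQL}$-based sampling-error bound by the marginal-only bound of the Lemma, and confirming that upper bounding $r \le 1$ only loosens, and never reverses, the lower bound on $J(\pi^*_\text{UDS})$ that is being maximized.
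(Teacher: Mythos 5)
Your proposal follows essentially the same route as the paper's proof: you invoke the same marginal-based sampling-error lemma (Equation~\ref{eqn:upper_bound_on_sampling}) to obtain the $C_1 \widehat{d}^\pi(\bs,\ba)/\sqrt{p(\bs,\ba)}$ term, reuse the reward-bias algebra from Theorem~\ref{prop:reward_bias_theorem} so that the $\widehat{J}(\pi_\beta^\mathrm{eff})$ contributions cancel and the $p$-independent constants drop out of the argmin, apply $r(\bs,\ba) \leq 1$ and the substitution $|\mathcal{D}(\bs,\ba)| = |\mathcal{D}_\mathrm{L}|\, d_\mathrm{L}(\bs,\ba)$, and collect the prefactors into $C_1, C_2$ exactly as in Equation~\ref{eqn:c1_c2}. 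The only cosmetic deviation is that you identify $J(\pi_\beta^\mathrm{eff})$ directly with $\sum_{\bs,\ba} p(\bs,\ba)\, r(\bs,\ba)$, whereas the paper more carefully lower-bounds $J(\pi_\beta^\mathrm{eff})$ by $\widehat{J}(\pi_\beta^\mathrm{eff})$ minus an irreducible $\mathcal{O}\left(\sqrt{1/|\mathcal{D}^\mathrm{eff}|}\right)$ term that depends only on the dataset size and not on $p$ — a distinction that does not change the resulting minimizer.
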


\begin{proof}
To prove this result, we will first simplify the expression containing all terms except the policy imnprovement term in Theorem~\ref{thm:with_all_sources}, so that we can then maximize the bound to obtain the statement of our theoretical statement. 
\begin{align*}
    (\bullet) &:= J(\pi^\mathrm{eff}_\beta) - \left[(a) + (b)\right].\geq  J(\pi^\mathrm{eff}_\beta) + \frac{1}{1 - \gamma} \sum_{\bs, \ba} \left(\widehat{d}^\pi(\bs, \ba) - \widehat{d}^{\behavior^\mathrm{eff}}(\bs, \ba) \right) \cdot (1 - f(\bs, \ba)) \cdot r(\bs, \ba) - \Delta_\text{sampling}\\
    & \geq J(\pi^\mathrm{eff}_\beta) + \frac{1}{1 - \gamma} \sum_{\bs, \ba} \left(\widehat{d}^\pi(\bs, \ba) - \widehat{d}^{\behavior^\mathrm{eff}}(\bs, \ba) \right) \cdot (1 - f(\bs, \ba)) \cdot r(\bs, \ba) - \frac{\gamma C_{P, \delta}}{(1 - \gamma)^2 \sqrt{|\mathcal{D}_\mathrm{eff}}|} \sum_{\bs, \ba} \frac{\widehat{d}^\pi(\bs, \ba)}{\sqrt{\widehat{d}^{\pi_\beta^\mathrm{eff}}(\bs, \ba)}} 
\end{align*}
First of all we can lower bound, $J(\pi^\mathrm{eff}_\beta)$ in terms of the return of $\pi^\mathrm{eff}_\beta)$ in the empirical MDP induced by the dataset $\mathcal{D}_\mathrm{eff}$ and an irreducible sampling error term, that grows as $\mathcal{O}\left(\sqrt{1/{|\mathcal{D}^\mathrm{eff}|}}\right)$ and does not depend on the distribution of state-action pairs in the effective dataset, $\widehat{d}^{\pi_\beta^\mathrm{eff}}(\bs, \ba)$, but only depends on its size. Using this result, and by performing algebraic manipulation, we can further simplify this as:
\begin{align*}
    (\bullet) &\geq \cancel{\widehat{J}(\pi^\mathrm{eff}_\beta)} + \widehat{J}(\pi) - \cancel{\widehat{J}(\pi^\mathrm{eff}_\beta)} - \frac{\sum_{\bs, \ba} |\mathcal{D}(\bs, \ba)| r(\bs, \ba) \left( \frac{\widehat{d}^\pi(\bs, \ba)}{\widehat{d}^{\pi_\beta^\mathrm{eff}}(\bs, \ba)} - 1 \right)}{(1 - \gamma) |\mathcal{D}^\mathrm{eff}|} - \frac{\gamma C_{P, \delta}}{(1 - \gamma)^2 \sqrt{|\mathcal{D}_\mathrm{eff}}|} \sum_{\bs, \ba} \frac{\widehat{d}^\pi(\bs, \ba)}{\sqrt{\widehat{d}^{\pi_\beta^\mathrm{eff}}(\bs, \ba)}}\\
    &+ \mathcal{O}\left(\sqrt{\frac{1}{|\mathcal{D}^\mathrm{eff}|}}\right)\\
    & \geq \widehat{J}(\pi) - \frac{1}{(1 - \gamma) |\mathcal{D}^\mathrm{eff}|}\sum_{\bs, \ba} |\mathcal{D}(\bs, \ba)| r(\bs, \ba) \left( \frac{\widehat{d}^\pi(\bs, \ba)}{\widehat{d}^{\pi_\beta^\mathrm{eff}}(\bs, \ba)} - 1 \right) - \frac{\gamma C_{P, \delta}}{(1 - \gamma)^2 \sqrt{|\mathcal{D}_\mathrm{eff}}|} \sum_{\bs, \ba} \frac{\widehat{d}^\pi(\bs, \ba)}{\sqrt{\widehat{d}^{\pi_\beta^\mathrm{eff}}(\bs, \ba)}}\\
    &+ \mathcal{O}\left(\sqrt{\frac{1}{|\mathcal{D}^\mathrm{eff}|}}\right)\\
    & \geq \widehat{J}(\pi) - \frac{1}{(1 - \gamma) |\mathcal{D}^\mathrm{eff}|}\sum_{\bs, \ba} |\mathcal{D}(\bs, \ba)| \left( \frac{\widehat{d}^\pi(\bs, \ba)}{\widehat{d}^{\pi_\beta^\mathrm{eff}}(\bs, \ba)} - 1 \right) - \frac{\gamma C_{P, \delta}}{(1 - \gamma)^2 \sqrt{|\mathcal{D}_\mathrm{eff}}|} \sum_{\bs, \ba} \frac{\widehat{d}^\pi(\bs, \ba)}{\sqrt{\widehat{d}^{\pi_\beta^\mathrm{eff}}(\bs, \ba)}} + \mathcal{O}\left(\sqrt{\frac{1}{|\mathcal{D}^\mathrm{eff}|}}\right),
\end{align*}
where the last inequality follows from the fact that $|r(\bs, \ba)| \leq 1$. Since $|\mathcal{D}^\mathrm{eff}|$ and $\widehat{d}^{\pi_\beta^\mathrm{eff}}$ are decoupled (one is the distribution; other is the size of the dataset), we can optimize over each of them independently, and hence, we find that the distribution that optimizes this bound is given by:
\begin{align*}
    p^* = \arg\min_{p \in \Delta^{|\mathcal{S}||\mathcal{A}|}}&~ \sum_{\bs, \ba} C_1 \frac{\widehat{d}^\pi(\bs, \ba)}{\sqrt{p(\bs, \ba)}} + C_2 |d_\mathrm{L}(\bs, \ba)| \frac{\widehat{d}^\pi(\bs, \ba)}{p(\bs, \ba)}, 
\end{align*}
where:
\begin{align}
\label{eqn:c1_c2}
    C_2 &:= \frac{|\mathcal{D}_\mathrm{L}|}{ (1 - \gamma) |\mathcal{D}_\mathrm{eff}|}, ~~~~~~~~ C_1 :=  \frac{\gamma C_{P, \delta}}{(1 - \gamma)^2 \sqrt{|\mathcal{D}_\mathrm{eff}}|}.
\end{align}
This proves Theorem~\ref{thm:with_all_sources}.
\end{proof}

\section{\rebuttal{Additional empirical analysis of the reason that CDS+UDS and UDS work}}
\label{app:empirical_analysis}

\rebuttal{In this section, we perform an empirical study on the Meta-World domain to better understand the reason that UDS and CDS+UDS work well. Our theoretical analysis suggests that UDS will help the most on domains with limited data or narrow coverage or low data quality. To test these conditions in practice, we perform empirical analysis on two domains as follows.

\subsection{\rebuttal{Meta-World Domains}}
\label{app:mw_analysis}

\begin{table*}[t!]
\centering
\resizebox{\textwidth}{!}{\begin{tabular}{l|l|r|r|r}
\toprule
\textbf{Environment} & \textbf{Dataset type / size} & \textbf{CDS+UDS} & \textbf{UDS} & No Sharing\\
\midrule
& expert / 2k transitions & \textbf{67.6\%} & 58.8\% & 31.3\%\\
Meta-World door open & medium / 2k transitions & 67.3\% & \textbf{74.2\%} & 27.6\%\\
& medium-replay / 152k transitions & \textbf{30.0\%} & 0.0\% & 14.8\%\\
\bottomrule
\end{tabular}}
\caption{\footnotesize \rebuttal{We perform an empirical analysis on the Meta-World \texttt{door open} task where we use varying data quality and dataset size target task \texttt{door open}. We share the same dataset from the other three tasks in the multi-task Meta-World environment, \texttt{door close}, \texttt{drawer open} and \texttt{drawer close} to the target task. The numbers are averaged over three random seeds. CDS+UDS and UDS are able to outperform No Sharing in most of the settings except that UDS fails to achieve non-zero success rate in the medium-replay dataset with a large number of transitions. Such results suggest that CDS+UDS and UDS are robust to the data quality of the target task and work the best in settings where the target task has limited data.}}
\label{tbl:analysis}
\normalsize
\end{table*}

We first choose the \texttt{door open} task with three different combinations of dataset size and data quality of the task-specific data with reward labels:
\begin{itemize}
    \item 2k transitions with the expert-level performance (i.e. \textbf{high-quality data with limited sample size and narrow coverage})
    \item 2k transitions with medium-level performance (i.e. \textbf{medium-quality data with limited sample size and narrow coverage})
    \item a medium-replay dataset with 152k transitions (i.e. \textbf{medium-quality data with sufficient sample size and broad coverage}).
\end{itemize}
 We share the same data from the other three tasks, \texttt{door close}, \texttt{drawer open} and \texttt{drawer close} as in Table~\ref{tbl:gym}, which are . As shown in Table~\ref{tbl:analysis}, both UDS and CDS+UDS are able to outperform No Sharing in the three settings, suggesting that increasing the coverage of the offline data as suggested by our theory does lead to performance boost in wherever we have limited good-quality data (expert), limited medium-quality data (medium) and abundant medium-quality data (medium-replay). It’s worth noting that UDS and CDS+UDS significantly outperform No Sharing in the limited expert and medium data setting whereas in the medium-replay setting with broader coverage, CDS+UDS outperforms No sharing but UDS fails to achieve non-zero success rate. Such results suggest that UDS and CDS+UDS can yield greater benefit when the target task doesn’t have sufficient data and the number of relabeled data is large. The fact that UDS is unable to learn on medium-replay datasets also suggests that data sharing without rewards is less useful in settings where the coverage of the labeled offline data is already quite broad.}

\subsection{\rebuttal{D4RL Hopper Diagnostic Study on Varying Unlabeled Dataset Size}}
\label{app:unlabeled_dataset_size_analysis}

\begin{table*}[t!]
\centering
\scriptsize
\resizebox{\textwidth}{!}{\begin{tabular}{l|l|l|rr|r}
\toprule
 \textbf{Environment} & \textbf{Labeled dataset type / size} & \textbf{Unlabeled dataset type / size}   & \textbf{CDS+UDS}   & \textbf{UDS}          & \textbf{Sharing All (oracle)} \\
 \midrule
 & random / 10k transitions & expert / 10k transitions  & \textbf{10.1} & 10.0 & 71.9\\
 D4RL hopper & random / 10k transitions & expert / 100k transitions  & \textbf{105.8} & 81.8 & 96.3\\
& random / 10k transitions & expert / 1M transitions  & \textbf{102.3} & 97.0 & 102.8\\

\bottomrule
\end{tabular}}
\caption{\footnotesize Ablation study on the unlabeled dataset size ranging from 10k to 1M transitions in the single-task hopper domain. We bold the best method without true reward relabeling.}
\label{tbl:hopper_ablation}
\normalsize
\end{table*}

Following the discussion in Section~\ref{sec:empirical_analysis}, we further study the setting where relabeled data has higher quality than the labeled data in the single-task hopper task by varying the amount of unlabeled data within the range of (10k, 100k, 1M) transitions. We pick the case where labeled data is random and unlabeled data is expert for such ablation study. As shown in Table~\ref{tbl:hopper_ablation}, as the unlabeled (expert) dataset size decreases, the result of UDS drops significantly whereas Sharing All retains a reasonable level of performance. This ablation suggests that as the effective dataset size decreases, the benefit of reducing sampling error is reduced and no longer able to outweigh the reward bias as indicated in our theoretical analysis. Moreover, Table~\ref{tbl:hopper_ablation} also suggests that, in the setting with medium unlabeled dataset size (100k transitions), CDS+UDS is able to prevent the performance drop as seen in UDS and even outperforms Sharing All. However, CDS+UDS cannot successfully tackle the case where there are only 10k unlabeled transitions, suggesting that the reward bias optimized by CDS+UDS is still detrimental in the limited unlabeled data regime.

\subsection{\final{D4RL Hopper Ablation Study on Reward Learning Methods with Varying Labeled Dataset Size and Quality}}
\label{app:reward_learning_ablation}

\final{We performed an ablation that varies the labeled data size (10k \& 20k transitions) and quality (expert / random) for reward learning methods on D4RL hopper, with unlabeled data being 1M \texttt{hopper-medium} transitions.}
\begin{table*}[t!]
\centering
\footnotesize
\begin{tabular}{l|l|l|rrr}
\toprule
\vspace{-0.08cm}
 \textbf{Env} & \textbf{Labeled dataset type / size} & \textbf{Unlabeled dataset type / size}  & \textbf{CDS+UDS}   & \textbf{UDS}          & \textbf{Reward Pred.} \\
 \midrule
 & expert / 10k transitions & medium / 1M transitions & \textbf{78.3}$\pm$ 5.4 & 64.4$\pm$11.7 & 51.7 $\pm$8.4\\
  & expert / 20k transitions & medium / 1M transitions & 95.5$\pm$5.4 & \textbf{99.2}$\pm$3.1 & 96.7$\pm$3.6\\
 hopper & random / 10k transitions & medium / 1M transitions & \textbf{65.8}$\pm$11.3  & 51.9$\pm$2.4 & 33.4$\pm$5.4\\
 \vspace{-0.08cm}
& random / 20k transitions &medium / 1M transitions  & \textbf{69.1}$\pm$4.8 & 59.9$\pm$5.6 & 47.5$\pm$5.9\\
\bottomrule
\end{tabular}
\caption{\footnotesize Ablation study on comparisons between CDS+UDS/UDS and Reward Predictor with varying labeled dataset size and quality in the single-task hopper domain. We bold the best method.}
\label{tab:reward}
\normalsize
\vspace{-0.54cm}
\end{table*}
\final{Observe on the right, as expected, reward learning performs better with more labeled data. Furthermore, while reward learning works well when labeled data is high quality, it fails to perform well when labeled data is of low quality, potentially due to the bias in reward prediction. UDS and CDS+UDS are less sensitive to labeled data quality/size.}

\subsection{\rebuttal{Takeaways from the empirical analysis}}

Given our empirical analysis in Table~\ref{tbl:single_task_analysis}, Table~\ref{tbl:analysis}, Table~\ref{tbl:hopper_ablation} and Table~\ref{tab:reward}, we summarize the applicability of UDS / CDS+UDS under different scenarios for practitioners in Figure~\ref{fig:treeplot} below.

\begin{figure}[ht]
    \centering
    \includegraphics[width=0.95\textwidth]{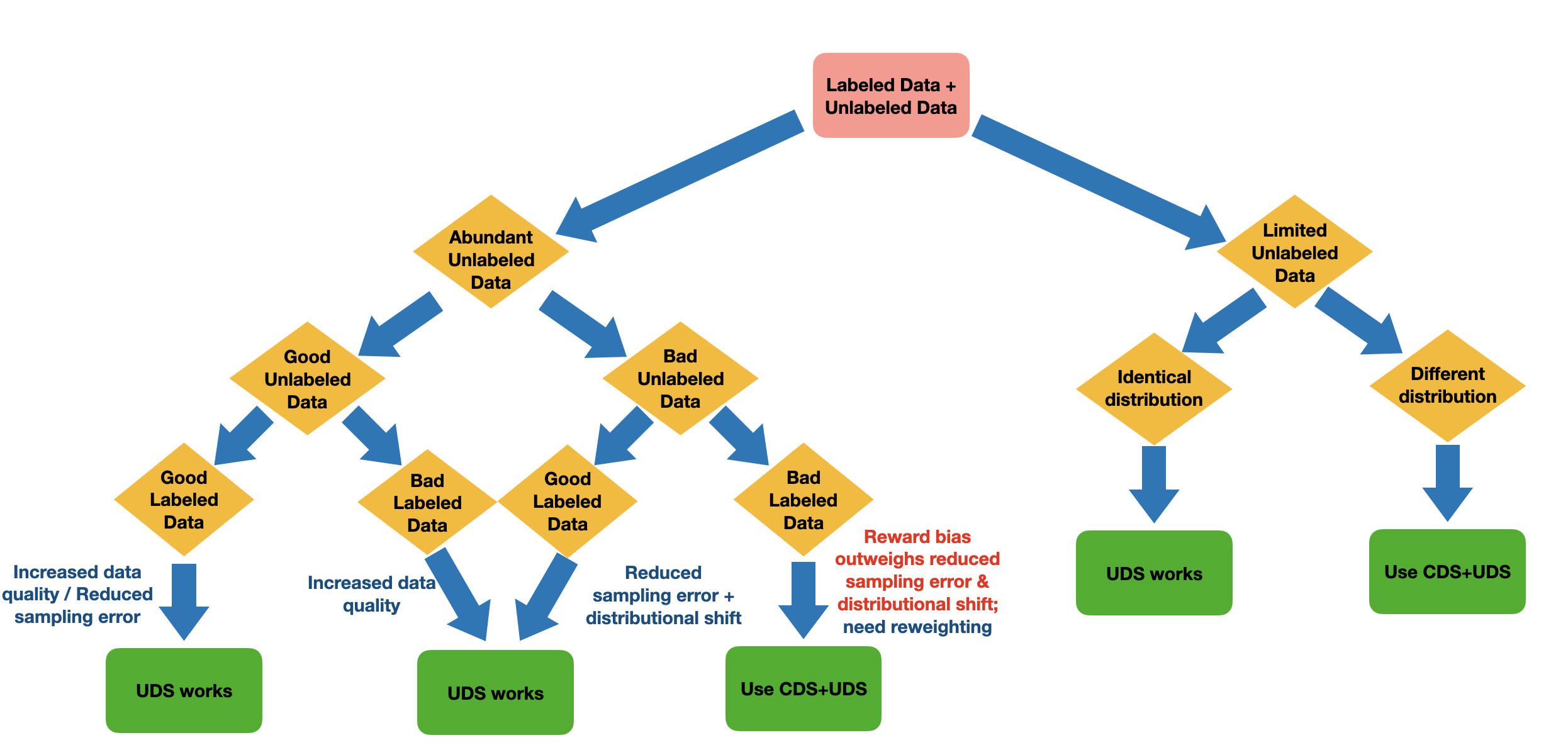}
    \vspace{-0.33cm}
    \caption{\footnotesize  A tree plot that illustrates under which conditions a practitioner should run UDS over apply the optimized reweighting scheme on top of UDS.}
    \vspace{-0.35cm}
    \label{fig:treeplot}
\end{figure}

\section{Additional details on the quality of data shared from other tasks in the multi-task offline RL setting}
\label{app:data_quality}

\begin{table*}[t!]
\centering
\resizebox{0.7\textwidth}{!}{\begin{tabular}{l|l|r}
\toprule
\textbf{Environment} & \textbf{Tasks} & \textbf{Oracle Success Rate of the Shared data}\\
\midrule
&drawer open & 47.4\%\\
& door close & 99.2\%\\
Meta-World& drawer open & 0.1\%\\
& drawer close & 91.6\%\%\\
& \CC \textbf{average} & \CC 59.5\%\\
\midrule
 & medium maze (3 tasks) average  &  4.3\% \\
AntMaze  & large maze (7 tasks) average  & 1.6\% \\
\bottomrule
\end{tabular}}
\caption{\footnotesize Success rate of the data shared from other tasks to the target task determined by the ground-truth multi-task reward function.}
\label{tbl:data_quality}
\normalsize
\end{table*}

We present the success rate of the data shared from other tasks to the target task computed by the oracle multi-task reward function in both the multi-task Meta-World and AntMaze domains in Table~\ref{tbl:data_quality}. Note that the success rate of \texttt{drawer close} and \texttt{door close} are particularly high since for other tasks, the drawer / door is initialized to be closed and therefore the success rate of other task data for these two tasks are almost 100\% as defined by the success condition in the public Meta-World repo. Apart from these two particularly high success rates, the success rates of the shared data are consistently above 0\% across all tasks in both domains. This fact suggests that UDS and CDS+UDS are \emph{not} relabeling with the ground truth reward where the relabeled data are actually all failures but rather performs the conservative bellman backups on relabeled data that is shown to be effective empirically.

\rebuttal{To better understand the performance of UDS under different relabeled data quality, we evaluate the UDS under different success rates of the data relabeled from other tasks in the multi-task Meta-World domain. Specifically, we filter out data shared from other tasks to ensure that the success rates of the relabeled data are 5\%, 50\% and 90\% respectively. We compare the results of UDS on such data compositions to the performance of UDS in Table~\ref{tbl:gym} where the success rate of relabeled data is 59.6\% as shown in Table~\ref{tbl:data_quality}. The full results are in Table~\ref{tbl:data_quality_results}. UDS on relabeled data with 50\% and 90\% success rates achieves similar results compared to original UDS whereas UDS on relabel data with 5\% success rate is significantly worse. Hence, UDS can obtain good results in settings where the relabeled data is of high quality despite incurring high reward bias, but is not helpful in settings where the shared data is of low quality and does not offer much information about solving the target task.}

\begin{table*}[t!]
\centering
\resizebox{\textwidth}{!}{\begin{tabular}{l|l|r|r|r|r}
\toprule
\textbf{Environment} & \textbf{Tasks} & \textbf{UDS} & \textbf{UDS}-5\% relabel success & \textbf{UDS}-50\% relabel success & \textbf{UDS}-90\% relabel success\\
\midrule
&drawer open & 51.9\%$\pm$25.3 & 0.0\%$\pm$0.0\% & 57.3\%$\pm$18.9\%  & 73.3\%$\pm$8.6\%\\
& door close & 12.3\%$\pm$27.6\% & 0.0\%$\pm$0.0\% & 0.0\%$\pm$0.0\% & 0.0\%$\pm$0.0\%\\
Meta-World& drawer open & 61.8\%$\pm$16.3\% & 19.4\%$\pm$27.3\% & 61.0\%$\pm$12.7\% & 56.3\%$\pm$20.3\%\\
& drawer close & 99.6\%$\pm$0.7\% & 66.0\%$\pm$46.7\% & 99.7\%$\pm$0.5\% & 100.0\%$\pm$0.0\%\\
& \CC \textbf{average} & \CC 56.4\%$\pm$12.8\% & \CC 21.4\% $\pm$16.1\% & \CC 54.3\% $\pm$2.0\% & \CC 57.4\%$\pm$3.3\%\\
\bottomrule
\end{tabular}}
\caption{\footnotesize Performance of UDS under different actual success rates of the relabeled data.}
\label{tbl:data_quality_results}
\normalsize
\end{table*}

\section{\rebuttal{Empirical Results of UDS and CDS+UDS in multi-task locomotion domain with dense rewards}}
\label{app:dense_reward}

\rebuttal{In this section, we evaluate UDS and CDS+UDS in the multi-task locomotion setting with dense rewards. We pick the multi-task walker environment as used in prior work~\citep{yu2021conservative}, which consists of three tasks, \texttt{run forward}, \texttt{run backward} and \texttt{jump}. The reward functions of the three tasks are $r(s, a) = v_x - 0.001*\|a\|_2^2$, $r(s, a) = -v_x - 0.001*\|a\|_2^2$ and $r(s, a) = - \|v_x\| - 0.001*\|a\|_2^2 + 10*(z - \text{init z})$ respectively where $v_x$ denotes the velocity along the x-axis and $z$ denotes the z-position of the half-cheetah and $\text{init z}$ denotes the initial z-position. In UDS and CDS+UDS, we relabel the rewards routed from other tasks with the minimum reward value in the offline dataset of the target task. As shown in Table~\ref{tbl:walker}, CDS+UDS and UDS outperform No Sharing and Reward Predictor by a large margin while also performing comparably to CDS and Sharing All. Therefore, CDS+UDS and UDS are able to solve multi-task locomotion tasks with dense rewards.}

\begin{table*}[t!]
\small{
\centering
\vspace*{0.1cm}
\resizebox{\textwidth}{!}{\begin{tabular}{ll|rrrr|rr}
\toprule
\textbf{Environment} &\textbf{Tasks / Dataset type} & \textbf{CDS+UDS}& \textbf{UDS} & \textbf{No Sharing} & \textbf{Reward Predictor} & \textbf{CDS (oracle)} & \textbf{Sharing All (oracle)}\\ \midrule
& run forward / medium-replay & \textbf{880.1}$\pm$108.8 & 665.0$\pm$84.9  & 590.1$\pm$48.6 & 520.7$\pm$373.6 & 1057.9$\pm$121.6 & 701.4$\pm$47.0\\
Multi-Task Walker & run backward / medium & \textbf{717.8}$\pm$78.3 & 689.3$\pm$16.3 & 614.7$\pm$87.3 & 417.3$\pm$235.3 & 564.8$\pm$47.7 & 756.7$\pm$76.7\\
& jump / expert & 1487.7$\pm$177.6 & 1036.0$\pm$247.1  & \textbf{1575.2}$\pm$70.9 & 583.0$\pm$432.0 & 1418.2$\pm$138.4 & 885.1$\pm$152.9\\
& \CC \textbf{average} & \CC \textbf{1028.6}$\pm$76.8 &  \CC 796.7$\pm$106.3  & \CC 926.6$\pm$37.7 & \CC 506.7 $\pm$ 343.6 & \CC 1013.6$\pm$71.5 & \CC 781.0$\pm$100.8\\
\bottomrule
\end{tabular}}
\vspace{-0.3cm}
\caption{\footnotesize Results for multi-task walker experiment with dense rewards. We only bold the best-performing method that does not have access to the true reward during relabeling. CDS+UDS and UDS are able to outperform No Sharing and Reward Predictor while attaining competitive results compared to CDS and Sharing All with oracle rewards.
}
\label{tbl:walker}
}
\end{table*}

\section{\rebuttal{Comparisons of CDS+UDS and UDS to Multi-Task Model-Based Offline RL Approaches}}
\label{app:mbrl}

\rebuttal{In this section, we compare CDS+UDS and UDS to a recent, state-of-the-art model-based offline RL method COMBO~\citep{yu2021combo} in the Meta-World domain. We adapt COMBO to the multi-task offline setting by learning the dynamics model on data of all tasks combined and and performing vanilla multi-task offline training without data sharing using the model learned with all of the data. As shown in Table~\ref{tbl:combo}, CDS+UDS and UDS indeed outperform COMBO in the average task success rate. The intuition behind this is that COMBO is unable to learn an accurate dynamics model for tasks with limited data as in our Meta-World setting.}

\begin{table*}[t!]
\centering
\resizebox{0.8\textwidth}{!}{\begin{tabular}{l|l|r|r|r}
\toprule
\textbf{Environment} & \textbf{Tasks} & \textbf{CDS+UDS} & \textbf{UDS} & COMBO~\citep{yu2021combo}\\
\midrule
& door open & \textbf{61.3\%}$\pm$7.9\% & 51.9\%$\pm$25.3\% & 0.0\%$\pm$0.0\%\\
& door close & \textbf{54.0\%} $\pm$42.5\% & 12.3\%$\pm$27.6\% & 1.1\%$\pm$1.6\%\\
Meta-World& drawer open & \textbf{73.5\%}$\pm$9.6\% & 61.8\%$\pm$16.3\% & 15.7\%$\pm$15.2\%\\
& drawer close & 99.3\%$\pm$0.7\% & \textbf{99.6\%}$\pm$0.7\% & 85.7\%$\pm$13.3\%\\
& \CC \textbf{average} & \CC \textbf{71.2\%} $\pm$ 11.3\% & \CC 56.4\%$\pm$12.8\% & \CC 25.6\%$\pm$6.2\%\\
\bottomrule
\end{tabular}}
\caption{\footnotesize On the multi-task Meta-World domain, we compare CDS+UDS and UDS to the model-based offline RL method COMBO~\citep{yu2021combo} that trains a dynamics model on all of the data and performs model-based offline training using the learned model. CDS+UDS and UDS are able to outperform COMBO by a large margin.}
\label{tbl:combo}
\normalsize
\end{table*}

\section{Comparisons to Pre-Trained Representation Learning on Unlabeled Data}
\label{app:pretrained_reps}
A popular strategy to utilize large amounts of unlabeled data along with some limited amount of labeled data in supervised learning is to utilize the unlabeled data for learning representations, and then running supervised training on the labeled data. A similar strategy has been utilized for RL and imitation learning in some prior work~\citep{yang2021representation,yang2021trail}. On the other hand, the UDS and CDS+UDS strategies take a different route of handling unlabeled data, and it is natural to wonder how these representation learning approaches compare to our approach, UDS, that runs offline RL with the lowest possible reward, with an optional reweighting scheme.

To investigate this, we run experiments comparing our UDS and CDS+UDS approaches to state-of-the-art representation learning approaches for utilizing unlabeled data. First, we compare UDS and CDS+UDS to the ACL~\citep{yang2021representation} approach on the multi-task Meta-World domain. ACL pretrains a representation using a contrastive loss on both the labeled and unlabeled offline datasets and then runs standard multi-task offline RL using the pretrained representation with \textbf{No Sharing}. To handle the multi-task Meta-World domain, we use the version of ACL without inputting reward labels. ACL can be viewed as an alternative to our unlabeled sharing data scheme, which leverages unlabeled data for representation learning rather than sharing it directly. We show the comparison to ACL in Table~\ref{tbl:acl}. UDS and CDS+UDS outperform ACL in the average task performance while ACL is only proficient on drawer-open and drawer-close, and it cannot solve door-open or door-close. This indicates that sharing the unlabeled data, even when labeled with the minimum possible reward is important for offline RL performance, while pretraining representations on the whole multi-task offline dataset might have limited benefit. Also note that, in principle, UDS / CDS+UDS are complementary to ACL and these approaches can be combined together to further improve performance, which we leave as future work.

\begin{table*}[t]
\centering
\begin{tabular}{l|l|rrr}
\toprule
\textbf{Environment} & \textbf{Tasks} & \textbf{CDS + UDS (ours)} & \textbf{UDS (ours)} & \textbf{ACL}\\ \midrule
& door open & \textbf{61.3\%}$\pm$7.9\% & 51.9\%$\pm$25.3\% & 2.8\%$\pm$2.0\%\\
& door close & 54.0\% $\pm$42.5\% & 12.3\%$\pm$27.6\% & 0.0\%$\pm$0.0\%\\
Meta-World& drawer open & 73.5\%$\pm$9.6\% & 61.8\%$\pm$16.3\% & \textbf{83.2\%}$\pm$14.2\%\\
& drawer close & 99.3\%$\pm$0.7\% & 99.6\%$\pm$0.7\% & \textbf{100.0\%}$\pm$0.0\%\\
& \CC \textbf{average} & \CC \textbf{71.2\%} $\pm$ 11.3\% & \CC 56.4\%$\pm$12.8\% & \CC 46.4\%$\pm$3.5\%\\
\bottomrule
\end{tabular}
\vspace{-0.2cm}
\caption{\footnotesize Comparison between \textbf{UDS} / \textbf{CDS+UDS} and the \textbf{ACL}~\citep{yang2021representation} that performs representation learning on the unlabeled data instead of data sharing. Both \textbf{UDS} and \textbf{CDS+UDS} outperform \textbf{ACL} by a significant margin in the average task result, suggesting that sharing the unlabeled data is crucial in improving the performance of multi-task offline RL with unlabeled data compared to only using the data for learning the representation.}
\label{tbl:acl}
\vspace{-0.5cm}
\normalsize
\end{table*}

We also compare the results of UDS and CDS + UDS in the single-task AntMaze medium-play and large-play domains (shown in  Table~\ref{tbl:single_task} in the main paper) to imitation learning methods, TRAIL~\citep{yang2021trail} and other baselines from \citet{yang2021trail}, that also utilize unlabeled data. These methods train a representation using the unlabeled dataset, and then run behavioral cloning to imitate the expert trajectories, which are limited in number. For example, the TRAIL method learns a state-conditioned action representation, by fitting an energy-based model to the transition dynamics of the unlabeled dataset, and then performs downstream imitation learning using the labeled expert dataset. In contrast to these prior approaches, UDS and CDS+UDS do not make any assumptions about how expert the labeled dataset is, and so they are not technically comparable to these imitation learning methods directly. Moreover, any specialized representation learning objective can also be combined with UDS and CDS+UDS to further improve performance. Nevertheless, we hope that a direct comparison to representation learning on unlabeled data combined with downstream imitation will provide informative insights about the potential of UDS and CDS+UDS to effectively leverage unlabeled data. Comparing Table~\ref{tbl:single_task} to Figure 3 in \citet{yang2021trail}, we find that CDS+UDS outperforms the best method from \citet{yang2021trail}, TRAIL, on the antmaze-medium-play task and performs a bit worse on the antmaze-large-play task. CDS+UDS also outperforms all the other methods in Figure 3 of \cite{yang2021trail}. Overall, this implies that UDS and CDS+UDS methods can perform comparably to state-of-the-art representation learning approaches with downstream imitation, without needing any specialized representation learning.

\section{Details of UDS and CDS+UDS}
\label{app:details}

In this section, we include the details of training UDS and CDS+UDS in  Appendix~\ref{app:training_details} as well as details on the environment and datasets used in our experiments in Appendix~\ref{app:env_data_details}. Finally, we discuss the compute information of UDS and CDS+UDS in Appendix~\ref{app:compute_details}.

\subsection{Details on the training procedure}
\label{app:training_details}
We first present our practical implementation of UDS optimizes the following objectives for the Q-functions and the policy in the \emph{single-task offline RL setting}:
\vspace*{-5pt}
\begin{small}
\begin{align*}
    \hat{Q}^{k+1} \leftarrow& \arg\min_{\hat{Q}} \beta\left(\mathbb{E}_{\bs \sim \mathcal{D}_\text{L} \cup \mathcal{D}_\text{U}, \ba \sim \mu(\cdot|\bs)}\left[\hat{Q}(\bs,\ba)\right]- \mathbb{E}_{\bs, \ba \sim \mathcal{D}_\text{L} \cup \mathcal{D}_\text{U}}\left[\hat{Q}(\bs,\ba)\right]\right)\\
    & + \frac{1}{2}\mathbb{E}_{(\bs, \ba, \bs') \sim \mathcal{D}_\text{L} \cup \mathcal{D}_\text{U}}\left[ \left(\hat{Q}(\bs, \ba) - \left(r(\bs, \ba) + \gamma Q(\bs', \ba')\right)\right)^2 \right],
\end{align*}
\end{small}
and
\[
\policy \leftarrow \arg \max_{\policy'} \ \mathbb{E}_{\bs \sim \mathcal{D}_\text{L} \cup \mathcal{D}_\text{U}, \ba \sim \policy'(\cdot | \bs)} \left[\hat{Q}^\policy (\bs, \ba) \right],
\]

Moreover, CDS+UDS optimizes the following objectives for training the critic and the policy with a soft weight:
\vspace*{-5pt}
\begin{small}
\begin{align*}
    \hat{Q}^{k+1} \leftarrow& \arg\min_{\hat{Q}} \beta\left(\mathbb{E}_{\bs \sim \mathcal{D}_\text{L} \cup \mathcal{D}_\text{U}, \ba \sim \mu(\cdot|\bs)}\left[w_{\mathrm{CDS}}(\bs, \ba; \text{U} \rightarrow \text{L})\hat{Q}(\bs,\ba)\right]- \mathbb{E}_{\bs, \ba \sim \mathcal{D}_\text{L} \cup \mathcal{D}_\text{U}}\left[w_{\mathrm{CDS}}(\bs, \ba; \text{U} \rightarrow \text{L})\hat{Q}(\bs,\ba)\right]\right)\\
    & + \frac{1}{2}\mathbb{E}_{(\bs, \ba, \bs') \sim \mathcal{D}_\text{L} \cup \mathcal{D}_\text{U}}\left[ w_{\mathrm{CDS}}(\bs, \ba; \text{U} \rightarrow \text{L})\left(\hat{Q}(\bs, \ba) - \left(r(\bs, \ba) + \gamma Q(\bs', \ba')\right)\right)^2 \right],
\end{align*}
\end{small}
and
\[
\policy \leftarrow \arg \max_{\policy'} \ \mathbb{E}_{\bs \sim \mathcal{D}_\text{L} \cup \mathcal{D}_\text{U}, \ba \sim \policy'(\cdot | \bs)} \left[w_{\mathrm{CDS}}(\bs, \ba; \text{U} \rightarrow \text{L})\hat{Q}^\policy (\bs, \ba) \right],
\]
where $\beta$ is the coefficient of the CQL penalty on distribution shift, $\mu$ is an action sampling distribution that covers the action bound as in CQL. On the hopper domain, when the unlabeled data is random, we use the version of CQL that does not maximize the term $\mathbb{E}_{\bs, \ba \sim \mathcal{D}_\text{L} \cup \mathcal{D}_\text{U}}\left[\hat{Q}(\bs,\ba)\right]$ to prevent overestimating Q-values on low-quality random data and use $\beta = 1.0$. We use $\beta = 5.0$ in the other settings in the hopper domain. On the AntMaze domain, following prior works~\citep{kumar2020conservative,yu2021conservative}, we use the Lagrange version of CQL, where the coefficient $\beta$ is
automatically tuned against a pre-specified constraint value on the CQL loss equal to $\tau = 10.0$. We adopt other hyperparameters used in \citep{yu2021conservative}. We adapt the CDS weight to the single-task setting as follows: $w_{\mathrm{CDS}}(\bs, \ba; \text{U} \rightarrow \text{L}) := \sigma \left(\frac{\Delta(\bs, \ba; \text{U} \rightarrow \text{L})}{\tau} \right)$ where $\Delta(\bs, \ba; \text{U} \rightarrow \text{L}) = \hat{Q}^\pi(\bs, \ba) - P_{k\%}\!\left\{\!\hat{Q}^\pi(\bs', \ba')\!\!: \bs', \ba' \sim \mathcal{D}_\text{L}\!\right\}$ for $(\bs, \ba) \sim \mathcal{D}_\text{U}$. We use $k = 50$ in all single-task domains.

Similarly in the \emph{multi-task offline RL} setting, our practical implementation of UDS optimizes the following objectives for the Q-functions and the policy:
\vspace*{-5pt}
\begin{small}
\begin{align*}
    \hat{Q}^{k+1} \leftarrow& \arg\min_{\hat{Q}} \mathbb{E}_{i\sim[N]}\left[\beta\left(\mathbb{E}_{j \sim[N]}\left[\mathbb{E}_{\bs \sim \mathcal{D}_j, \ba \sim \mu(\cdot|\bs,i)}\left[\hat{Q}(\bs,\ba,i)\right]\right.\right.\right.\\
    &\left.\left.\left.- \mathbb{E}_{\bs, \ba \sim \mathcal{D}_j}\left[\hat{Q}(\bs,\ba, i)\right]\right]\right)\right.\\
    &\left. + \frac{1}{2}\mathbb{E}_{j\sim[N],(\bs, \ba, \bs') \sim \mathcal{D}_j}\left[ \left(\hat{Q}(\bs, \ba, i) - \left(r(\bs, \ba, i)\indicator_{\{j = i\}} + \gamma Q(\bs', \ba')\right)\right)^2 \right]\right],
\end{align*}
\end{small}
\vspace*{-19pt}
and
\[
\policy \leftarrow \arg \max_{\policy'} \ \mathbb{E}_{i \sim [N]}\left[\mathbb{E}_{j\sim[N],\bs \sim \mathcal{D}_j, \ba \sim \policy'(\cdot | \bs, i)} \left[\hat{Q}^\policy (\bs, \ba, i) \right]\right],
\]

We also present the objective of CDS+UDS in the multi-task task as follows:
\vspace*{-5pt}
\begin{small}
\begin{align*}
    \hat{Q}^{k+1} \leftarrow& \arg\min_{\hat{Q}} \mathbb{E}_{i\sim[N]}\left[\beta\left(\mathbb{E}_{j \sim[N]}\left[\mathbb{E}_{\bs \sim \mathcal{D}_j, \ba \sim \mu(\cdot|\bs,i)}\left[w_{\mathrm{CDS}}(\bs, \ba; j \rightarrow i)\hat{Q}(\bs,\ba,i)\right]\right.\right.\right.\\
    &\left.\left.\left.- \mathbb{E}_{\bs, \ba \sim \mathcal{D}_j}\left[w_{\mathrm{CDS}}(\bs, \ba; j \rightarrow i)\hat{Q}(\bs,\ba, i)\right]\right]\right)\right.\\
    &\left. + \frac{1}{2}\mathbb{E}_{j\sim[N],(\bs, \ba, \bs') \sim \mathcal{D}_j}\left[w_{\mathrm{CDS}}(\bs, \ba; j \rightarrow i) \left(\hat{Q}(\bs, \ba, i) - \left(r(\bs, \ba, i)\indicator_{\{j = i\}} + \gamma Q(\bs', \ba')\right)\right)^2 \right]\right],
\end{align*}
\end{small}
\vspace*{-19pt}
and
\[
\policy \leftarrow \arg \max_{\policy'} \ \mathbb{E}_{i \sim [N]}\left[\mathbb{E}_{j\sim[N],\bs \sim \mathcal{D}_j, \ba \sim \policy'(\cdot | \bs, i)} \left[ w_{\mathrm{\methodname}}(\bs, \ba; j \rightarrow i)\hat{Q}^\policy (\bs, \ba, i) \right]\right],
\]
where we use $k = 90$ in the multi-task Meta-World domain and $k = 50$ in the other multi-task domains.

To compute the weight $w_{\mathrm{CDS}}$, we pick $\tau$, i.e. the temperature term, using the exponential running average of $\Delta(\bs, \ba; \text{U} \rightarrow \text{L})$ in the single-task setting or $\Delta(\bs, \ba; j \rightarrow i)$ in the multi-task setting with decay $0.995$ following \cite{yu2021conservative}. Following \cite{yu2021conservative} again, we clip the automatically chosen $\tau$ with a minimum and maximum threshold, which we directly use the values from \cite{yu2021conservative}. We use $[1, \infty]$ as the minimum and maximum threshold for all state-based single-task and multi-task domains whereas the vision-based robotic manipulation domain does not require such clipping.

In the single-task experiments, we use a total batch size of $256$ and balance the number of transitions sampled from $\mathcal{D}_\text{L}$ and $\mathcal{D}_\text{U}$ in each batch. In the multi-task experiments, following the training protocol in \cite{yu2021conservative}, for experiments with low-dimensional inputs, we use a stratified batch with $128$ transitions for each task to train the Q-functions and the policy. We also balance the numbers of transitions sampled from the original task and the number of transitions drawn from other task data. Specifically, for each task $i$, we sample $64$ transitions from $\mathcal{D}_i$ and the remaining $64$ transitions from $\cup_{j \neq i} \mathcal{D}_{j \rightarrow i}$. In CDS+UDS, for each task $i \in [N]$, we only apply $w_\mathrm{CDS+UDS}$ to data shared from other tasks on multi-task Meta-World environments and multi-task vision-based robotic manipulation tasks while we also apply the relabeling weight to transitions sampled from the original task dataset $\mathcal{D}_i$ with 50\% probability in the multi-task AntMaze domain.

Regarding the choices of the architectures, for state-based domains, we use 3-layer feedforward neural networks with $256$ hidden units for both the Q-networks and the policy. In the multi-task domains, we condition the policy and the Q-functions on a one-hot task ID, which is appended to the input state. In domains with high-dimensional image inputs, we adopt the multi-headed convolutional neural networks used in ~\cite{kalashnikov2021mt,yu2021conservative}. We use images with dimension $472 \times 472 \times 3$, extra state features $(g_\text{robot\_status}, g_\text{height})$ and the one-hot task vector as the observations similar \cite{kalashnikov2021mt,yu2021conservative}. Following the set-up in \cite{kalashnikov2018scalable,kalashnikov2021mt,yu2021conservative}, we use Cartesian
space control of the end-effector of the robot in 4D space (3D position and azimuth angle) along with two binary actions to
open/close the gripper and terminate the episode respectively to represent the actions. For more details, see \cite{kalashnikov2018scalable,kalashnikov2021mt}.

\subsection{Details on the environment and the datasets}
\label{app:env_data_details}

In this subsection, we include the discussion of the details the environment and datasets used for evaluating UDS and CDS+UDS. Note that all of our single-task datasets and environments are from the standard benchmark D4RL~\citep{fu2020d4rl} while all of our multi-task environment and offline datasets are from prior work~\citep{yu2021conservative}. We will nonetheless discuss the details to make our work self-contained.  We acknowledge that all datasets with low-dimensional inputs are under the MIT License.

\textbf{Single-task hopper domains.} We use the \texttt{hopper} environment and datasets from D4RL~\citep{fu2020d4rl}. We consider the following three datasets, \texttt{hopper-random}, \texttt{hopper-medium} and \texttt{hopper-expert}. We construct the seven combinations of different data compositions using the three datasets as discussed in Table~\ref{tbl:single_task_analysis}. To construct the combination, we take the first 10k transitions from labeled dataset and concatenate these labeled transitions with the entire unlabeled dataset with 1M transitions.

\textbf{Single-task AntMaze domains.} We use the \texttt{AntMaze} environment and datasets from D4RL~\citep{fu2020d4rl} where we consider two datasets, \texttt{antmaze-medium-play} and \texttt{antmaze-large-play}. These two datasets only contain 1M sub-optimal transitions that navigates to random or fixed locations that are different from the target task during evaluation. We use these two datasets as unlabeled datasets. For the labeled dataset, we use 10 expert demonstrations of solving the target task used in prior work~\citep{yang2021trail}.

\textbf{Multi-task Meta-World domains.} We use the \texttt{door open}, \texttt{door close}, \texttt{drawer open} and \texttt{drawer close} environments introduced in \cite{yu2021conservative} from the public Meta-World~\citep{yu2020metaworld} repo\footnote{The Meta-World environment can be found at the open-sourced repo \url{https://github.com/rlworkgroup/metaworld}}. In this multi-task Meta-World environment, a door and a drawer are put on the same scene, which ensures that all four tasks share the same state space. The environment uses binary rewards for each task, which are adapted from the success condition defined in the Meta-World public repo. In this case, the robot gets a reward of 1 if it solves the target task and 0 otherwise. We use a fixed $200$ timesteps for each episode and do not terminate the episode when receiving a reward of $1$ at an intermediate timestep. We use large datasets with wide coverage of the state space and 152K transitions for the \texttt{door open} and \texttt{drawer close} tasks and datasets with limited (2K transitions), but optimal demonstrations for the \texttt{door close} and \texttt{drawer open} tasks.

We direct use the offline datasets constructed in \cite{yu2021conservative}, which are generated by training online SAC policies for each task with the dense reward defined in the Meta-World repo for 500 epochs. The medium-replay datasets use the whole replay buffer of the online SAC agent until 150 epochs while the expert datasets are collected by the final online SAC policy.

\textbf{Multi-task AntMaze domains.} Following \cite{yu2021conservative}, we use the \texttt{antmaze-medium-play} and \texttt{antmaze-large-play} datasets from D4RL~\citep{fu2020d4rl} and partitioning the datasets into multi-task datasets in an undirected way defined in \cite{yu2021conservative}. Specifically, the dataset is randomly splitted into chunks with equal size, and then each chunk is assigned to a randomly chosen task. Therefore, under such a task construction scheme, the task data for each task is of low success rate for the particular task it is assigned to and it is imperative for the multi-task offline RL algorithm to leverage effective data sharing strategy to achieve good performance. In AntMaze, we also use a binary reward, which provides the agent a reward of +1 when the ant reaches a position within a 0.5 radius of the task goal, which is also the reward used default by \citet{fu2020d4rl}. The terminal of an episode is set to be true when a reward of +1 is observed. We terminate the episode upon seeing a reward of $1$ with the maximum possible $1000$ transitions per episode. Following \cite{yu2021conservative}, we modify the datasets introduced by \citet{fu2020d4rl} by equally dividing the large dataset into different parts for different tasks, where each task corresponds to a different goal position.

\textbf{Multi-task walker domains.} We have presented the details of the multi-task walker environment in Appendix~\ref{app:dense_reward}.

\textbf{Multi-task image-based robotic picking and placing domains.} Following \cite{kalashnikov2021mt,yu2021conservative}, we use sparse rewards for each task. That is, reward 1 is assigned to episodes that meet the success conditions and 0 otherwise. The success conditions are defined in \citep{kalashnikov2021mt}. 
We directly use the dataset used in \cite{yu2021conservative}. Such a dataset is collected by first training a policy for each individual task using QT-Opt~\citep{kalashnikov2018scalable} until the success rate reaches 40\% and 80\% for picking tasks and placing tasks respectively and then combine the replay buffers of all tasks as the multi-task offline dataset. Note that the success rate of placing is higher because the robot is already holding the object at the start of the placing tasks, making the placing easier to solve. The dataset consists of a total number of 100K episodes with 25 transitions for each episode. 

\subsection{Computation Complexity}
\label{app:compute_details}

We train UDS and CDS+UDS on a single NVIDIA GeForce RTX 2080 Ti for one day on the state-based domains. For the vision-based robotic picking and placing experiments, it takes 3 days to train it on 16 TPUs.

\end{document}